\DeclareMathAlphabet{\mathpzc}{T1}{pzc}{m}{it}
\begin{document}
%%%%%%for revise
% \pagewiselinenumbers
% \switchlinenumbers
% \sethlcolor{yellow}

%
% paper title
% Titles are generally capitalized except for words such as a, an, and, as,
% at, but, by, for, in, nor, of, on, or, the, to and up, which are usually
% not capitalized unless they are the first or last word of the title.
% Linebreaks \\ can be used within to get better formatting as desired.
% Do not put math or special symbols in the title.
\title{Model-Free 3D Shape Control of Deformable Objects Using Novel Features Based on Modal Analysis
% 3D Shape Servoing of Deformable Objects Based on Modal Analysis 
}
%
%
% author names and IEEE memberships
% note positions of commas and nonbreaking spaces ( ~ ) LaTeX will not break
% a structure at a ~ so this keeps an author's name from being broken across
% two lines.
% use \thanks{} to gain access to the first footnote area
% a separate \thanks must be used for each paragraph as LaTeX2e's \thanks
% was not built to handle multiple paragraphs
%

\author{Bohan Yang, 
Bo Lu, 
Wei Chen, Fangxun Zhong, 
and Yun-Hui Liu,~\IEEEmembership{Fellow,~IEEE}% <-this % stops a space

\thanks{This work is supported in part of the HK RGC under T42-409/18-R and 14202918, in part by the Shenzhen-HK Collaborative Development Zone, and in part by the VC Fund 4930745 of T Stone Robotics Institute. \textit{(Corresponding author: Yun-Hui Liu.)}}%}%
\thanks{Yun-Hui Liu, Bohan Yang, Wei Chen, and Fangxun Zhong are with the T Stone Robotics Institute, the Department of Mechanical and Automation Engineering, The Chinese University of Hong Kong, HKSAR, China (e-mail: yhliu@cuhk.edu.hk, bhyang@mae.cuhk.edu.hk, weichen@link.cuhk.edu.hk, fxzhong@cuhk.edu.hk).

Bo Lu was with The Chinese University of Hong Kong, HKSAR, China. He is now with the Robotics and Microsystems Center, School of Mechanical and Electric Engineering, Soochow University, Suzhou 215021, China (e-mail:blu@suda.edu.cn).}

\thanks{The paper will appear in the IEEE Transactions on Robotics.
Copyright notice:   "© 20xx IEEE.  Personal use of this material is permitted.  Permission from IEEE must be obtained for all other uses, in any current or future media, including reprinting/republishing this material for advertising or promotional purposes, creating new collective works, for resale or redistribution to servers or lists, or reuse of any copyrighted component of this work in other works." }

}

\maketitle

% As a general rule, do not put math, special symbols or citations
% in the abstract or keywords.
\begin{abstract}
Shape control of deformable objects is a challenging and important robotic problem.
This paper proposes a model-free controller using novel 3D global deformation features based on modal analysis.
Unlike most existing controllers using geometric features, our controller employs physically based deformation features designed by decoupling global deformation into low-frequency modes.
Although modal analysis is widely adopted in computer vision and simulation,
its usage in robotic 
deformation control is still an open topic.
We develop a new model-free framework for the modal-based deformation control.
Physical interpretation of the modes enables
us to formulate an analytical
deformation Jacobian matrix mapping the robot manipulation onto changes of the modal features.
In the Jacobian matrix, unknown geometric and physical models of the object are treated as low-dimensional modal parameters which can be used to linearly parameterize the closed-loop system.
Thus, an adaptive controller with proven stability can be designed to deform the object while online estimating the modal parameters.
Simulations and experiments are conducted using linear, planar, and volumetric objects under different settings. 
The results not only confirm the superior performance of our controller but also demonstrate its advantages over the baseline method.
\end{abstract}

% Note that keywords are not normally used for peerreview papers.
\begin{IEEEkeywords}
Deformable Object Manipulation, Visual Servoing, Adaptive Control, Modal Analysis 
\end{IEEEkeywords}

% For peer review papers, you can put extra information on the cover
% page as needed:
% \ifCLASSOPTIONpeerreview
% \begin{center} \bfseries EDICS Category: 3-BBND \end{center}
% \fi
%
% For peerreview papers, this IEEEtran command inserts a page break and
% creates the second title. It will be ignored for other modes.
%\IEEEpeerreviewmaketitle

\section{Introduction}

\IEEEPARstart{I}{ntroducing} autonomy of 
deformable object manipulation (DoM) can promote significant developments of robots in sectors of medical robotics~\cite{shin2019autonomous},
public service~\cite{lee2015learning},  
industrial manufacturing~\cite{li2018vision}, etc.
However, unlike rigid object manipulation, DoM raises many difficult problems due to their complex deformation with varying physical properties and infinite dimensions.

During the past two decades, the field of
DoM
has gained distinctive progress.
Early works mainly focused on simple point-based positioning~\cite{wada2001robust}, shape control with only simulation analysis~\cite{das2011autonomous}, and specific task-oriented manipulation~\cite{tokumoto2002deformation}.
Recently, significant efforts have been devoted to more complex tasks
where researchers developed algorithms using advanced control strategies~\cite{navarro2016automatic}~\cite{hu20193},
deformation representation methods with better description abilities~\cite{navarro2018fourier}~\cite{qi2021contour},
and frameworks with multi-sensory feedback~\cite{ficuciello2018fem}~\cite{abayazid2015ultrasound}.
The latest survey papers~\cite{sanchez2018robotic}~\cite{arriola2020modeling}~\cite{nadon2018multi} reviewed the modeling, sensing, planning, and control strategies of robotic 
DoM.
Among the reviewed research topics, 
automatic (local) deformation control is one of the most fundamental issues.
Despite the efforts of the latest advances, 
real-time control of 3D shape deformation without accurate models is still an open problem.
Targeting this problem, we develop a model-free method for 3D shape control of deformable objects using stereo vision feedback.

\subsection{Related Work}
\subsubsection{Deformation Control}
Existing deformation control strategies can be classified into three categories: model-based methods~\cite{das2011autonomous}~\cite{ficuciello2018fem}, 
Jacobian-based methods~\cite{berenson2013manipulation}~\cite{navarro2013model},  
and learning-based methods~\cite{hu20193}.
Due to difficulties in obtaining accurate object models, increasing attentions have been put on the Jacobian- and learning-based model-free deformation controllers.
For model-free methods using deformation Jacobian approximations,
Berenson et al.~\cite{berenson2013manipulation} proposed the concept of diminishing rigidity to approximate deformation Jacobian matrices for rope- and cloth-like objects.
Shetab-Bushehri et al.~\cite{shetab2022rigid} adopted the As-Rigid-As-Possible model for the Jacobian approximation of planar objects.
However, these works need to know the geometry of the object-gripper configuration, which adds extra requirements for object sensing.
Online estimation of the deformation Jacobian matrix is another popular model-free method.
Navarro-Alarcon et al.~\cite{navarro2013model} developed a series of works for geometric features of 3D control points~\cite{navarro2016automatic},
2D curve descriptors using spline-based methods~\cite{qi2020adaptive},
as well as 2D contour descriptors using Fourier-based~\cite{navarro2018fourier} and moment-based~\cite{qi2021contour} methods.
Lagneau et al.~\cite{lagneau2020active} proposed an active deformation controller for marker-based and marker-less 3D point features using online Jacobian estimation with a sliding window.
Nevertheless, as these geometric deformation features become more complex or global, the deformation Jacobian matrix becomes more difficult to formulate and estimate.
These works need to re-calibrate local Jacobian matrices at different regions,
or offline testing deformation is required for pre-estimation of the Jacobian matrix.
Moreover, the influence of the online Jacobian estimation on their control performances is hard to be analyzed and proved mathematically.

Learning-based methods serve as powerful tools for model-free deformation control because of their potential to generate complex feature-manipulation relationships using structures of Neural Networks~\cite{cherubini2020model}~\cite{shin2019autonomous}~\cite{li2018vision}, Gaussian Process Regression~\cite{hu2018three}, and Deep Neural Networks~\cite{hu20193}.
Early works of data-driven methods mainly studied deformation models~\cite{cretu2011soft} or manipulation strategies~\cite{lee2015learning} without high accuracy requirements.
For better accuracy, researchers have combined learning-based techniques with traditional control methods.
Shin et al.~\cite{shin2019autonomous} proposed two learning-based model predictive control algorithms to manipulate points on tissues.
Hu et al.~\cite{hu2018three}~\cite{hu20193} designed online learning methods within visual servoing frameworks to control 3D deformation described by different geometric and point-cloud-based features.
However, more complex deformation representations lead to higher requirements for data collecting and training.
Although efforts have been put into different offline and online mechanisms, control performances of these learning-based methods highly rely on the learning architectures and parameters, which is hard to explain at both the mathematical and control levels.
Combining the advantages of neural network structures and stable adaptive control laws,
Yu et al.~\cite{yu2022global} and Li et al.~\cite{li2018vision} proposed deformation controllers for linear and planar objects.
They employed adaptive neural networks to approximate the deformation Jacobian matrix with guaranteed control stability.
Even though it is a promising technique, how to extend the method to more complex 3D
objects and shape control is still an issue to be further addressed.

\subsubsection{Deformation Representation}
Another crucial problem for deformation control is 3D shape representation.
Deformable shape modeling has been extensively studied in the fields of computer vision, computer graphics, and medical imaging.
Review papers~\cite{arriola2020modeling}~\cite{montagnat2001review} classified geometric methods into discrete representations (such as meshes~\cite{madi2019new}, 
piece-wise patches~\cite{fayad2010piecewise}, 
and point clouds~\cite{newcombe2010live}), 
implicit curves or surfaces~\cite{gascuel1993implicit},
explicit parameterized representations (such as the spline-~\cite{prasad2010finding} and Fourier-~\cite{kelemen1996segmentation} based decompositions), and the free-form deformation~\cite{sederberg1986free}.
Nevertheless, the relationship between geometric descriptors and robot manipulation is hard to formulate.
Researchers have explored the potential to embed physical models into the deformation representation under robotic scenarios.
Fugl et al.~\cite{fugl2012simultaneous} formulated deformable curves as physical-model-based functions of material parameters, mesh geometry, and gripper pose.
Other researchers~\cite{ficuciello2018fem}~\cite{zaidi2017model} represented deformation by relating sensor measurements with physical mesh models.
However, directly using mesh models in the robotic manipulation strategy design presents many technical challenges due to the high-dimensional nature of common modeling techniques and the difficulties to obtain accurate models.
For more compact shape representations, researchers also investigated a class of physically based methods using parameterized deformation.
These methods~\cite{barr1987global}~\cite{pentland1987perceptual} first treated solid shapes as deformed results from some reference shapes, and then parameterized their deformation with low dimensions.
To formulate the parametrized deformation, finding the reduced basis for subspace deformation~\cite{barbivc2005real}~\cite{an2008optimizing} is a hard key issue.
Metaxas et al.~\cite{metaxas1992dynamic} formulated the basis using deformable superquadrics but need to know the deformation form in advance.
Krysl et al.~\cite{krysl2001dimensional} adopted the reduced basis using principal component analysis (PCA) but required an example data set.
Modal analysis is also a popular technique to generate the reduced basis for linear~\cite{pentland1989good}~\cite{baraff1992dynamic} and nonlinear~\cite{barbivc2005real}~\cite{choi2005modal} deformation.
It is a standard tool for deformation reduction~\cite{pentland1989good}~\cite{choi2005modal} and 3D shape recovery~\cite{pentland1991closed}~\cite{agudo2014good}.
However, 
robotic deformation control using modal analysis is still an open problem.

Under the same representation-control framework in Fig.~\ref{fig:contribution_SoA}, we compare the reviewed state-of-the-art (SOTA) deformation controllers without accurate models.
Existing model-free controllers use geometric features that directly describe the object geometry.
Nevertheless, 
finding the analytical mapping relationship between the geometric features and robot manipulation is difficult because object deformation behaviors are highly coupled systems of high-dimensional object geometry and complex material properties.
With unknown object models, they had to design the black-box control~\cite{ljung2010perspectives} laws with totally unknown feature-manipulation mapping.
On the contrary, this paper studies physically based representations because the physical laws employed in the object description also provide a physical reference to formulate the grey-box control~\cite{ljung2010perspectives} laws with partially unknown feature-manipulation mapping.
In addition, unlike the model-based controller~\cite{ficuciello2018fem} using high-dimensional mesh models, our model-free controller adopts the low-dimensional modal analysis.

\begin{figure}[t]
    \centering
    \includegraphics[width=\linewidth]{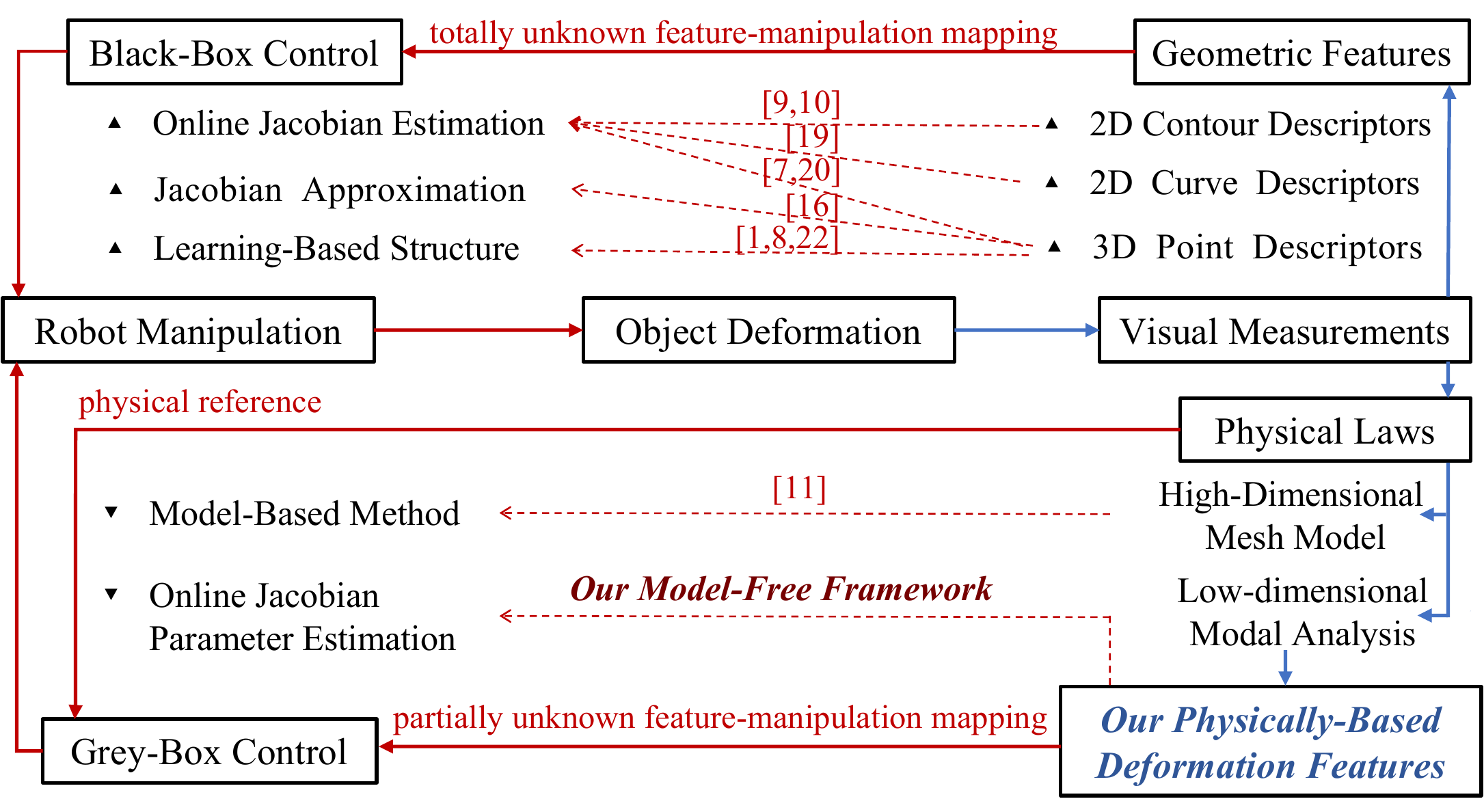}
    \vspace{-0.7cm}
    \caption{Comparisons with the SOTA deformation controllers without accurate models.
    The blue lines are for deformation representations while the red lines are for deformation control.
    }
    \label{fig:contribution_SoA}
    \vspace{-0.1cm}
\end{figure}

\subsection{Our Contributions}
This paper proposes a model-free 3D deformation controller using modal-based global deformation features.
The advantages of designing deformation features with modal analysis are:
common deformable shapes can be described 
with only a small number of low-frequency deformation modes~\cite{pentland1991recovery};
physical interpretation of the modes enables an analytical formulation of the deformation Jacobian matrix mapping
changes of the modal features with motions of the robot manipulation.
However, 
technical challenges arise because linear modal analysis~\cite{bathe2006finite} 
relies on object models and has limited abilities for large deformation.
To overcome these challenges,
instead of analyzing object dynamics by performing modal analysis on the object,
we only use low-frequency modes of a base mesh to span a feature space where the object's global deformation can be uniquely represented with low dimensions.
Then, 
based on the deformation features,
we design an adaptive
deformation controller
that tackles the object's unknown geometric and physical models by online estimating unknown modal parameters in the Jacobian matrix.
The stability of the designed controller is proved using Lyapunov-based methods.
Compared to other SOTA model-free methods, our method:\\
1) proposes novel physically-based, global deformation features using modal analysis. To the best of our knowledge, this is the first work adopting deformation modes to design deformation features for robotic 3D deformation control;\\
2) designs a new analytical deformation Jacobian matrix where unknown object models are treated as unknown modal parameters with only low dimensions.
These parameters can be used to linearly parameterize the closed-loop system, which enables us to develop stable control laws without requiring accurate model learning or parameter identifications;\\
3) can deal with different types (i.e. linear, planar, and volumetric) 
of objects and stereo features (i.e. points, curves, and contours).
Extensive simulations and experiments are conducted for validation.
The results confirm the superior performance of our method under different settings.

The rest of this paper is organized as follows:
Section II states the problem and overviews the algorithm flow; 
Section III defines our modal-based deformation feature space;
Section IV presents the method to compute the deformation features in the feature space from object measurements;
Section V derives the deformation Jacobian matrix between the deformation features and robot manipulation;
Section VI designs an adaptive controller based on the deformation Jacobian matrix; 
Simulations and experiments are presented and discussed in Section VII and VIII, respectively;
Section IX presents discussions and conclusions.

\begin{figure}[b]
    \vspace{-0.2cm}
    \centering
    \includegraphics[width=0.8\linewidth]{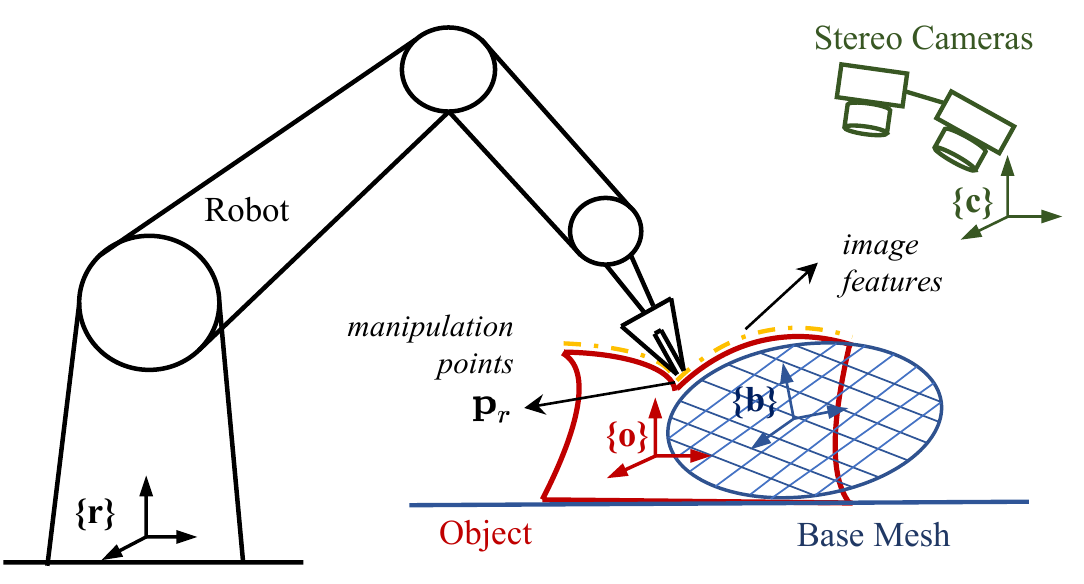}
    \caption{System configuration of the 3D deformation control problem. We consider four coordinate frames: the object frame $\left \{ o \right \}$, the robot frame $\left \{ r \right \}$, the camera frame $\left \{ c \right \}$, and the base mesh frame $\left \{ b \right \}$.}
    \label{fig:system_configuration}
\end{figure}

\section{Problem Statements}
\subsection{Notation}
In this paper, we use lowercase bold letters to denote column vectors, capital bold letters to denote matrices, and italic letters to denote scalar quantities.
Leading superscripts denote reference frames.
Variables without a leading superscript are defined under the default reference frame.
${}^a\mathbf{T}_b \in \mathbb{R}^{4 \times 4}$ denotes a homogeneous transformation matrix from frame ${b}$ to frame ${a}$.
The symbol $\hat{}$ means estimated values; the symbol ${}^{*}$ means desired values.

\subsection{Problem Definition}
We study the 3D shape control of a deformable object with unknown physical properties and undeformed geometry.
As shown in Fig.~\ref{fig:system_configuration}, the object is manipulated by a robot via manipulation points $\mathbf{p}_r$ (of size $k$).
The object is monitored with a pair of static calibrated stereo cameras.
To clarify our problem, the following assumptions are made:
\newtheorem{assumption}{Assumption}
\begin{assumption} 
The object is rigidly and firmly grasped by the robot.
\end{assumption}
\begin{assumption} 
Some image features (such as curves, contours, points, etc.) can be tracked on the surface of the object by both of the cameras in real time.
\end{assumption}
\begin{assumption} 
The robot manipulating motion is sufficiently slow such that we can only consider the quasi-static elastic deformation of the object~\cite{navarro2016automatic}.
\end{assumption}
\begin{assumption} 
The deformable points sampled from the object's image features and manipulated by the robot are elastically constrained in all directions around the equilibrium~\cite{navarro2016automatic}.
\end{assumption}

Consider the object with a point set $\mathbf{p}$ of size $L$ ($L \rightarrow \infty$), its 3D shape  ${}^o\mathbf{x}(\mathbf{p},t) \in \mathbb{R}^{3L}$ (i.e. the position vector of $\mathbf{p}$) is high-dimensional.
To represent the shape with low dimensions, we design deformation features $\mathbf{s} \in \mathbb{R}^m$ ($m \ll 3L$) using low-frequency deformation modes of the base mesh.
Then, based on the deformation features, we define the following 3D shape control problem:
\newtheorem*{problem}{Problem}
\begin{problem}
Given the desired shape ${}^o\mathbf{x}^*(\mathbf{p}) \in \mathbb{R}^{3l}$ and deformation features $\mathbf{s}^* \in \mathbb{R}^m$,
control the velocities $\mathbf{v}(\mathbf{p}_r,t) \in \mathbb{R}^{3k}$ of $\mathbf{p}_r$ using feedback of the deformation feature errors $\mathbf{e}_s(t)=\mathbf{s}(t)-\mathbf{s}^* \in \mathbb{R}^m$,
such that as $\mathbf{s}(t) \to \mathbf{s}^*$, ${}^o\mathbf{x}(\mathbf{p},t) \to {}^o\mathbf{x}^*(\mathbf{p})$.
\end{problem}

\begin{figure}[b]
    \centering
    \vspace{-0.3cm}
    \includegraphics[width=\linewidth]{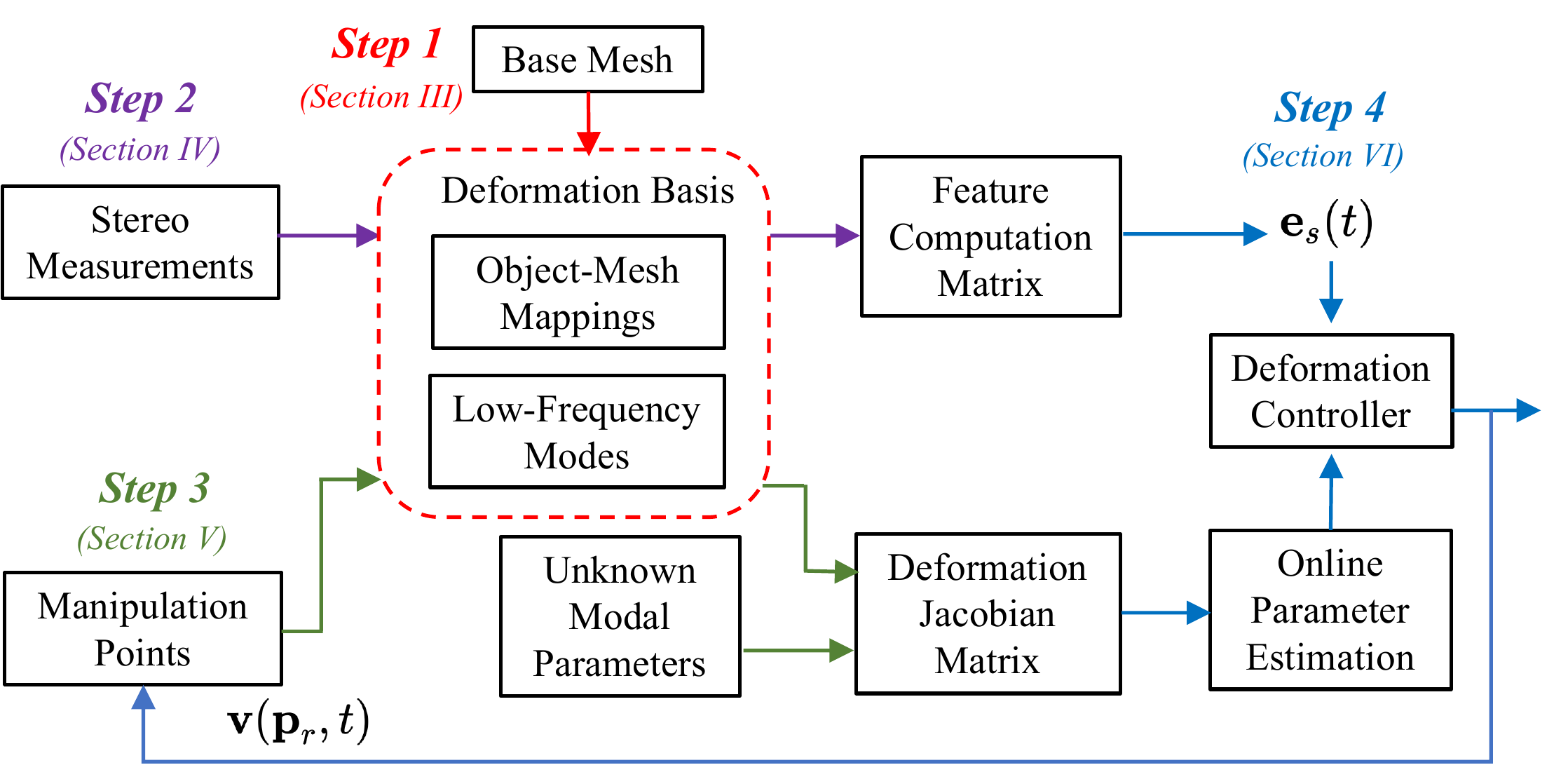}
    \caption{Schematic representation of the proposed algorithm flow.}
    \label{fig:algorithm_steps}
\end{figure}

\subsection{Algorithm Outline}
As shown in Fig.~\ref{fig:algorithm_steps}, the major steps of our algorithm are summarized as follows:\\
1) We introduce
the base mesh and compute its low-frequency modes.
By combining the modes according to the object-mesh mapping, we formulate the deformation basis that defines
a low-dimensional deformation feature space;\\
2) Given the deformation basis, we derive the deformation feature computation matrix, using which stereo measurements of the object can be projected to the feature space;\\
3) Given the deformation basis, we formulate the deformation Jacobian matrix with
unknown modal parameters;\\
4) Using the feature computation and the Jacobian matrices, 
we design a deformation controller that sends manipulation commands while online estimating the unknown parameters.

\section{Modal-Based Deformation Feature Space}
Modal analysis~\cite{bathe2006finite}~\cite{pentland1989good} is a well-known tool for the dimension reduction of global deformation.
However, it requires deformation models of the object which are hard to be obtained accurately on practical occasions.
In this section, a low-dimensional deformation feature space is generated using
modal analysis but with unknown object models.

\subsection{Modal Decomposition of 3D Global Deformation}
We 
begin with a brief introduction of how modal analysis is commonly used to
parameterize global deformation with low dimensions.
As shown in Fig.~\ref{fig:relative_deformation}(a), 3D global deformation can be defined by mapping the object (with domain $\Omega_o$) to itself with new, deformed coordinates~\cite{witkin1990fast}:
\begin{equation}
    {}^o\mathbf{x}(\mathbf{p},t) = {}^o\mathbf{x}(\mathbf{p}) + {}^o\mathbf{u}(\mathbf{p},t)
    \label{eq:model_global_deformation}
\end{equation}
where 
the position vector ${}^o\mathbf{x}(\mathbf{p},t)$ quantifies the 3D shape of the object during deformation;
${}^o\mathbf{x}(\mathbf{p}) \in \mathbb{R}^{3L}$ is the undeformed position vector of the object points $\mathbf{p}$ (i.e. the undeformed geometry of the object);
${}^o\mathbf{u}(\mathbf{p},t) \in \mathbb{R}^{3L}$ is the displacement vector of $\mathbf{p}$ (i.e. the displacement field of the object).
Then, according to the modal decomposition techniques~\cite{bathe2006finite}~\cite{pentland1989good},
the displacement field can be decoupled 
into a set of frequency-ordered, orthogonal modes:
\begin{equation}
    {}^o\mathbf{u}(\mathbf{p},t)
    = \sum_j^{3L} \boldsymbol{\phi}_o^j c_j(t)
\end{equation}
where
$\boldsymbol{\phi}_o^j \in \mathbb{R}^{3L}$ is the $j$-th deformation mode of the object whose coefficient is $c_j(t)$.
As discussed in~\cite{pentland1989perception}, high-frequency modes typically have little energy and less effect on the overall 3D shape.
Thus, by discarding high-frequency modes, the truncated mode coefficients can form a low-dimensional approximation for the 3D global deformation of the object.

Another advantage of using modal descriptions in robotic DoM is that the physical interpretation of the modes facilitates an analytical formulation of the deformation Jacobian matrix.
Specifically, the mode shape vector, $\boldsymbol{\phi}_o^j$, describes how positions of the object points (${}^o\mathbf{x}(\mathbf{p}_i,t)=(x_i(t),y_i(t),z_i(t))^T, i=\left \{ 1,..., L \right \}$) change in the mode as a function of the coefficient ${c}_j(t)$~\cite{pentland1991recovery}:
\begin{equation}
    \boldsymbol{\phi}_o^j \propto 
  (
  \begin{matrix}
 \frac{\partial x_1}{\partial c_j}, & 
 \frac{\partial y_1}{\partial c_j}, & 
 \frac{\partial z_1}{\partial c_j}, &
 ... & 
 \frac{\partial x_L}{\partial c_j},
 & 
 \frac{\partial y_L}{\partial c_j},
 &
 \frac{\partial z_L}{\partial c_j}
 \end{matrix}
  )^T.
\end{equation}
In this way, it naturally defines a Jacobian relationship between 
${}^o\mathbf{u}(\mathbf{p},t)$ and 
${c}_j$.
By extracting the rows of the manipulation points from the mode shape vectors, we can analytically construct the deformation Jacobian matrix mapping motions of the points onto changes of the modal coefficients.

Unfortunately, 
besides the reliance on 
object models, the modal decomposition techniques are also limited by the local nature of the linear modal analysis when analyzing object dynamics under
large deformation.
To avoid these problems, 
we do not compute the dynamic behaviors of the object or perform modal analysis on it.
Instead,
we propose a model-free framework to generate a deformation feature space using low-frequency modes of a base mesh.

\begin{figure}[b]
    \centering
    \vspace{-0.2cm}
    \includegraphics[width=\linewidth]{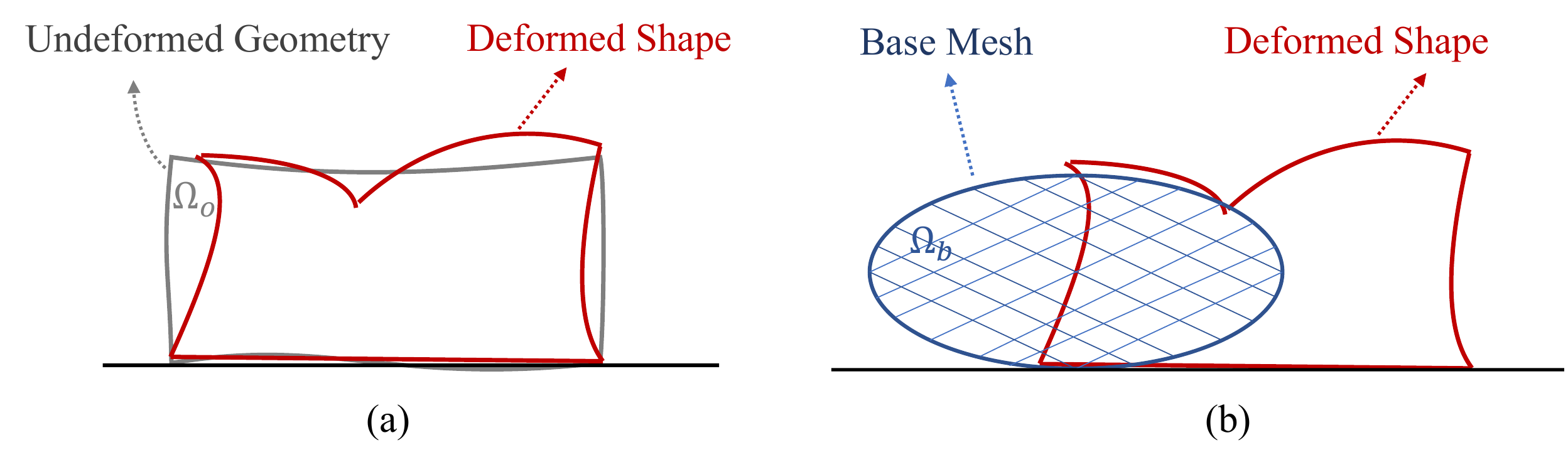}
    \vspace{-0.7cm}
    \caption{Formulation of the object's global deformation relative to: (a) the undeformed geometry of the object; (b) the base mesh.}
    \label{fig:relative_deformation}
\end{figure}

\subsection{Base Mesh and Object-Mesh Mapping}
The base mesh used in our model-free framework is selected to be an ellipsoidal mesh.
Consider an ellipsoid with the following surface vector~\cite{solina1990recovery}:
\begin{equation}
    {}^c\boldsymbol{\gamma}(\zeta ,\sigma) = {}^c\mathbf{T}_b
    \begin{bmatrix}
    a_x cos\zeta cos\sigma \\
    a_y cos\zeta sin\sigma \\
    a_z sin\zeta
\end{bmatrix}^T
\label{eq:ellipsoid}
\end{equation}
where
$\zeta \in [-\pi/2,\pi/2]$ is the latitude while $\sigma \in [-\pi,\pi )$ is the longitude;
$\left \{ a_x,a_y,a_z \right \}$ are the size parameters in x, y, and z dimension, respectively;
${}^c\mathbf{T}_b \in \mathbb{R}^{4 \times 4}$ is its pose with respect to the camera frame.
Discreting the ellipsoid, we obtain the base mesh with the domain $\Omega_b$ and the node set $\mathbf{n} \in \Omega_b$ of size $N$ ($N \ll L$).
Using ${}^c\mathbf{T}_b$, stereo measurements of the object during deformation can be transformed to the base mesh frame.
In the rest of our derivations, we set the base mesh frame $\left \{ b \right \}$ as our default reference frame.

Following the shape modeling techniques in~\cite{metaxas1992dynamic}~\cite{solina1990recovery},
we treat the object shape as the deformed result from the base mesh.
Then, as shown in Fig.~\ref{fig:relative_deformation}(b),
the object's global deformation can be re-formulate with:
\begin{equation}
    \mathbf{x}(\mathbf{p},t) =
    \mathbf{x}(\boldsymbol{\eta}(\mathbf{p})) +
    \mathbf{u}(\mathbf{p},\boldsymbol{\eta}(\mathbf{p}),t)
    \label{eq:global_deformation}
\end{equation}
where $\boldsymbol{\eta}(\mathbf{p}) \in \partial \Omega_b$ are the projections of object points $\mathbf{p}$ on the surface of the base mesh;
$\mathbf{x}(\boldsymbol{\eta}(\mathbf{p})) \in \mathbb{R}^{3L}$ is their position vector.
For a static base mesh with no deformation, the object-mesh displacements $\mathbf{u}(\mathbf{p},\boldsymbol{\eta}(\mathbf{p}),t) \in \mathbb{R}^{3L}$ can be used to quantify the object shape.
Note that pose and size differences between the object and the base mesh can be absorbed in $\mathbf{u}(\mathbf{p},\boldsymbol{\eta}(\mathbf{p}),t)$.
Thus, when generating the base mesh,
we do not need to calibrate the object's pose and size to set equation~\eqref{eq:ellipsoid}.
The mesh pose ${}^c\mathbf{T}_b$ and size parameters $a_x$ and $a_y$ can be roughly estimated using the stereo measurements of the object at the rest configuration (i.e. the configuration before the robot runs).
Only a rough estimation of the object's thickness is required to set $a_z$.

\newtheorem{remark}{Remark}
\begin{remark}
The mesh pose ${}^c\mathbf{T}_b$ and sizes $a_x$ and $a_y$ can be computed using 3D measurements of the object points which are sampled from the image features (on the object surface) and manipulated by the robot.
Besides the moment-based computation method in~\cite{pentland1991closed}, we also propose a simple solution in Algorithm 1 (shown in Section VIII).
\end{remark}

Then, 
we establish the object-mesh mapping to 
calculate
$\mathbf{u}(\mathbf{p},\boldsymbol{\eta}(\mathbf{p}),t)$
in equation~\eqref{eq:global_deformation} and to further allocate it to mesh nodes.
Inspired by the radially projecting virtual spring attachment method~\cite{terzopoulos1987symmetry}~\cite{pentland1991closed},
the mapping is computed as follows:
First, for each object point $\mathbf{p}^{w}$ ($w=\left \{ 1,...,L \right \} $) at the rest configuration,
we project it to the mesh surface along its connection line with the mesh center and set the intersection surface point
to be its mesh projection $\boldsymbol{\eta}(\mathbf{p}^w)$;
Second, we attach a virtual spring between
$\mathbf{p}^{w}$ and $\boldsymbol{\eta}(\mathbf{p}^w)$.
Then, their object-mesh displacement $\mathbf{u}(\mathbf{p}^w,\boldsymbol{\eta}(\mathbf{p}^w),t) \in \mathbb{R}^{3}$ can be allocated among the nodes of the intersection element $L(\mathbf{p}^w)$ via:
\begin{equation}
    \mathbf{u}(\mathbf{p}^w,\boldsymbol{\eta}(\mathbf{p}^w),t) = \!\! \sum_{\mathbf{n}^i \in L(\mathbf{p}^w)} \! 
    ^{}{N}_{w,i}(\boldsymbol{\eta}(\mathbf{p}^w),\mathbf{n}^i)\mathbf{u}(\mathbf{n}^i, t)
    \label{eq:element_shape_function}
\end{equation}
where $\mathbf{n}^i$ is the node
whose global index in the base mesh is $i$;
the scalar ${N}_{w,i}(\boldsymbol{\eta}(\mathbf{p}^w),\mathbf{n}^i)$ is the allocation weight of $\mathbf{n}^i$ with respect to $\boldsymbol{\eta}(\mathbf{p}^w)$, 
which is computed using the finite element (FE) shape function~\cite{kattan2010matlab} of $L(\mathbf{p}^w)$;
$\mathbf{u}(\mathbf{n}^i, t) \in \mathbb{R}^3$ is the allocated displacement vector of $\mathbf{n}^i$;
Finally, by assembling the element object-mesh mappings in equation \eqref{eq:element_shape_function} for all object points $\mathbf{p}$, we can construct the global object-mesh mapping:
\begin{equation}
    \mathbf{u}(\mathbf{p},\boldsymbol{\eta}(\mathbf{p}),t) = \mathbf{N}(\boldsymbol{\eta}(\mathbf{p}), \mathbf{n})\mathbf{u}(\mathbf{n}, t)
    \label{eq:global_shape_function}
\end{equation}
where $\mathbf{N}(\boldsymbol{\eta}(\mathbf{p}), \mathbf{n}) \in \mathbb{R}^{3L \times 3N}$ is the global object-mesh allocating matrix;
$\mathbf{u}(\mathbf{n}, t) \in \mathbb{R}^{3N}$ is the allocated nodal displacement field. 
However, this nodal displacement field still has high dimensions.
The next step is to derive a reduced deformation basis by performing modal truncation on $\mathbf{u}(\mathbf{n}, t)$.

\subsection{Modal Truncation on Base Mesh}
Given a set of arbitrarily assigned physical properties (including Young's modulus $E$, Poisson's ratio $v$, and the total mass $M$),
we compute the general stiffness matrix $\mathbf{K} \in \mathbb{R}^{3N \times 3N}$ and mass matrix $\mathbf{M} \in \mathbb{R}^{3N \times 3N}$ of the base mesh using the finite element method (FEM).
Following the approaches in~\cite{bathe2006finite}, we compute the undamped free deformation (or vibration) modes of the base mesh by solving the following generalized eigenproblem:
\begin{equation}
\mathbf{K}\boldsymbol{\phi}=\omega^2\mathbf{M}\boldsymbol{\phi}
\end{equation}
which has $3N$ eigensolutions:
$$ 
\begin{matrix}
 \begin{Bmatrix}
 (\omega_1^2,\boldsymbol{\phi}^1), &
 ...&
 (\omega_{3N}^2,\boldsymbol{\phi}^{3N})
 \end{Bmatrix},
 & 
 \omega_1^2 \le  ... \le \omega_{3N}^2
\end{matrix}
$$
where the eigenvector $\boldsymbol{\phi}^j \in \mathbb{R}^{3N}$ is the $j$-th mode shape vector while $\omega_j$ is the corresponding natural frequency.
Then, we form the mode shape matrix of the base mesh using these frequency-ordered eigenvectors:
$$ 
  \mathbf{\Phi} = 
  \begin{bmatrix}
  \boldsymbol{\phi}^1 & \boldsymbol{\phi}^2 & ... & \boldsymbol{\phi}^{3N}
  \end{bmatrix}.
$$
Note that $\mathbf{\Phi} \in \mathbb{R}^{3N \times 3N}$ is $\mathbf{M}$-orthonormal and $\mathbf{K}$-orthogonal.
We then discard high-frequency modes and normalize the retained $m$ ($m \ll 3N$) modes.
Afterward, we 
use the modal truncation to approximate the nodal displacements with:
\begin{equation}
    \mathbf{u}(\mathbf{n},t) 
    = \sum_{j=1}^m \boldsymbol{\phi}_n^j s_j(t)
    \label{eq:modal_truncation}
\end{equation}
where $\boldsymbol{\phi}_n^j \in \mathbb{R}^{3N}$ is the $j$-th normalized mode whose coefficient is $s_j$.
Note that these modes are computed using the deformation model of the base mesh rather than the object.
Even though different deformation models lead to different sets of mode shapes, each set satisfies similar orthogonal and frequency-ordered properties to span a low-dimensional deformation feature space.
Thus, the truncated modal coefficients $\mathbf{s}(t) =\left \{ s_1(t),....,s_m(t) \right \}^T$ can
form a unique and compact deformation representation.

\subsection{Model-free Deformation Basis}
Choosing $\mathbf{s}(t)$ to be our deformation features, based on the above discussions, we derive the following modal-based deformation basis $\mathbf{B}(\mathbf{p},\mathbf{n}) \in \mathbb{R}^{3L \times m}$:
\begin{equation}
\begin{aligned}
       \mathbf{u}(\mathbf{p},\mathbf{n},t) 
       & =  \sum_{j=1}^{m}
       \underbrace{\sum_{i=1}^{N}\mathbf{N}_i(\boldsymbol{\eta}(\mathbf{p}),\mathbf{n}^i)[\boldsymbol{\phi}_n^j]_i}_{\mathbf{b}^j(\mathbf{p},\mathbf{n})}
       s_j(t) 
       \\ &
       = \mathbf{N}(\boldsymbol{\eta}(\mathbf{p}),\mathbf{n}) \mathbf{\Phi}_n \mathbf{s}(t)
       = \mathbf{B}(\mathbf{p},\mathbf{n})\mathbf{s}(t)
\end{aligned}
\label{eq:shape_basis}
\end{equation}
where $\mathbf{N}_i(\boldsymbol{\eta}(\mathbf{p}),\mathbf{n}^i) \in \mathbb{R}^{3L \times 3}$ is the local object-mesh allocating matrix for $\mathbf{n}^i$;
the sub-vector $[\boldsymbol{\phi}_n^j]_i \in \mathbb{R}^{3}$  consists of the variables from $\boldsymbol{\phi}_n^j$ that correspond to $\mathbf{n}^i$;
the vector $\mathbf{b}^j(\mathbf{p},\mathbf{n}) \in \mathbb{R}^{3L}$ (the column vector of the basis $\mathbf{B}(\mathbf{p},\mathbf{n})$) can be regarded as the Ritz basis function in~\cite{krysl2001dimensional} (unlike~\cite{krysl2001dimensional}, we formulate the \textit{priori} vectors by combining the base mesh's low-frequency modes according to the object-mesh mapping);
since the modes are generated in a model-free manner, we call $\mathbf{\Phi}_n \in \mathbb{R}^{3N \times m}$ the truncated reference mode shape matrix and the deformation feature space they span the truncated reference modal space.
Note that our deformation features are designed by approximating the object's global deformation using a linear low-dimensional combination of the whole-body deformation modes.
Thus, unlike the existing geometric deformation features that describe the (deformed) positions of some measurable parts of the object, our deformation features represent the global deformation of the object.
By setting the feature dimension $m$ to be equal or greater than 
the measurement dimension, 
over-constrained solutions of $\mathbf{s}(t)$ can be still computed even with 
partial 
stereo measurements.
It should also be mentioned that our deformation basis can be formulated for linear, planar, and volumetric objects and can deal with stereo measurements of different image features.

\section{Deformation Feature Computation}
This section presents the method to compute our modal-based 
deformation features
in the truncated reference modal space 
from stereo measurements of the object.

\subsection{Surface Samplings}
As the proposed deformation basis directly transforms between object points and the deformation features $\mathbf{s}(t)$, we can compute $\mathbf{s}(t)$ by simply sampling points from the image features tracked on the object (with respect to Assumption 2).
Given the tracked image features $\mathbf{y}(t) \in \mathbb{R}^{\varrho(t) \times 4}$ (i.e. a pixel vector in the pair of 2D images obtained from the stereo cameras),
we sample $l$ 
points from them and compute the 3D positions of the points:
\begin{equation}
    \mathbf{x}(\mathbf{p}_s,t)= f(\mathbf{y}(t))
\end{equation}
where the sampled points
$\mathbf{p}_s$ are called the surface samplings, and $\mathbf{x}(\mathbf{p}_s,t) \in \mathbb{R}^{3l}$ is their position vector;
$f: \mathbb{R}^{\varrho(t) \times 4} \to \mathbb{R}^{3l}$ is a nonlinear function of the point sampling and reconstruction process.
Fig.~\ref{fig:surface_features} shows some examples of the point sampling from different image features.

\begin{figure}[t]
    \vspace{0.1cm}
    \centering
    \includegraphics[width=0.85\linewidth]{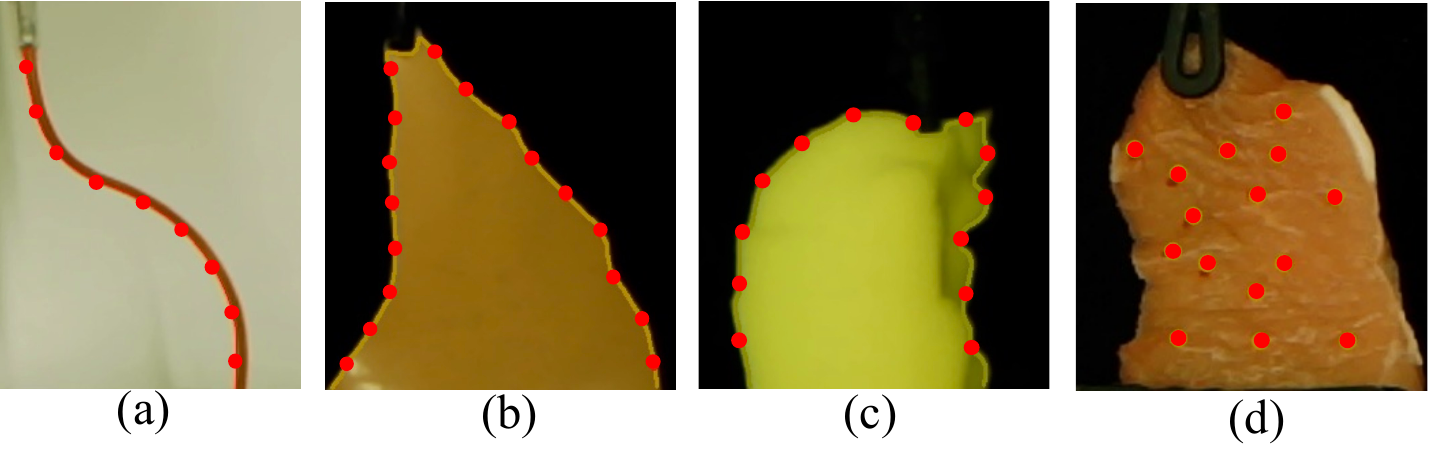}
    \vspace{-0.5cm}
    \caption{Examples of obtaining surface samplings (the red dots) from different image features: 
    (a) sampling points from the curve feature of a linear object by arc length;
    (b/c) sampling points from the contour feature of a planar/volumetric object by arc length;
    (d) tracking point features on the surface of an animal (pork) tissue.}
    \label{fig:surface_features}
    \vspace{-0.2cm}
\end{figure}

\subsection{Object-Mesh Mapping of Surface Samplings}
To establish the relationship between the surface samplings $\mathbf{p}_s$ and our deformation features,
we first formulate the object-mesh mapping of $\mathbf{p}_s$ such that their corresponding part in the deformation basis can be computed.
Given $\mathbf{p}_s$, following the method in Section III-B, 
we can find their base mesh projections $\boldsymbol{\eta}(\mathbf{p}_s)$ and compute their object-mesh displacements $\mathbf{u}(\mathbf{p}_s,\boldsymbol{\eta}(\mathbf{p}_s),t) \in \mathbb{R}^{3l}$ with:
\begin{equation}
    \mathbf{u}(\mathbf{p}_s,\boldsymbol{\eta}(\mathbf{p}_s),t) = \mathbf{x}(\mathbf{p}_s,t) - \mathbf{x}(\boldsymbol{\eta}(\mathbf{p}_s))
    \label{eq:point_displacement}
\end{equation}
where $\mathbf{x}(\boldsymbol{\eta}(\mathbf{p}_s)) \in \mathbb{R}^{3l}$ is the position vector of $\boldsymbol{\eta}(\mathbf{p}_s)$.
For each of $\boldsymbol{\eta}(\mathbf{p}_s)$, we compute its element object-mesh mapping in equation~\eqref{eq:element_shape_function}.
Then, all the element mappings of $\boldsymbol{\eta}(\mathbf{p}_s)$ are assembled into the following matrix form:
\begin{equation}
    \mathbf{u}(\mathbf{p}_s,\boldsymbol{\eta}(\mathbf{p}_s),t) = \mathbf{N}_s(\boldsymbol{\eta}(\mathbf{p}_s),\mathbf{n}_s)\mathbf{u}(\mathbf{n}_s,t)
    \label{eq:raw_distribution}
\end{equation}
where 
$\mathbf{n}_s$ (of size $n$) are the surface samplings' allocation nodes with non-zero allocation weights;
$\mathbf{N}_s(\boldsymbol{\eta}(\mathbf{p}_s),\mathbf{n}_s) \in \mathbb{R}^{3l \times 3n}$ is the (local) object-mesh allocating matrix for surface samplings, which can be regarded as a sub-matrix consisting of the rows of $\boldsymbol{\eta}(\mathbf{p}_s)$ and the columns of $\mathbf{n}_s$ from the global object-mesh allocating matrix $\mathbf{N}(\boldsymbol{\eta}(\mathbf{p}), \mathbf{n})$;
$\mathbf{u}(\mathbf{n}_s,t) \in \mathbb{R}^{3n}$ is the nodal displacement vector of $\mathbf{n}_s$.
Afterwards, we compute $[\mathbf{\Phi}_n]_s \in \mathbb{R}^{3n \times m}$ by extracting the rows of $\mathbf{n}_s$ from $\mathbf{\Phi}_n$, using which the nodal displacements $\mathbf{u}(\mathbf{n}_s,t)$ can be represented with our deformation features using equation~\eqref{eq:modal_truncation}.

As our deformation features describe the global deformation rather than the local point positions,
we do not require the surface samplings to be the same object points.
During the deformation, the surface samplings can slip on the object surface, whose influence can be absorbed in their object-mesh displacements.
This benefits the implementation of our method for practical applications.
By comparison, some existing works require artificial markers~\cite{yu2022global}~\cite{navarro2016automatic} or
textured objects~\cite{shetab2022rigid}
to track points.

\begin{figure}[b]
    \centering
    \vspace{-0.2cm}
    \includegraphics[width=\linewidth]{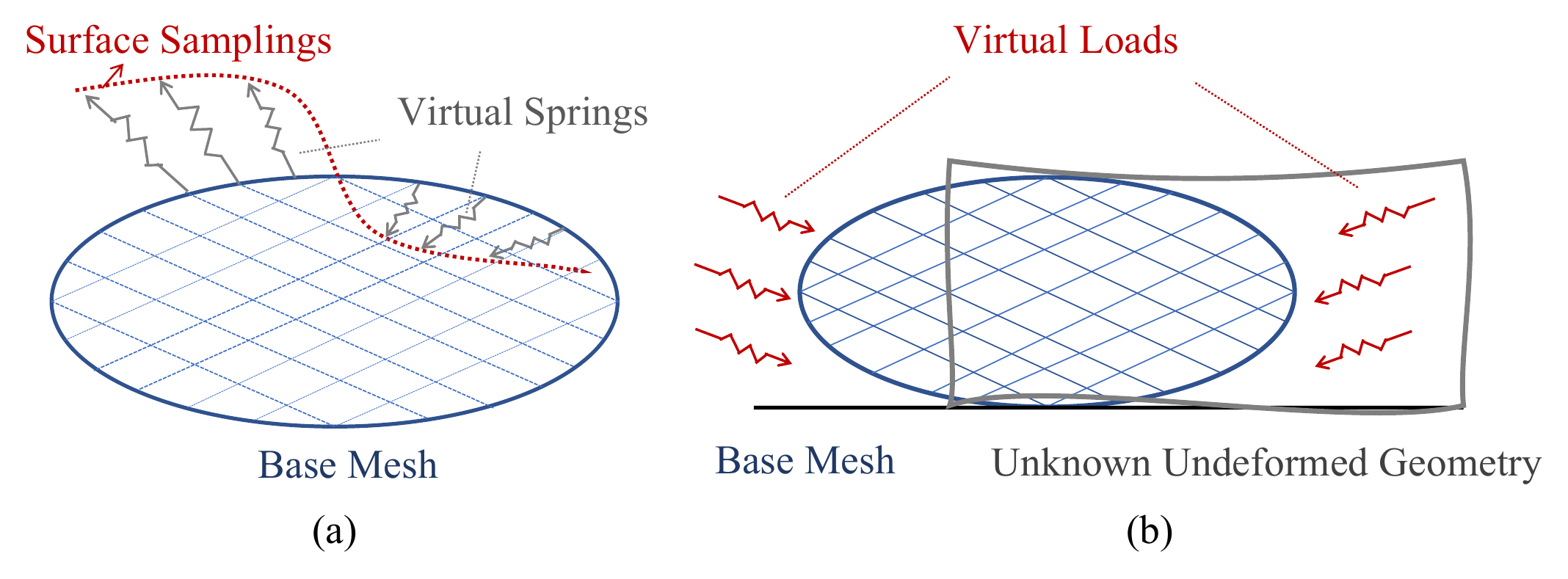}
    \vspace{-0.7cm}
    \caption{Illustrations of:
    (a) the virtual springs between surface samplings and their bash mesh projections;
    (b) the virtual loads to deform the base mesh to fit the undeformed object geometry.
    }
    \label{fig:mesh_virtual}
\end{figure}

\subsection{Deformation Feature Computation Matrix}
Computing our deformation features from 3D position measurements of surface samplings requires the inverse process of equation \eqref{eq:raw_distribution}.
However, as the surface samplings $\mathbf{p}_s$ are usually sparser than their allocated nodes $\mathbf{n}_s$ ($l \le n$), the direct inverse of equation~\eqref{eq:raw_distribution} is under-determined.
To avoid solving the under-determined problem,
according to the virtual spring attachment method\cite{terzopoulos1987symmetry}~\cite{pentland1991closed},
we specify displacement measurements as virtual forces that deform the object to fit the measured points through virtual springs (Fig.~\ref{fig:mesh_virtual}(a)).
In this way, given the principle of virtual work,
the point deformation $\mathbf{u}(\mathbf{p}_s,\boldsymbol{\eta}(\mathbf{p}_s),t)$ can be allocated among the mesh nodes using the transpose (rather than the inverse) of $\mathbf{N}_s(\boldsymbol{\eta}(\mathbf{p}_s),\mathbf{n}_s)$:
\begin{equation}
\begin{aligned}
    \mathbf{r}(\mathbf{n}_s,t)  
    & = \mathbf{N}_s^T(\boldsymbol{\eta}(\mathbf{p}_s),\mathbf{n}_s)\mathbf{u}(\mathbf{p}_s,\boldsymbol{\eta}(\mathbf{p}_s),t)
\end{aligned}
\label{eq:virtual_f}
\end{equation}
where $\mathbf{r}(\mathbf{n}_s,t) \in \mathbb{R}^{3n}$ is the allocated virtual force vector of nodes $\mathbf{n}_s$.
It can be further transformed to the modal forces $\widetilde{\mathbf{r}}(t) \in \mathbb{R}^{m}$ with:
\begin{equation}
    \widetilde{\mathbf{r}}(t) = [\mathbf{\Phi}_n]_s^T\mathbf{r}(\mathbf{n}_s,t).
\end{equation}
Nevertheless, in equation~\eqref{eq:virtual_f}, displacement measurements are allocated as virtual forces.
We need to add a rectified term between the force variables and the displacement variables in the truncated reference modal space.
Following the relative derivations in~\cite{pentland1991closed}, we perform the force-displacement rectification with:
\begin{equation}
    \mathbf{s}(t) = (\widetilde{\mathbf{K}}+\mathbf{I}_6)^{-1}\widetilde{\mathbf{r}}(t)
    \label{eq:modal_rectify}
\end{equation}
where $\mathbf{I}_{6} \in \mathbb{R}^{m \times m}$ is a diagonal matrix whose first six diagonal elements are ones, while the other elements are zeros;
the diagonal matrix $\widetilde{\mathbf{K}} \in \mathbb{R}^{m \times m}$ is the normalized modal stiffness matrix (unlike~\cite{pentland1991closed}, we compute a normalized version with $\widetilde{\mathbf{K}}=\mathbf{\Phi}_n^T\mathbf{K}\mathbf{\Phi}_n$).
The matrix $\mathbf{I}_6$ is added to $\widetilde{\mathbf{K}}$ because under the base mesh frame, eigenvalues of the first six modes (the rigid motion modes) are zeros.

Based on the above discussions, we can compute our 3D deformation features via:
\begin{equation}
\mathbf{s}(t) = \mathbf{D}_\Phi(\mathbf{n}_s)\mathbf{D}_N(\boldsymbol{\eta}(\mathbf{p}_s),\mathbf{n}_s) \mathbf{u}(\mathbf{p}_s,\boldsymbol{\eta}(\mathbf{p}_s),t)
\label{eq:feature_computation}
\end{equation}
where $\mathbf{D}_N(\boldsymbol{\eta}(\mathbf{p}_s),\mathbf{n}_s) \in \mathbb{R}^{3n \times 3l}$ is the mesh projection matrix that projects point measurements of the object to the nodal space of the base mesh with:
\begin{equation}
    \mathbf{D}_N(\boldsymbol{\eta}(\mathbf{p}_s),\mathbf{n}_s) = \mathbf{N}_s^T(\boldsymbol{\eta}(\mathbf{p}_s),\mathbf{n}_s)
\end{equation}
and $\mathbf{D}_\Phi(\mathbf{n}_s) \in \mathbb{R}^{m \times 3n}$ is the modal projection matrix that transforms nodal variables of the base mesh to the deformation features in the truncated reference modal space with:
\begin{equation}
    \mathbf{D}_\Phi(\mathbf{n}_s) = (\widetilde{\mathbf{K}}+\mathbf{I}_6)^{-1}[\mathbf{\Phi}_n]_s^T.
\end{equation}

Although the use of virtual forces is a well-established technique in some shape modeling works~\cite{terzopoulos1987symmetry}~\cite{pentland1991closed}, the definitions and computations of the forces are not physically accurate.
Those works define the virtual forces to deform the object to fit sensor measurements, and then solve the equilibrium displacements (that constitute the recovered shape) numerically or through several iterations.
By comparison, we only use the virtual forces to project stereo measurements to the deformation feature space such that the object deformation is uniquely represented (rather than accurately recovered).
To ensure the unique representation, we formulate the deformation feature computation matrix whose rank satisfies
\begin{equation}
    rank(\mathbf{D}_\Phi(\mathbf{n}_s)\mathbf{D}_N(\boldsymbol{\eta}(\mathbf{p}_s),\mathbf{n}_s))=m,
    \label{eq:full_rank}
\end{equation}
and also require the dimension of surface samplings satisfies $l \ge \frac{m}{3}$.
In addition, we need to further deal with the modeling uncertainties introduced by the virtual forces when designing control laws.
\begin{remark}
 The deformation feature computation matrices are formulated at the rest configuration. They do not need to be reformulated unless some one-to-one correspondences of the surface samplings are lost due to occlusions, in which case we can re-assemble the matrices $\mathbf{N}_s(\boldsymbol{\eta}(\mathbf{p}_s),\mathbf{n}_s)$ and $[\mathbf{\Phi}_n]_s$ according to the changed configurations of $\mathbf{p}_s$ and $\mathbf{n}_s$.
\end{remark}

\section{Deformation Jacobian Matrix with Unknown Modal Parameters}
To formulate robotic manipulation strategies, we need to 
analyze how robot manipulation affects our deformation features.
In Euclidean space, the dynamics of a deformable object 
under external manipulation 
are described by partial differential equations~\cite{junkins1993introduction} with infinite dimensions.
To investigate the dynamics in our deformation feature space,
we consider the following low-dimensional approximation of the object dynamics using the model reduction techniques in~\cite{barbivc2012fem}~\cite{von2013efficient}:
\begin{equation}
    \widetilde{\mathbf{M}}_o\ddot{\mathbf{s}}(t) = 
    \widetilde{\mathbf{F}}(\mathbf{s}(t), \dot{\mathbf{s}}(t)), \mathbf{x}(\mathbf{p}_r,t))
    \label{eq:modal_eom}
\end{equation}
where $\widetilde{\mathbf{M}}_o \in \mathbb{R}^{m \times m}$ is the mass matrix of the object in the truncated reference modal space;
$\dot{\mathbf{s}} \in \mathbb{R}^m$ and $\ddot{\mathbf{s}} \in \mathbb{R}^m$ are the first and second order derivative of our deformation features $\mathbf{s}(t)$;
$\mathbf{x}(\mathbf{p}_r,t) \in \mathbb{R}^{3k}$ ($3k \le m$) is the position vector of the manipulation points $\mathbf{p}_r$;
the nonlinear term $\widetilde{\mathbf{F}}(\cdot) \in \mathbb{R}^m$ is the superposition of elastic forces, damping forces, and external forces
of the object in the truncated reference modal space.
However,
equation~\eqref{eq:modal_eom}
is hard to solve with unknown deformation models of the object.
Focusing on kinematic control under the quasi-static assumption (Assumption 3), 
this paper does not consider the object dynamics in our controller design.
Instead, we formulate a deformation Jacobian matrix between $\dot{\mathbf{s}}(t)$ and $\dot{\mathbf{x}}(\mathbf{p}_r,t) \in \mathbb{R}^{3k}$, the first order derivative of $\mathbf{x}(\mathbf{p}_r,t)$, and 
treat the unknown object models as low-dimensional unknown parameters in the Jacobian matrix.

\subsection{Quasi-Static Deformation in Truncated Reference Modal Space}
Under the quasi-static assumption, the dominant effect of the object deformation is its potential energy.
Thus, we investigate the potential energy of the object to formulate the relationship between $\dot{\mathbf{s}}(t)$ and $\dot{\mathbf{x}}(\mathbf{p}_r,t)$.
Note that the quasi-static assumption is widely used in the existing works of deformation control (such as~\cite{navarro2016automatic}~\cite{hu2018three}~\cite{yu2022global}~\cite{hirai2000indirect}).
Unlike these works studying the potential energy in Euclidean space, we formulate the potential energy in the truncated reference modal space using the subspace domain integral~\cite{an2008optimizing}:
\begin{equation}
    W(\mathbf{s}(t)) = \int_{\Omega_b} \psi(\mathbf{p};\mathbf{s}(t))d{\Omega}_{\mathbf{p}}
\end{equation}
where $W(\mathbf{s}(t)): \mathbb{R}^m \to \mathbb{R}$ is the subspace potential energy function;
$\psi(\mathbf{p};\mathbf{s}(t))$ is the non-negative energy density at the object point $\mathbf{p}$ in the point domain $\Omega_{\mathbf{p}}$.
Then, the equation of equilibrium in the modal space follows:
\begin{equation}
    \frac{\partial W}{\partial \mathbf{s}}(\mathbf{s}(t)) + \widetilde{\mathbf{f}}(\mathbf{x}(\mathbf{p}_r,t)) = \mathbf{0}.
\label{eq:equi_eq}
\end{equation}
Solving this equation requires modeling knowledge of the object.
To avoid that, we locally focus on a linear approximation of the relationship between $\delta\mathbf{s} \in \mathbb{R}^m$ (a small change of $\mathbf{s}(t)$) and $\delta\mathbf{x}(\mathbf{p}_r) \in \mathbb{R}^{3k}$ (a small change of $\mathbf{x}(\mathbf{p}_r,t)$).
To construct the local approximation,
inspired by~\cite{navarro2016automatic}~\cite{zhong2019dual}, we linearize equation \eqref{eq:equi_eq} around the equilibrium into:
\begin{equation}
    \left ( 
         \frac{\partial^2 W}{\partial \mathbf{s}\partial \mathbf{s}} +
         \frac{\partial \widetilde{\mathbf{f}}}{\partial \mathbf{s}} 
    \right ) \delta\mathbf{s}  +
    \left ( 
         \frac{\partial^2 W}{\partial \mathbf{s}\partial \mathbf{x}(\mathbf{p}_r,t)} +
         \frac{\partial \widetilde{\mathbf{f}}}{\partial \mathbf{x}(\mathbf{p}_r,t)} 
    \right ) \delta\mathbf{x}(\mathbf{p}_r)  = \mathbf{0}
\label{eq:linear_equil_eq}
\end{equation}
where $\partial W / \partial \mathbf{s}$ and $\widetilde{\mathbf{f}}(\mathbf{x}(\mathbf{p}_r,t))$ are regarded as internal and external forces
in the modal space.
We denote their counterparts in Euclidean space by $\mathbf{f}_{int}(t)$ and $\mathbf{f}(t)$, respectively.
Then, 
equation~\eqref{eq:linear_equil_eq} can be rewritten as:
\begin{equation}
   \begin{aligned}
    &
    \underbrace{
     \left ( 
     \frac{\partial (\partial W / \partial \mathbf{s})}{\partial \mathbf{s}} +
         \frac{\partial \widetilde{\mathbf{f}}}{\partial \mathbf{s}} 
    \right )
    }_{\widetilde{\mathbf{A}}} \delta\mathbf{s} + 
    \\ &
    \underbrace{
    \left (
    \frac{\partial (\partial W / \partial \mathbf{s})}{\partial \mathbf{f}_{int}}  \frac{\partial \mathbf{f}_{int}}{\partial \mathbf{x}(\mathbf{p}_r,t)} + 
   \frac{\partial \widetilde{\mathbf{f}}}{\partial \mathbf{f}} \frac{\partial \mathbf{f}}{\partial \mathbf{x}(\mathbf{p}_r,t)}
   \right ) 
    }_{\widetilde{\mathbf{B}}} \delta\mathbf{x}(\mathbf{p}_r) = \mathbf{0}
 \end{aligned}
 \label{eq:linear_equili_rewrite}
\end{equation}
where 
$\widetilde{\mathbf{A}} \in \mathbb{R}^{m \times m}$ and $\widetilde{\mathbf{B}} \in \mathbb{R}^{m \times 3k}$ are unknown constant relationship matrices computed with unknown stiffness matrices.
Given Assumption 4 and equation~\eqref{eq:full_rank},
the relationship matrices $\widetilde{\mathbf{A}}$ and $\widetilde{\mathbf{B}}$ have full column ranks.
Then, for slow robot manipulation (where $\dot{\mathbf{s}}(t) \approx \delta\mathbf{s} / \delta t$ and $\dot{\mathbf{x}}(\mathbf{p}_r,t) \approx \delta \mathbf{x}(\mathbf{p}_r) / \delta t$ hold for $\delta t$, a small time interval),
we can establish a Jacobian relationship between $\dot{\mathbf{s}}(t)$ and $\dot{\mathbf{x}}(\mathbf{p}_r,t)$.

However, formulating or estimating a deformation Jacobian matrix is a common technical challenge in the related works.
To derive our deformation Jacobian matrix analytically,
as shown in Fig.~\ref{fig:modal_spring}, we introduce an intermediate model-free variable $\widetilde{\mathbf{u}} \in \mathbb{R}^{m}$, the modal displacements of the base mesh (rather than the object) under
robot manipulation.
To compute $\widetilde{\mathbf{u}}$, we project the robot manipulation on the base mesh to the modal space.
Based on Assumption 1, the manipulation points $\mathbf{p}_r$ are fixed object points
whose positions can be obtained using robot measurements.
Given $\mathbf{p}_r$, following the derivations in Section III and IV,
$\widetilde{\mathbf{u}}$ can be computed with:
\begin{equation}
   {\widetilde{\mathbf{u}}}(t) = (\widetilde{\mathbf{K}}+\mathbf{I}_6)^{-1}[\mathbf{\Phi}_n]_r^T\mathbf{N}_r^T(\boldsymbol{\eta}(\mathbf{p}_r),\mathbf{n}_r) (\mathbf{x}(\mathbf{p}_r,t) - \mathbf{x}(\boldsymbol{\eta}(\mathbf{p}_r)))
   \label{eq:jacobian_mesh}
\end{equation}
where
$\boldsymbol{\eta}(\mathbf{p}_r)$ are the base mesh projections of $\mathbf{p}_r$, whose position vector is
$\mathbf{x}(\boldsymbol{\eta}(\mathbf{p}_r))  \in \mathbb{R}^{3k}$;
$\mathbf{n}_r$ is the manipulation points' allocation nodes of size $h$;
$\mathbf{N}_r(\boldsymbol{\eta}(\mathbf{p}_r),\mathbf{n}_r) \in \mathbb{R}^{3k \times 3h}$ is the (local) object-mesh allocating matrix for the manipulation points;
$[\mathbf{\Phi}_n]_r \in \mathbb{R}^{3h \times m}$ consists of the rows of $\mathbf{n}_r$ extracted from $\mathbf{\Phi}_n$.
Nevertheless, ${\widetilde{\mathbf{u}}}(t)$ only reflects deformation properties of the base mesh while our deformation features $\mathbf{s}(t)$ (computed from the object's stereo measurements) reflect the unknown deformation properties of the object.
We must take the modeling uncertainties between the base mesh and the object into consideration.

\begin{figure}[t]
    \centering
    \includegraphics[width=\linewidth]{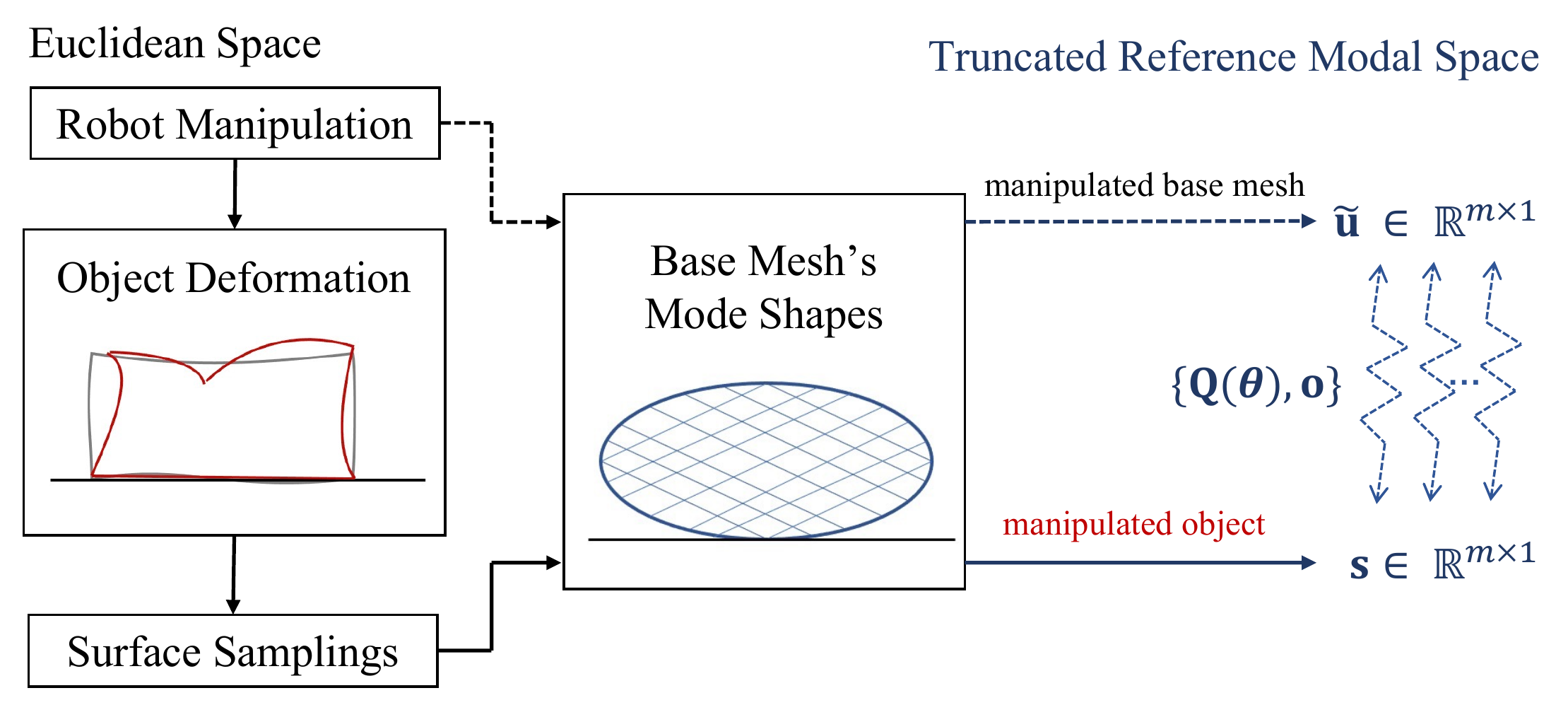}
    \vspace{-0.7cm}
    \caption{Illustration of the deformation Jacobian derivation.
    The solid lines indicate the direct path from robot manipulation to our deformation features while the dotted lines indicate the path we derive the Jacobian matrix.
    The blue springs illustrate the local elastic relations between the modal displacements of the base mesh and the object in equation~\eqref{eq:jacobian_parameter}.
    } 
    \label{fig:modal_spring}
    \vspace{-0.2cm}
\end{figure}

\subsection{Unknown Modal Parameters}
Our method investigates all the 
modeling uncertainties
in the truncated reference modal space.
Given equation~\eqref{eq:shape_basis}, the deformation features $\mathbf{s}(t)$ can be regarded as the generalized displacements of the object in the truncated reference modal space.
As the modal space is generated using normalized modes, 
according to~\cite{bathe2006finite}~\cite{agudo2014good}, 
the influence of material and mass properties can be absorbed in the amplitudes of modal displacements.
To further deal with the unknown geometric model of the object,
we assume some unknown but constant virtual loads (Fig.~\ref{fig:mesh_virtual}(b)) that deform the base mesh to fit the undeformed object geometry.
In this way, according to~\cite{bathe2006finite}, all the modeling uncertainties can be absorbed in the amplitudes of modal displacements.
Then, 
inspired by~\cite{navarro2014visual}, 
we approximate the local elastic relations between $\mathbf{s}(t)$ and $\widetilde{\mathbf{u}}(t)$ with the following affine deformation model~\cite{ogden1997non}:
\begin{equation}
    \mathbf{s}(t) = \mathbf{Q}\widetilde{\mathbf{u}}(t)+\mathbf{o}
    \label{eq:jacobian_parameter}
\end{equation}
where the unknown constant matrix $\mathbf{Q} \in \mathbb{R}^{m \times m}$ and the unknown constant vector $\mathbf{o} \in \mathbb{R}^{m}$ represent deformation properties of the object and the base mesh.
Unlike the affine model used in~\cite{navarro2014visual} that approximates deformation in Euclidean space, this affine model defined in the modal space is decoupled and low-dimensional.
Thus $\mathbf{Q}$ is a diagonal matrix:
\begin{equation}
    \mathbf{Q}(\boldsymbol{\theta}) = diag{
    \begin{bmatrix}
  \theta_1, & \theta_2,  & ... & \theta_m
   \end{bmatrix}
    }
\end{equation}
consisting of a low-dimensional unknown modal parameter vector $\boldsymbol{\theta} = {
    \begin{bmatrix}
  \theta_1, & \theta_2,  & ... & \theta_m
   \end{bmatrix}
    }^T \in \mathbb{R}^{m}$.
\begin{remark}
Note that the linearization of equation~\eqref{eq:equi_eq} and the affine deformation model~\eqref{eq:jacobian_parameter}
are local quasi-static approximations of the object deformation.
Similar techniques are widely used in the existing Jacobian-based deformation controllers (such as~\cite{navarro2016automatic}~\cite{hirai2000indirect}~\cite{navarro2014visual}~\cite{yu2022shape}), 
whose aim is not developing an accurate deformation model but approximating how slow manipulation in a small time interval transforms into changes of the deformation features.
Under Assumption 3, our method uses the approximations in a closed-loop controller to determine proper combinations of the manipulation velocities that would result in the desired changes of deformation features.
We also design adaptive laws to online update the approximations such that their effective ranges can be continuously extended.
\end{remark}

\subsection{Deformation Jacobian Matrix}
Based on equation~\eqref{eq:jacobian_mesh} and~\eqref{eq:jacobian_parameter}, we derive the following Jacobian relationship:
\begin{equation}
  \begin{aligned}
      \dot{\mathbf{s}}(t) 
      & = \mathbf{Q}(\boldsymbol{\theta})(\widetilde{\mathbf{K}}+\mathbf{I}_6)^{-1}[\mathbf{\Phi}_n]_r^T\mathbf{N}_r^T(\boldsymbol{\eta}(\mathbf{p}_r) ,\mathbf{n}_r)\dot{\mathbf{x}}(\mathbf{p}_r,t) \\
      & = \mathbf{J}_s(\boldsymbol{\theta}) \dot{\mathbf{x}}(\mathbf{p}_r,t)
  \end{aligned}
  \label{eq:J}
\end{equation}
where $\mathbf{J}_s(\boldsymbol{\theta}) \in \mathbb{R}^{m \times 3k}$ is our deformation Jacobian matrix.
Unlike some existing works~\cite{navarro2016automatic}~\cite{navarro2018fourier}~\cite{lagneau2020active} that treat the whole deformation Jacobian matrix as unknown, our Jacobian matrix is analytically formulated using mode shapes and with only low-dimensional unknown parameters.

\section{Adaptive Deformation Controller}
This section designs an adaptive deformation controller with online estimation of the unknown Jacobian parameters.
The controller's block diagram is shown in Fig.~\ref{fig:algorithm_blockgram}.

\begin{figure}[b]
    \vspace{-0.2cm}
    \centering
    \includegraphics[width=\linewidth]{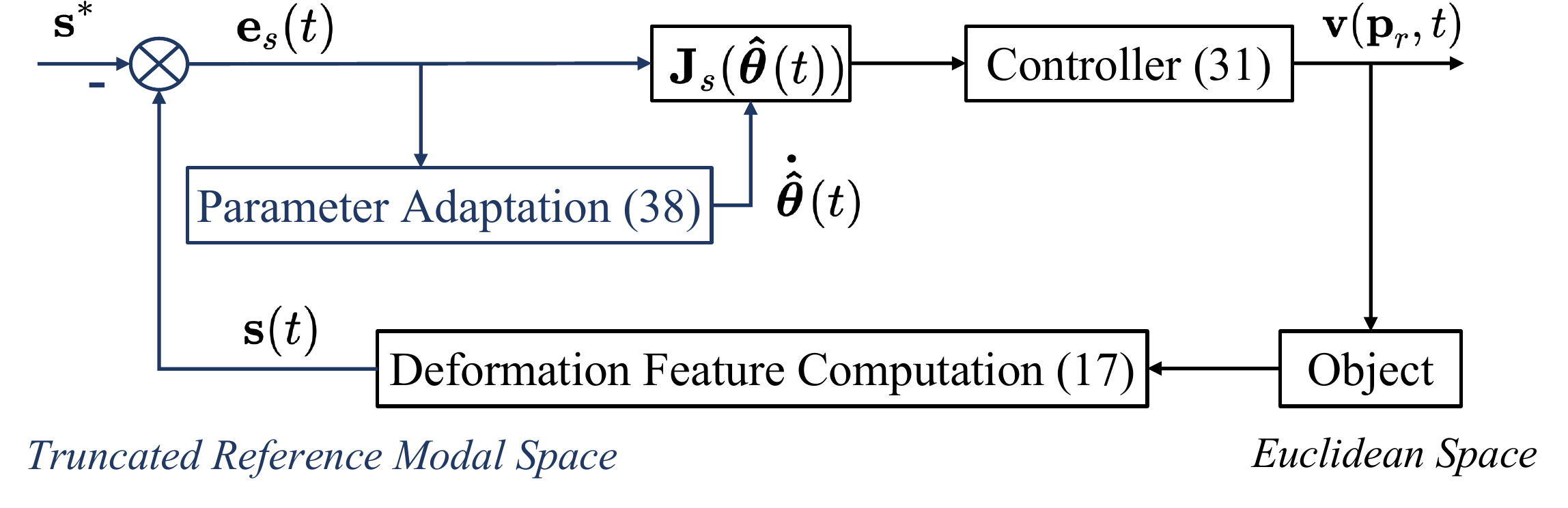}
    \vspace{-0.7cm}
    \caption{Block diagram of our deformation controller. We use the dark blue color to present the parts in the truncated reference modal space.
    }
    \label{fig:algorithm_blockgram}
\end{figure}

\subsection{Kinematic Control Law}
Given the deformation feature errors computed with:
\begin{equation}
    \mathbf{e}_s (t) =  \mathbf{s}(t) - {\mathbf{s}}^*,
    \label{eq:e_s}
\end{equation}
we propose the following kinematic control law using the transpose of the online estimated deformation Jacobian matrix:
\begin{equation}
    \mathbf{v}(\mathbf{p}_r,t) = -\mathbf{K}_s \mathbf{J}_s^T(\boldsymbol{\hat{\theta}}(t)) \mathbf{e}_s (t)
    \label{eq:v_cmd}
\end{equation}
where the linear velocities $\mathbf{v}(\mathbf{p}_r,t)$ are the manipulation commands for $\mathbf{p}_r$;
the transposed Jacobian matrix $\mathbf{J}_s^T(\boldsymbol{\hat{\theta}}(t)) \in \mathbb{R}^{3k \times m}$ is computed using the online estimated parameter $\boldsymbol{\hat{\theta}}(t) \in \mathbb{R}^m$;
$\mathbf{K}_s \in \mathbb{R}^{3k \times 3k}$ is a positive definite gain matrix called the feedback gain.
Then, we can obtain the closed-loop error dynamics by combining equation~\eqref{eq:v_cmd}, equation~\eqref{eq:e_s}, and equation~\eqref{eq:J}:
\begin{equation}
  \begin{aligned}
     \dot{\mathbf{e}}_s(t) 
     &= \mathbf{J}_s(\boldsymbol{\theta})( -\mathbf{K}_s \mathbf{J}_s^T(\boldsymbol{\hat{\theta}}(t)) \mathbf{e}_s (t)) \\
     &= - \mathbf{J}_s(\boldsymbol{\theta})\mathbf{K}_s\mathbf{J}_s^T(\boldsymbol{\hat{\theta}}(t)) \mathbf{e}_s (t).
  \end{aligned}
  \label{eq:closed-loop}
\end{equation}

We here discuss
the physical interpretation of our transposed-Jacobian-based controller.
Expanding the transpose of the deformation Jacobian matrix, we have:
\begin{equation}
\begin{aligned}
    \mathbf{J}_s^T(\boldsymbol{\hat{\theta}}(t)) 
    & = \mathbf{N}_r(\boldsymbol{\eta}(\mathbf{p}_r),\mathbf{n}_r)[\mathbf{\Phi}_n]_r(\widetilde{\mathbf{K}}+\mathbf{I}_6)^{-T}\mathbf{Q}^T(\boldsymbol{\hat{\theta}}(t)) \\
    & = \mathbf{N}_r(\boldsymbol{\eta}(\mathbf{p}_r),\mathbf{n}_r)[\mathbf{\Phi}_n]_r(\widetilde{\mathbf{K}}+\mathbf{I}_6)^{-1}\mathbf{Q}(\boldsymbol{\hat{\theta}}(t))
\end{aligned}
\end{equation}
where $\mathbf{Q}(\boldsymbol{\hat{\theta}}(t))$ and $(\widetilde{\mathbf{K}}+\mathbf{I}_6)^{-1}$ are the estimated parameter matrix and the force-displacement rectified term in the truncated reference modal space;
$\mathbf{N}_r(\boldsymbol{\eta}(\mathbf{p}_r),\mathbf{n}_r)$ represents the object-mesh mappings for the robot manipulation;
as $det(\mathbf{\Phi}_n) \ne 0$, the sub-matrix $[\mathbf{\Phi}_n]_r$ relates a small change of the displacements of the manipulated nodes to a small change of the intermediate model-free variable $\widetilde{u}(t)$.
Thus, the term:
\begin{equation}
    \mathbf{N}_r(\boldsymbol{\eta}(\mathbf{p}_r),\mathbf{n}_r)[\mathbf{\Phi}_n]_r \in \mathbb{R}^{3k \times m}
    \label{eq:reference_direction}
\end{equation}
provides a deformation Jacobian relationship for the base mesh manipulation, whose direction is determined by the object-mesh mapping and the low-frequency modes of the base mesh.
As discussed in~\cite{pentland1991recovery}, low-frequency modes are mainly determined by the low-order moments of inertia, thus compact bodies of similar sizes have similar low-frequency modes.
In this way, if the selected base mesh is of similar size to the object, $[\mathbf{\Phi}_n]_r$ provides mass-scaled similar low-frequency modes of the object's low-frequency modes.
We can thus regard $\mathbf{N}_r(\boldsymbol{\eta}(\mathbf{p}_r),\mathbf{n}_r)[\mathbf{\Phi}_n]_r$ as a reference direction of the object's deformation Jacobian matrix and simply initialize the unknown modal parameters to be ones (indicating no \textit{a priori} knowledge).
By comparison, some existing works~\cite{navarro2016automatic}~\cite{navarro2018fourier}~\cite{lagneau2020active} need an extra testing stage to initialize the whole deformation Jacobian matrix.

\subsection{Online Parameter Estimation}
Our parameter updating law is formulated by compensating for the estimation errors in the closed-loop system online.
To do that, we first add a zero term to the closed-loop error dynamics~\eqref{eq:closed-loop} and rewrite the equation as:
\begin{equation}
  \begin{aligned}
      \dot{\mathbf{e}}_s(t) 
      & = [-\mathbf{J}_s(\boldsymbol{\hat{\theta}}(t)) + (\mathbf{J}_s(\boldsymbol{\hat{\theta}}(t))- \mathbf{J}_s(\boldsymbol{\theta}))]\mathbf{K}_s\mathbf{J}_s^T(\boldsymbol{\hat{\theta}}(t)) \mathbf{e}_s (t)
      \\
      & = 
      -\mathbf{J}_s(\boldsymbol{\hat{\theta}}(t))\mathbf{K}_s\mathbf{J}_s^T(\boldsymbol{\hat{\theta}}(t)) \mathbf{e}_s (t) \\
      & + 
     (\mathbf{J}_s(\boldsymbol{\hat{\theta}}(t))- \mathbf{J}_s(\boldsymbol{\theta}))\mathbf{K}_s\mathbf{J}_s^T(\boldsymbol{\hat{\theta}}(t))\mathbf{e}_s (t).
  \end{aligned}
  \label{eq:closed_regression}
\end{equation}
Then, we define the following parameter errors $\Delta \boldsymbol{\theta}(t) \in \mathbb{R}^m$ with:
\begin{equation}
    \Delta \boldsymbol{\theta}(t) = \boldsymbol{\hat{\theta}}(t) - \boldsymbol{\theta}
\end{equation}
which can be used to linearize the estimation errors in the closed-loop error dynamics with:
\begin{equation}
    \begin{aligned}
    &(\mathbf{J}_s(\boldsymbol{\hat{\theta}}(t))- \mathbf{J}_s(\boldsymbol{\theta}))\mathbf{K}_s\mathbf{J}_s^T(\boldsymbol{\hat{\theta}}(t))\mathbf{e}_s (t) =( \mathbf{Q}(\boldsymbol{\hat{\theta}}(t)) - \mathbf{Q}(\boldsymbol{\theta})) \cdot \\
    &
    (\widetilde{\mathbf{K}}+\mathbf{I}_6)^{-1}[\mathbf{\Phi}_n]_r^T\mathbf{N}_r^T(\boldsymbol{\eta}(\mathbf{p}_r),\mathbf{n}_r)
    \mathbf{K}_s\mathbf{J}_s^T(\boldsymbol{\hat{\theta}}(t)) \mathbf{e}_s(t) \\
    &=\mathbf{Y}(\mathbf{e}_s(t) ,\boldsymbol{\hat{\theta}}(t) )\Delta\boldsymbol{\theta}(t) 
    \end{aligned}
    \label{eq:regression}
\end{equation}
where the regression matrix $\mathbf{Y}(\mathbf{e}_s(t) ,\boldsymbol{\hat{\theta}}(t)) \in \mathbb{R}^{m \times m}$ is independent of the unknown parameters $\boldsymbol{\theta}$.
Afterward, according to the Slotine-Li algorithm~\cite{slotine1987adaptive}, we propose the following online parameter updating law:
\begin{equation}
    \boldsymbol{\dot{\hat{\theta}}}(t)=-\Gamma^{-1}
    \mathbf Y^T(\mathbf{e}_s(t) ,\boldsymbol{\hat{\theta}}(t) ){\mathbf e}_s(t)
    \label{eq:theta_dot}
\end{equation}
where $\Gamma$ is a positive definite scalar called the parameter updating gain.

\begin{remark}
The linear parametrization of the closed-loop error dynamics enables us to design the adaptive control laws (the equation~\eqref{eq:v_cmd} and~\eqref{eq:theta_dot}) with guaranteed stability (the detailed proof will be given in the next subsection).
The objective of equation~\eqref{eq:theta_dot} is updating the parameters to minimize the deformation feature errors $\mathbf{e}_s(t)$ rather than identifying the true parameter values that can fit some input-output measurements of the system. 
Thus, unlike some existing shape deformation controllers (such as~\cite{navarro2018fourier}~\cite{lagneau2020active}~\cite{hu20193}~\cite{ficuciello2018fem}) that need to learn or estimate the deformation models/parameters accurately,
our controller is stable without requiring true parameter identification.
\end{remark}

\subsection{Stability Analysis}
In this subsection, we analyze the stability of our deformation controller under the proposed adaptive laws.

\newtheorem{theorem}{Theorem}
\begin{theorem}
Under the control of the controller~\eqref{eq:v_cmd} and the adaptive algorithm~\eqref{eq:theta_dot} for parameter estimation, the deformation feature errors are convergent in the following way:
\begin{equation}
    \lim_{t \to \infty} \mathbf{J}^T _s(\boldsymbol{\hat{\theta}}(t)) \mathbf{e}_s (t) = \boldsymbol{0}.
\end{equation}
\end{theorem}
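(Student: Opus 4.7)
The proof plan is a standard Lyapunov-plus-Barbalat argument tailored to the Slotine--Li adaptive scheme. Define the composite Lyapunov candidate
\begin{equation}
V(t)=\tfrac{1}{2}\mathbf{e}_s^{T}(t)\mathbf{e}_s(t)+\tfrac{1}{2}\Delta\boldsymbol{\theta}^{T}(t)\,\Gamma\,\Delta\boldsymbol{\theta}(t),
\end{equation}
which is positive definite and radially unbounded in $(\mathbf{e}_s,\Delta\boldsymbol{\theta})$ because $\Gamma>0$. I would then differentiate $V$ along trajectories, using the rewritten closed-loop error dynamics in equation~\eqref{eq:closed_regression}, the linear parametrization~\eqref{eq:regression}, and the constancy of the true parameter vector $\boldsymbol{\theta}$ (so $\dot{\Delta\boldsymbol{\theta}}=\dot{\hat{\boldsymbol{\theta}}}$).

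The key step is the cross-term cancellation. Substituting~\eqref{eq:closed_regression} into $\mathbf{e}_s^{T}\dot{\mathbf{e}}_s$ and the updating law~\eqref{eq:theta_dot} into $\Delta\boldsymbol{\theta}^{T}\Gamma\dot{\hat{\boldsymbol{\theta}}}$ yields
\begin{equation}
\dot{V}=-\mathbf{e}_s^{T}\mathbf{J}_s(\hat{\boldsymbol{\theta}})\mathbf{K}_s\mathbf{J}_s^{T}(\hat{\boldsymbol{\theta}})\mathbf{e}_s+\mathbf{e}_s^{T}\mathbf{Y}(\mathbf{e}_s,\hat{\boldsymbol{\theta}})\Delta\boldsymbol{\theta}-\Delta\boldsymbol{\theta}^{T}\mathbf{Y}^{T}(\mathbf{e}_s,\hat{\boldsymbol{\theta}})\mathbf{e}_s,
\end{equation}
so the last two terms cancel and, because $\mathbf{K}_s\succ 0$,
\begin{equation}
\dot{V}=-\bigl\|\mathbf{K}_s^{1/2}\mathbf{J}_s^{T}(\hat{\boldsymbol{\theta}})\mathbf{e}_s\bigr\|^{2}\le 0.
\end{equation}
This immediately gives $V\in\mathcal{L}_{\infty}$, hence $\mathbf{e}_s(t)$ and $\Delta\boldsymbol{\theta}(t)$ (and therefore $\hat{\boldsymbol{\theta}}(t)$) are bounded, and $\mathbf{K}_s^{1/2}\mathbf{J}_s^{T}(\hat{\boldsymbol{\theta}})\mathbf{e}_s\in\mathcal{L}_{2}$.

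To upgrade $\mathcal{L}_2$-integrability to the pointwise limit in the theorem, I would invoke Barbalat's lemma, for which I must show that $\dot{V}$ is uniformly continuous in $t$, or equivalently that $\tfrac{d}{dt}\bigl(\mathbf{J}_s^{T}(\hat{\boldsymbol{\theta}})\mathbf{e}_s\bigr)$ is bounded. The main technical obstacle lies here: I must verify that $\mathbf{J}_s(\hat{\boldsymbol{\theta}})$ is bounded (which follows because $\hat{\boldsymbol{\theta}}$ is bounded and $\mathbf{J}_s$ is affine in $\hat{\boldsymbol{\theta}}$ by the diagonal structure of $\mathbf{Q}(\hat{\boldsymbol{\theta}})$ together with the constant factors $(\widetilde{\mathbf{K}}+\mathbf{I}_6)^{-1}$, $[\mathbf{\Phi}_n]_r^{T}$, $\mathbf{N}_r^{T}$), that $\mathbf{e}_s$ is bounded, that $\dot{\mathbf{e}}_s$ is bounded (via~\eqref{eq:closed-loop} and the boundedness of $\mathbf{J}_s(\boldsymbol{\theta})$), and that $\dot{\hat{\boldsymbol{\theta}}}$ is bounded (via~\eqref{eq:theta_dot}, since $\mathbf{Y}$ is a continuous function of bounded signals). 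With all these in hand, $\tfrac{d}{dt}(\mathbf{J}_s^{T}(\hat{\boldsymbol{\theta}})\mathbf{e}_s)$ is bounded, $\dot{V}$ is uniformly continuous, and Barbalat's lemma applied to $\int_0^{t}\dot{V}\,d\tau\ge -V(0)$ gives $\dot{V}\to 0$, whence $\mathbf{K}_s^{1/2}\mathbf{J}_s^{T}(\hat{\boldsymbol{\theta}}(t))\mathbf{e}_s(t)\to 0$ and, since $\mathbf{K}_s\succ 0$, the stated conclusion
\begin{equation}
\lim_{t\to\infty}\mathbf{J}_s^{T}(\hat{\boldsymbol{\theta}}(t))\mathbf{e}_s(t)=\boldsymbol{0}.
\end{equation}
I would close with a brief remark that Theorem~1 does \emph{not} claim $\hat{\boldsymbol{\theta}}\to\boldsymbol{\theta}$, consistent with Remark~3: the adaptation drives $\mathbf{e}_s$ into the null space of $\mathbf{J}_s^{T}(\hat{\boldsymbol{\theta}})$ rather than identifying the true modal parameters, and when $\mathbf{J}_s(\hat{\boldsymbol{\theta}})$ retains full column rank (as implied by Assumption~4 and the rank condition~\eqref{eq:full_rank}), this forces $\mathbf{e}_s\to\boldsymbol{0}$.
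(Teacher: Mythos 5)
Your proof is correct and takes essentially the same route as the paper's: the identical Lyapunov candidate $V=\tfrac{1}{2}\mathbf{e}_s^T\mathbf{e}_s+\tfrac{\Gamma}{2}\Delta\boldsymbol{\theta}^T\Delta\boldsymbol{\theta}$, the same cross-term cancellation between the linearly parametrized estimation error $\mathbf{Y}\Delta\boldsymbol{\theta}$ and the Slotine--Li update law, the same $\dot V=-\mathbf{e}_s^T\mathbf{J}_s(\hat{\boldsymbol{\theta}})\mathbf{K}_s\mathbf{J}_s^T(\hat{\boldsymbol{\theta}})\mathbf{e}_s\le 0$, and the same Barbalat argument (you are in fact more explicit than the paper about why $\dot V$ is uniformly continuous, which the paper attributes to the quasi-static assumption). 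The only caveat is your closing aside: full column rank of $\mathbf{J}_s\in\mathbb{R}^{m\times 3k}$ does \emph{not} force $\mathbf{e}_s\to\boldsymbol{0}$ when $m>3k$, since $\mathbf{J}_s^T$ then has a nontrivial kernel --- the paper itself points out that in this over-determined case $\mathbf{e}_s$ may converge to a local-minimum steady-state error rather than to zero; this does not affect the proof of the theorem as stated.
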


\begin{proof}
We introduce the following Lyapunov candidate function:
\begin{equation}
    \begin{aligned}
    V(t)
    =
    & \frac{1}{2}\mathbf e_s^T(t){\mathbf e}_s(t) 
    +
    \frac{\Gamma}{2}\Delta\boldsymbol{\theta}^T(t)\Delta\boldsymbol{\theta}(t).
    \label{eq:V}
    \end{aligned}
\end{equation}
Differentiating equation \eqref{eq:V} with respect to time, we obtain:
\begin{equation}
  \begin{aligned}
    \dot V(t)
    &=  {\mathbf e}_s^T(t){\dot{\mathbf e}}_s(t)+
    \Gamma \Delta{\boldsymbol{\dot\theta}}^T(t)  \Delta\boldsymbol{\theta}(t) \\
    & = {\mathbf e}_s^T(t){\dot{\mathbf e}}_s(t)+
    \Gamma \boldsymbol{\dot{\hat{\theta}}}^T(t) \Delta\boldsymbol{\theta}(t). 
  \end{aligned}
    \label{eq:V_dot}
\end{equation}
Substituting the closed-loop error dynamics~\eqref{eq:closed_regression}, we get:
\begin{equation}
  \begin{aligned}
     \dot V(t) = 
    &-{\mathbf e}_s^T(t)\mathbf{J}_s(\boldsymbol{\hat{\theta}}(t))\mathbf{K}_s\mathbf{J}^T _s(\boldsymbol{\hat{\theta}}(t)) \mathbf{e}_s (t)\\
    &+{\mathbf e}_s^T(t)(\mathbf{J}_s(\boldsymbol{\hat{\theta}}(t))- \mathbf{J}_s(\boldsymbol{\theta}))\mathbf{K}_s
    \mathbf{J}^T _s(\boldsymbol{\hat{\theta}}(t)) \mathbf{e}_s (t)\\
    & +
    \Gamma \dot{\hat{\boldsymbol{\theta}}}^T(t) \Delta\boldsymbol{\theta}(t). 
  \end{aligned}
\end{equation}
Then injecting equation~\eqref{eq:theta_dot} into it, we have:
\begin{equation}
    \begin{aligned}
    \dot V(t) = 
    &-{\mathbf e}_s^T(t)\mathbf{J}_s(\boldsymbol{\hat{\theta}}(t))\mathbf{K}_s\mathbf{J}^T _s(\boldsymbol{\hat{\theta}}(t)) \mathbf{e}_s (t)\\
    &+{\mathbf e}_s^T(t)(\mathbf{J}_s(\boldsymbol{\hat{\theta}}(t))- \mathbf{J}_s(\boldsymbol{\theta}))\mathbf{K}_s
    \mathbf{J}^T _s(\boldsymbol{\hat{\theta}}(t)) \mathbf{e}_s (t)\\
    &-{\mathbf e}_s^T(t)\mathbf{Y}(\mathbf{e}_s(t) ,\boldsymbol{\hat{\theta}}(t) )\Delta\boldsymbol{\theta}(t).
    \end{aligned}
\end{equation}
Further injecting equation~\eqref{eq:regression}, we obtain:
\begin{equation}
    \begin{aligned}
    \dot V(t) = 
    &-{\mathbf e}_s^T(t)\mathbf{J}_s(\boldsymbol{\hat{\theta}}(t))\mathbf{K}_s\mathbf{J}^T _s(\boldsymbol{\hat{\theta}}(t)) \mathbf{e}_s (t)\\
    &+{\mathbf e}_s^T(t)(\mathbf{J}_s(\boldsymbol{\hat{\theta}}(t))- \mathbf{J}_s(\boldsymbol{\theta}))\mathbf{K}_s
    \mathbf{J}^T _s(\boldsymbol{\hat{\theta}}(t)) \mathbf{e}_s (t)\\
    &-{\mathbf e}_s^T(t)(\mathbf{J}_s(\boldsymbol{\hat{\theta}}(t))- \mathbf{J}_s(\boldsymbol{\theta}))\mathbf{K}_s
    \mathbf{J}^T _s(\boldsymbol{\hat{\theta}}(t)) \mathbf{e}_s (t).
    \end{aligned}
\end{equation}
Thus:
\begin{equation}
    \begin{aligned}
       \dot V(t) = 
    &-{\mathbf e}_s^T(t)\mathbf{J}_s(\boldsymbol{\hat{\theta}}(t))\mathbf{K}_s\mathbf{J}^T _s(\boldsymbol{\hat{\theta}}(t)) \mathbf{e}_s (t).
    \end{aligned}
    \label{eq:vdot1}
\end{equation}
It is obvious that $\dot{V}(t) \le 0$, and hence $V(t) \le V(0)$ (i.e. $V(t)$ is upper bounded), which means that $\mathbf{e}_s(t)$ and $ \boldsymbol{\hat{\theta}}(t)$ are bounded.
Under our quasi-static deformation assumption, $\dot{\mathbf{e}}_s(t)$ and $\boldsymbol{\dot{\hat{\theta}}}(t)$ are also bounded, which further leads to that $\dot{V}(t)$ is uniformly continuous.
Thus, according to barbalat's lemma~\cite{slotine1991applied}, we can conclude that $\lim_{t \to \infty} \mathbf{J}^T _s(\boldsymbol{\hat{\theta}}(t)) \mathbf{e}_s (t) = \boldsymbol{0}$.
\end{proof}
Therefore, with the proposed controller, the deformation feature errors $\mathbf{e}_s(t)$ will be attracted to the invariant set in the null space $Ker(\mathbf{J}^T _s(\boldsymbol{\hat{\theta}}(t))$ as $t \rightarrow \infty$.
If $m = 3k$, $\mathbf{e}_s(t)$ will converge to zeros.
If $m > 3k$ (the case for most robotic tasks of shape deformation control), the control problem is over-determined. 
Then, the non-trivial null space of $\mathbf{J}^T _s(\boldsymbol{\hat{\theta}}(t))$ will give rise to local minima configurations where the vector $\mathbf{e}_s(t)$ converges to a steady-state error.
It is well-known that the over-determined controller may lead to local minima.
More information and strategies about the over-determined deformation controller and visual servoing controller can be found in~\cite{navarro2018fourier} and~\cite{chaumette1998potential}, respectively.

\begin{remark}
Existence of the nontrivial $Ker(\mathbf{J}^T _s(\boldsymbol{\hat{\theta}}(t))$ does not mean that local minima always exist.
According to~\cite{chaumette1998potential}, potential local minima are also due to the existence of unrealizable feature motion $\dot{\mathbf{s}}^\perp(t)$ that does not belong to the range space of $\mathbf{J}_s(\boldsymbol\theta)$:
\begin{equation}
    \dot{\mathbf{s}}^\perp(t) \in [Im(\mathbf{J}_s(\boldsymbol\theta))]^\perp = Ker(\mathbf{J}_s^T(\boldsymbol\theta)).
\end{equation}
On the other hand, potential local minima configurations
\begin{equation}
    (\mathbf{s}(t) - {\mathbf{s}}^*) \in Ker(\mathbf{J}^T _s(\boldsymbol{\hat{\theta}}(t)) \cap Ker(\mathbf{J}_s^T(\boldsymbol\theta))
\end{equation}
must be physically coherent, which means that the corresponding robot manipulation poses must exist under all the object's physical constraints including the manipulation constraints and the other unknown boundary conditions.
\end{remark}

\begin{figure}[t]
	\centering
	\begin{minipage}{\linewidth}
	\centering
	\includegraphics[width=\linewidth]{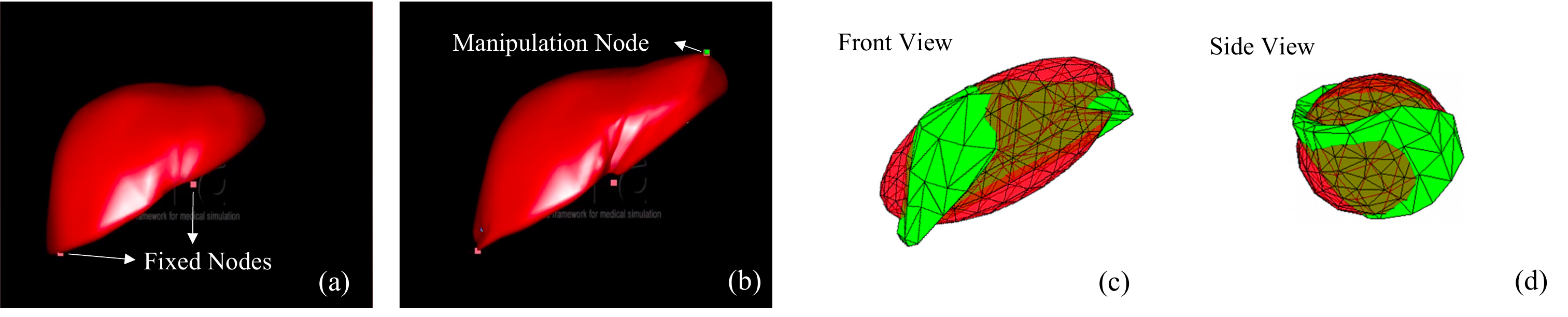}
	\vspace{-0.7cm}
    \caption{Simulation setup: (a) undeformed configuration of the liver-shaped object, where the pink nodes denote the fixed nodes, (b) manipulating the object by changing positions of the manipulation node (the green node), (c,d) comparisons of the base mesh (the red mesh) and the deformation model of the object (the green mesh).\\}
    \label{fig:simulation_setup}
	\end{minipage}
	\\
	\begin{minipage}{\linewidth}
	\centering
	\includegraphics[width=\linewidth]{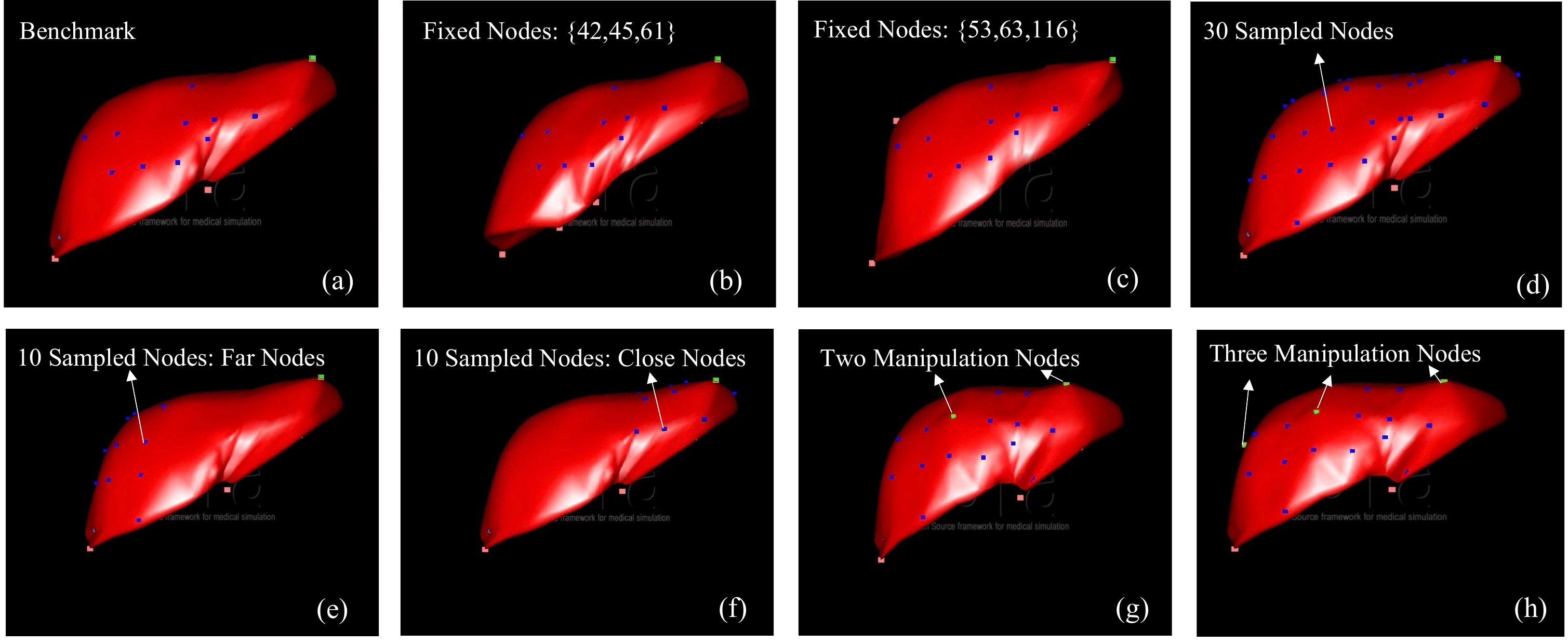}
	\vspace{-0.7cm}
    \caption{Desired configurations of different cases:
    (a) the benchmark case;
    (b) the case with fixed node indices of $\left \{ 42,45,61 \right \}$;
    (c) the case with fixed node indices of $\left \{ 53,63,116 \right \}$;
    (d) the case with $30$ sampled nodes (the blue nodes);
    (e) the far node case with the $10$ sampled nodes far from the manipulation node;
    (f) the close node case with the $10$ sampled nodes close to the manipulation node;
    (g) the case with two manipulation nodes;
    (h) the case with three manipulation nodes.}
    \label{fig:simulation_bc}
	\end{minipage}
	\vspace{-0.3cm}
\end{figure}

\section{Simulation Analysis}
\subsection{Simulation Setup}  
Simulation validations are conducted based on the Simulation Open Framework Architecture (SOFA)~\cite{allard2007sofa} platform within a liver-shaped object scene~\cite{sofascn.org}.
In the SOFA scene, the object has a visual model (the red liver surface in Fig.~\ref{fig:simulation_setup}(a,b)) and a deformation model (the green mesh in Fig.~\ref{fig:simulation_setup}(c,d)).
We can modify the physical properties of the deformation model and directly read or change the positions of its nodes with specific indices.
The object can be fixed in the simulation space by adding zero-displacement constraints to some fixed nodes (the pink nodes in Fig.~\ref{fig:simulation_setup} and Fig.~\ref{fig:simulation_bc}) on the deformation model.
We also set manipulation nodes (the green nodes in Fig.~\ref{fig:simulation_setup} and Fig.~\ref{fig:simulation_bc}) and sampled nodes (the blue nodes in Fig.~\ref{fig:simulation_bc}) on the deformation model to deform the object and to obtain surface samplings, respectively.
The base mesh (the red mesh in Fig.~\ref{fig:simulation_setup}(c,d)) is generated using the moment-based method proposed in~\cite{pentland1991closed} with the same unit system in the SOFA scene.
Fig.~\ref{fig:simulation_setup}(c,d) show the comparisons between the deformation model of the object and the base mesh used for control.
The material properties of the base mesh are set to be: $E=50$, $v=0.45$, and $M=20$.
The control gains are set to be:
$\mathbf{K}_s=80\mathbf{I}_{m \times m}$ and $\Gamma=500$.
The frequency of the simulation is 50Hz.

In our simulations, we first set a benchmark case (Fig.~\ref{fig:simulation_bc}(a)) with the following conditions:
the object's material and mass parameters are: $E=100,v=0.49,M=100$; 
the fixed node indices are: $\left \{ 3,39,64 \right \}$; 
desired deformations are generated by manipulating one node (of the index $85$) with the displacement vector of $\mathbf{u}(\mathbf{p}_r)^* = (1,1,0.8)^T$ (unit: voxel of the simulation scene); 
the sampled nodes are selected on the front side of the object; 
the dimension of the deformation features is $m=30$.
Then, for each group of simulations, we separately changed the object's modeling parameters, the boundary conditions, the sampled node distributions, the manipulation node numbers, and the deformation feature dimensions for controlled validations.
For different simulations, we developed deformation controllers using the same base mesh with fixed physical properties and pose parameters.

To analyze the convergence of our deformation features, we recorded the following feature error norm:
\begin{equation}
\left \| \mathbf{e}_{s}(t) \right \|
 = \sqrt{\mathbf{e}_s^T(t) \mathbf{e}_s(t)}
\end{equation}
Also, to analyze the convergence of the object's 3D shape, we recorded the following total mesh error sum:
\begin{equation}
    e_{x}(t) = (\mathbf{x}(\mathbf{n}_o,t) - \mathbf{x}^*(\mathbf{n}_o))^{T}(\mathbf{x}(\mathbf{n}_o,t) - \mathbf{x}^*(\mathbf{n}_o))
\end{equation}
where $\mathbf{x}(\mathbf{n}_o,t)$ is the position vector of the object's deformation model nodes $\mathbf{n}_o$ and $\mathbf{x}^*(\mathbf{n}_o)$ is the desired position vector.

\subsection{Simulation Cases}
\subsubsection{Simulations with different material and mass parameters}
We conducted a group of simulation cases with a large range of material and mass parameter changes of the object:
$E=100,v=0.49,M=100$ (the benchmark case in Fig.~\ref{fig:simulation_bc}(a)); $E=1000,v=0.48,M=3000$; $E=5000,v=0.47,M=100$;
$E=50000,v=0.4,M=3000$.
Simulation results in Fig.~\ref{fig:simulation_r_p_bc}(a) show the minimization of the magnitude errors $\left \| \mathbf{e}_{s}(t) \right \|$ and $e_{x}(t)$.
The start points of the curves for $\left \| \mathbf{e}_{s}(t) \right \|$ and $e_{x}(t)$
are different because: the same manipulation on objects with different material and mass parameters produced different deformation feature values.

\subsubsection{Simulations with different boundary conditions}
We conducted a group of simulation cases with different fixed node indices: $\left \{ 3,39,64 \right \}$ (the benchmark case in Fig.~\ref{fig:simulation_bc}(a)); $\left \{ 42,45,61 \right \}$ (Fig.~\ref{fig:simulation_bc}(b)); $\left \{ 53,63,116 \right \}$ (Fig.~\ref{fig:simulation_bc}(c)).
Simulation results in Fig.~\ref{fig:simulation_r_p_bc}(b) show the minimization of the magnitude errors $\left \| \mathbf{e}_{s}(t) \right \|$ and $e_{x}(t)$.
The differences of start points of the mesh error sum curves are much larger than the feature error norm curves.
This is because the local deformations of the sampled nodes were less different than the object's global deformation. 
It further implies that our method can control the 3D shape globally even with locally sampled nodes. 

\begin{figure}[t]
	\centering
	\begin{minipage}{\linewidth}
	\centering
	\includegraphics[width=\linewidth]{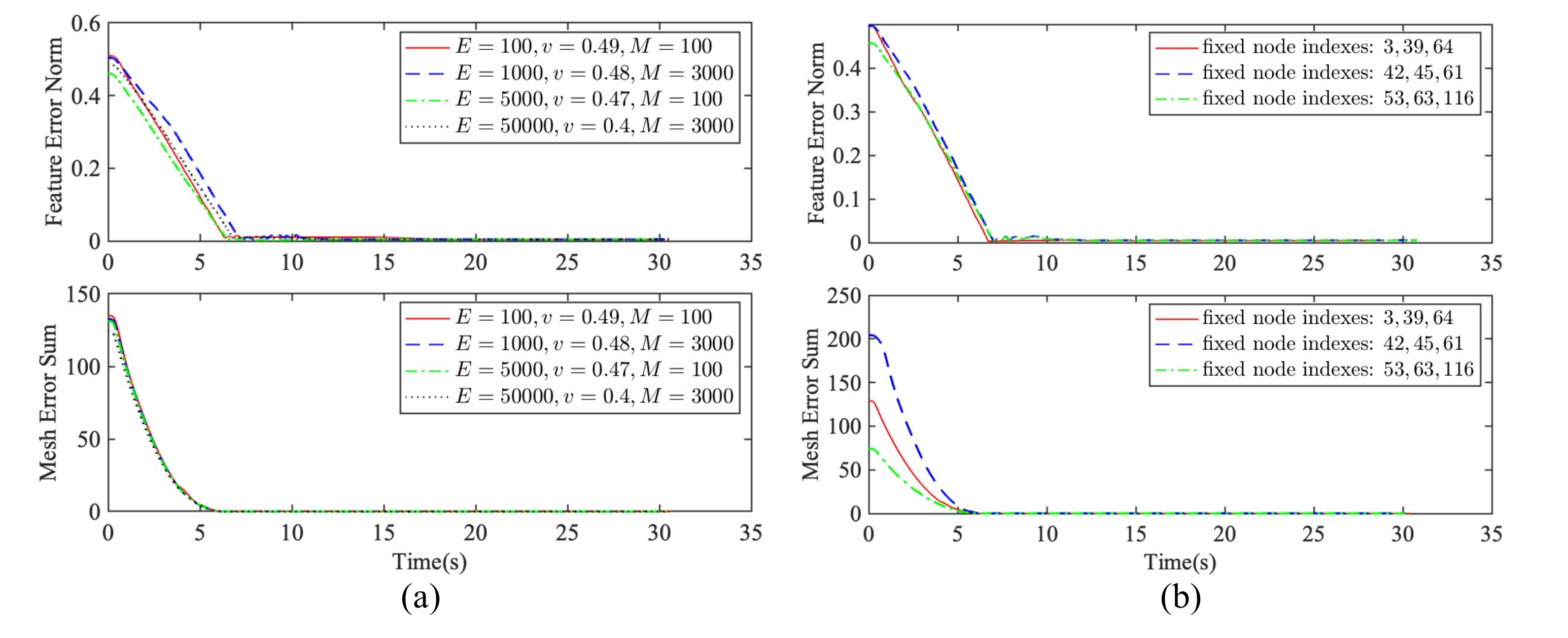}
     \vspace{-0.9cm}
    \caption{Simulation results with different: (a) material and mass parameters; (b) boundary conditions. \\}
    \label{fig:simulation_r_p_bc}
	\end{minipage}
		\begin{minipage}{\linewidth}
	\centering
    	\includegraphics[width=\linewidth]{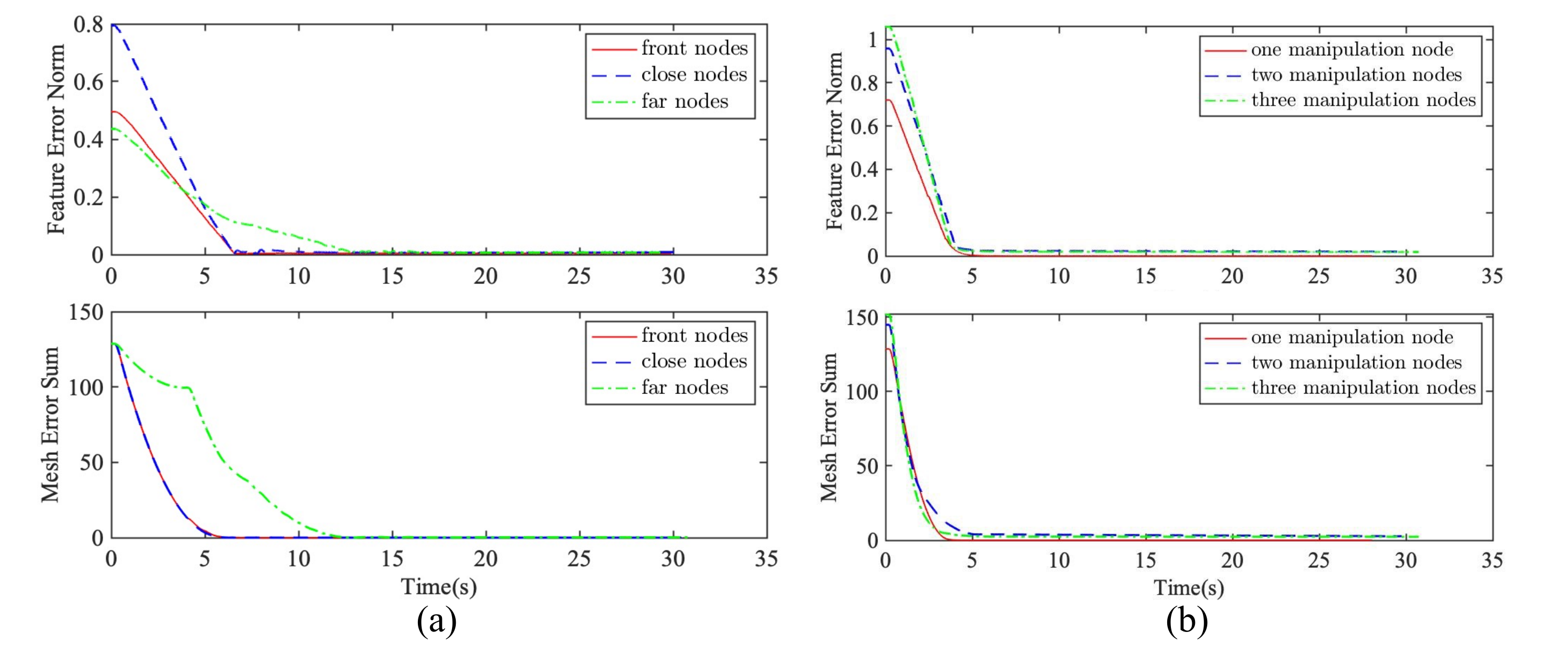}
	\vspace{-0.9cm}
    \caption{Simulation results with: (a) different sampled node distributions; (b) multiple manipulation nodes.\\ }
    \label{fig:simulation_mm_n}
	\end{minipage}
	\begin{minipage}{\linewidth}
	\centering
    \includegraphics[width=\linewidth]{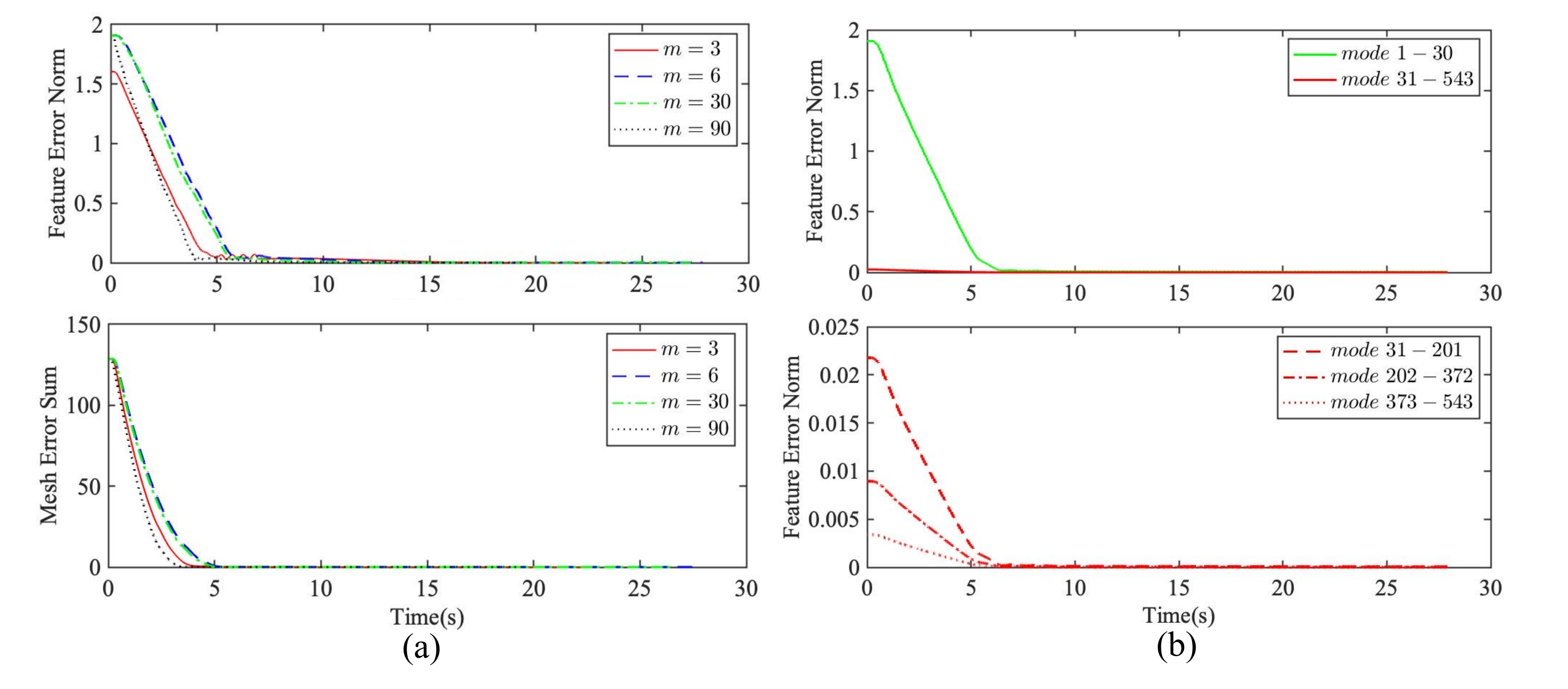}
    \vspace{-0.9cm}
    \caption{Simulation results for: (a) different deformation feature dimensions; (b) the controlled and the non-directly actuated modes.} 
    \label{fig:simulation_r_modes}
	\end{minipage}
	\vspace{-0.2cm}
\end{figure}

\subsubsection{Simulations with different sampled node distributions}
To validate our method with different distributions of the sampled nodes, we conducted three cases: the front node case (the benchmark case in Fig.~\ref{fig:simulation_bc}(a)),
the close node case (Fig.~\ref{fig:simulation_bc}(f)),
and the far node case (Fig.~\ref{fig:simulation_bc}(e)).
The results in Fig.~\ref{fig:simulation_mm_n}(a) show the minimization of the magnitude errors $\left \| \mathbf{e}_{s}(t) \right \|$ and $e_{x}(t)$.
The start points of the feature error norm curves are different because:
despite the same manipulation, different sampled nodes underwent different (local) deformations producing different deformation feature values.

In the far node case, the sampled nodes were distributed in a local region far from the manipulation nodes but close to the constrained nodes.
Their desired deformation was more complex and different than the desired manipulation, which affected the initial manipulation directions.
However, our controller kept adjusting the manipulation command to minimize $\mathbf{e}_s(t)$.
The turning points on the result curves happened when some of the dimensions of the manipulation velocity command changed between positive and negative.
We further conclude that the sampled node distributions that are more global or close to the manipulation node produce better initial manipulation directions.

\subsubsection{Simulations with multiple manipulation nodes}
To validate our method with multiple manipulation nodes, we conducted three simulation cases:
the case with one manipulation node (the benchmark case in Fig.~\ref{fig:simulation_bc}(a));
the case with two manipulation nodes (Fig.~\ref{fig:simulation_bc}(g));
the case with three manipulation nodes (Fig.~\ref{fig:simulation_bc}(h));
Results in Fig.~\ref{fig:simulation_mm_n}(b) show the minimization of the magnitude errors $\left \| \mathbf{e}_{s}(t) \right \|$ and $e_{x}(t)$.

\subsubsection{Simulations with different deformation feature dimensions}
We conducted a group of simulation cases with different deformation feature dimensions: $m=3$, $m=6$, $m=30$, and $m=90$.
As our method requires that $3n \ge m$, we increased the number of sampled nodes to 30 (Fig.~\ref{fig:simulation_bc}(d)) in these cases.
Results in Fig.~\ref{fig:simulation_r_modes}(a) show the minimization of the magnitude errors $\left \| \mathbf{e}_{s}(t) \right \|$ and $e_{x}(t)$ with different $m$.

\subsubsection{Behaviors of Non-Directly Actuated Modes}
We also analyze the behavior of the high-frequency modes that were not directly actuated in the simulation case with $m=30$.
As we can measure $181$ nodes from the object's deformation model, the amplitudes of all the $543$ modes were computed and recorded during the control process.
We compare the behaviors of the modes used in our controller (the $1$st-$30$th modes) and the non-directly actuated modes (the $31$th - $543$th modes) in Fig.~\ref{fig:simulation_r_modes}(b).
The results show that:
1) amplitudes of the non-directly actuated modes are much smaller than the controlled modes;
2) the non-directly actuated modes 
are similarly minimized as the controlled modes.
The reasons are:
modal amplitudes are inversely proportional to the square of their natural frequency~\cite{pentland1991closed};
even though not directly actuated, the high-frequency modes converge towards the desired values as the total mesh error sum $e_{x}(t)$ being minimized.

\begin{figure}[t]
    \centering
    \includegraphics[width=\linewidth]{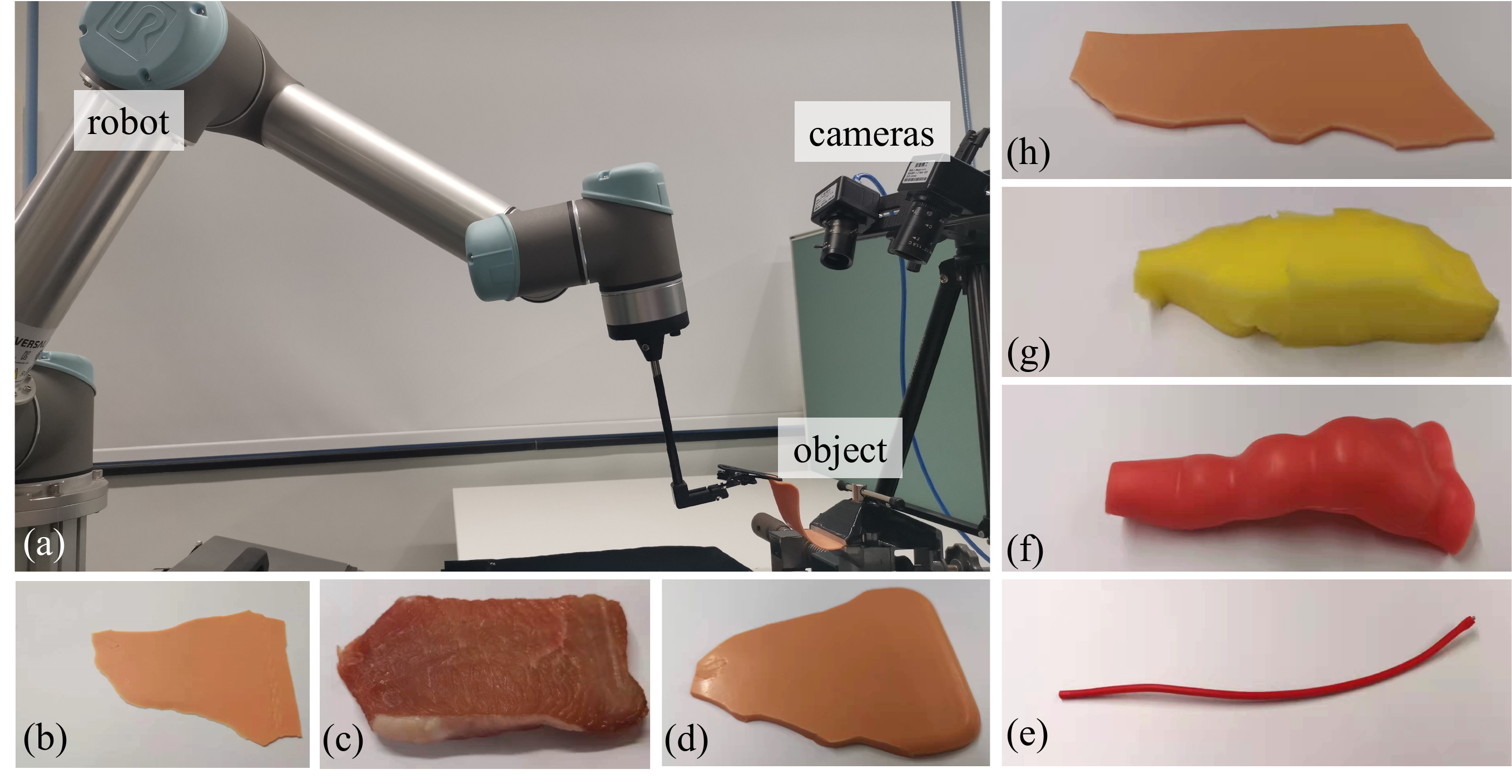}
    \caption{Experiment setup and the experimental objects.
    (a): the robot-camera platform; 
    (b): the deformable planar object (a piece of silicon skin); 
    (c): the pork tissue block;
    (d): the short silicon block; 
    (e): the deformable linear object (a piece of red electric wire);
    (f): the silicon colon model;
    (g): the sponge block;
    (h): the long silicon block.
    }
    \label{fig:experiment_setup}
    \vspace{-0.2cm}
\end{figure}

\begin{algorithm}[b]
	\caption{Controller($\mathbf{x}^*(\mathbf{p}_s)$, $a_z$, $m$, $E$, $v$, $M$, $\mathbf{K}_s$, $\Gamma$)}  
	\begin{algorithmic}[1]
	  \While {receiving vision and robot measurements} 
	  \State measure $\mathbf{x}(\mathbf{p}_s,t)$, $\mathbf{x}(\mathbf{p}_r,t)$,
	    ${}^{c}\mathbf{R}_e(t)$
	  \If{ $t = t_0$ }
	    \State generate the base mesh $\gets$ equation~(\ref{eq:base_mesh_size},\ref{eq:base_mesh_R},\ref{eq:base_mesh_t}), $a_z$
	    \State compute $\mathbf{\Phi}_n$ and $\widetilde{\mathbf{K}}$ $\gets$ $m$, $E$, $v$, $M$
	   \State establish the object-mesh mappings of $\mathbf{p}_s$ and $\mathbf{p}_r$
	    \State compute $\mathbf{s}^*$ $\gets$  equation \eqref{eq:point_displacement} and \eqref{eq:feature_computation}, $\mathbf{x}^*(\mathbf{p}_s)$
	    \State initialize $\boldsymbol{\theta}(t_0)$ with ones
	  \EndIf
	  \If{ $t > t_0$ }
	    \State compute $\mathbf{s}(t)$ $\gets$ equation \eqref{eq:point_displacement} and \eqref{eq:feature_computation}, $\mathbf{x}(\mathbf{p}_s,t)$
	    \While {robot is running}
	      \State compute $\mathbf{e}_s(t)$ $\gets$ equation \eqref{eq:e_s}
	      \State update $\boldsymbol{\hat\theta}(t)$ $\gets$ equation \eqref{eq:theta_dot}, $\Gamma$
	      \State compute $\mathbf{J}_s(\boldsymbol{\hat{\theta}}(t))$ $\gets$ equation $\eqref{eq:J}$
	      \State compute $\mathbf{v}(\mathbf{p}_r,t)$ $\gets$ equation $\eqref{eq:v_cmd}$, $\mathbf{K}_s$
	      \State send $\mathbf{v}(\mathbf{p}_r,t)$ to robot
	    \EndWhile
	  \EndIf
	  \EndWhile
	\end{algorithmic} 
\end{algorithm}

\section{Experimental Validation}
\subsection{Experiment Setup}
Our experiment platform consists of a Universal Robot 5 and two cameras.
Fig.~\ref{fig:experiment_setup} shows the experiment setup, where one end of the object is fixed in the space while the other end is rigidly grasped by a slim clip mounted on the robot end-effector. 
Fig.~\ref{fig:experiment_setup}(b-h) show the deformable objects used in experiments.
For different objects, their base meshes used for control are set with the same material and mass parameters ($E=1 GPa$, $v=0.4$, $M=0.05 kg$).
The control gains are set to be: $\mathbf{K}_s=0.1\mathbf{I}_{m \times m}$ and
$\Gamma=0.1$.
The frequency of the experiments is 30Hz.

In advance of the control phase, we calibrated the stereo cameras, the hand-eye relationship (the transformation matrix ${}^{r}\mathbf{T}_{c}$) and the hand-grasper relationship (the position vector ${}^{e}\mathbf{x}(\mathbf{p}_r)$ of the grasped point under the end-effector frame $\left \{ e \right \}$).
Desired deformations in our experiments were generated by manually manipulating the object to target configurations.
During the control phase,
we tracked some image features of the object to obtain surface samplings.
For base mesh generation, at $t_0$, and under our eye-grasper configuration, we selected four points among the surface samplings:
$\mathbf{p}_0$/$\mathbf{p}_1$ with the smallest/largest $x$ coordinate and the smallest $y$ coordinate;
$\mathbf{p}_2$/$\mathbf{p}_3$ with the smallest/largest $x$ coordinate and the largest $y$ coordinate.
We computed the base mesh's size parameters $\left \{ a_x, a_y \right \}$ with:
\begin{equation}
    \begin{aligned}
  & a_x = \frac{\left \| \mathbf{x}(\mathbf{p}_1, t_0) - \mathbf{x}(\mathbf{p}_0, t_0) \right \| + \left \| \mathbf{x}(\mathbf{p}_3, t_0) - \mathbf{x}(\mathbf{p}_2, t_0) \right \|}{4}, 
  \\
  & a_y = \frac{1}{2}\left \|  \frac{\mathbf{x}(\mathbf{p}_0, t_0)+\mathbf{x}(\mathbf{p}_1, t_0)}{2} - \frac{\mathbf{x}(\mathbf{p}_2, t_0)+\mathbf{x}(\mathbf{p}_3, t_0)}{2}
  \right \| .
  \end{aligned}
  \label{eq:base_mesh_size}
\end{equation}
We roughly estimated the object's thickness to set the size parameter $a_z$.
Then, to determine the base mesh pose ${}^c\mathbf{T}_b$,
we computed $\mathbf{c}(\mathbf{x}(\mathbf{p}_s,t_0)) \in \mathbb{R}^3$ (the mass center of the surface samplings) and measured ${}^{c}\mathbf{R}_e(t_0)$ (the orientation of the end-effector) to set:
\begin{equation}
    {}^c\mathbf{T}_b = 
    \begin{bmatrix}
    {}^{c}\mathbf{R}_e(t_0)  &  {}^c\mathbf{t}_b(t_0)\\
      \mathbf{0} & 1
\end{bmatrix},
\label{eq:base_mesh_R}
\end{equation}
\begin{equation}
    {}^c\mathbf{t}_b(t_0) = \mathbf{c}(\mathbf{x}(\mathbf{p}_s,t_0)) - a_z \begin{bmatrix}
  0 & 0 & 1
\end{bmatrix}^T.
\label{eq:base_mesh_t}
\end{equation}
When the robot was running, we sent the deformation feature errors to the controller, which computed linear velocity commands for the robot manipulation.
After transforming the velocity commands to the joint space of the robot,
we directly set the joint velocities to each of the robot joints via the low-level interface provided by URScript~\cite{URScript}.
Algorithm 1 shows the implementation of our deformation controller.

To analyze the convergence of our deformation features, we recorded the feature error norm $\left \| \mathbf{e}_{s}(t) \right \|$.
In addition, to analyze the convergence of the object's 3D shape,
we recorded the following manipulation errors:
\begin{equation}
    \mathbf{e}_d(t) = \mathbf{x}(\mathbf{p}_r,t)-\mathbf{x}^*(\mathbf{p}_r)
\end{equation}
where $\mathbf{x}(\mathbf{p}_r,t)$ is the measured position vector of the manipulation point $\mathbf{p}_r$; $\mathbf{x}^*(\mathbf{p}_r)$ is the target position vector of $\mathbf{p}_r$.
It should be mentioned that we only recorded $\mathbf{x}^*(\mathbf{p}_r)$ for result analysis.
We did not use it for robot control.
We also gave 2D image comparisons between the object's resulting and target shape by adding the desired image as a translucent mask on the resulting image.

\subsection{Validation Cases}

\subsubsection{Validations with Different Kinds of Objects}
\begin{figure}[t]
    \centering
    \includegraphics[width=\linewidth]{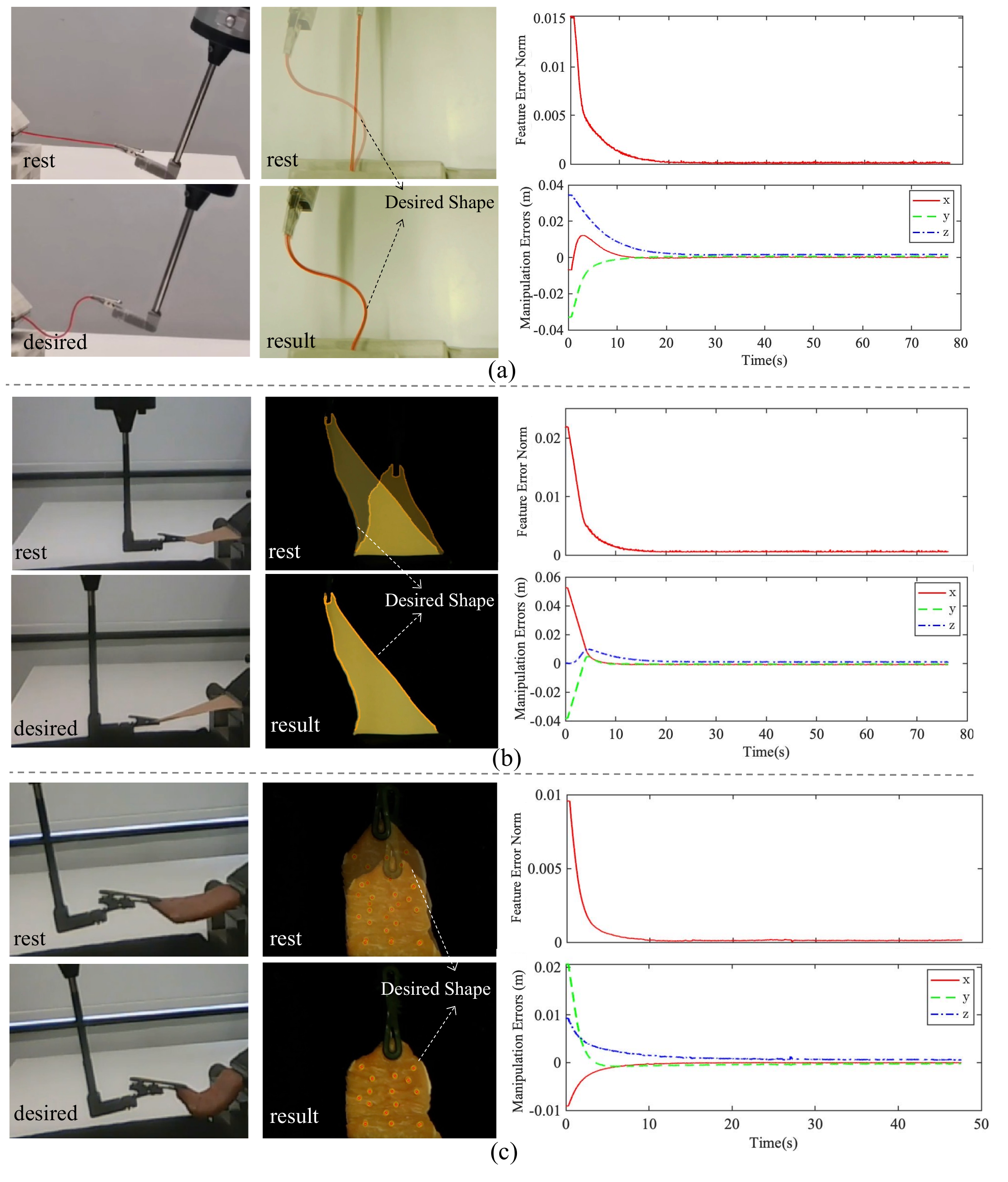}
    \vspace{-0.7cm}
    \caption{Cases with different kinds of objects:
    the first/second column shows the third/left camera views;
    the third column shows the result curves of the feature error norm ($\left \| \mathbf{e}_{s}(t) \right \|$) and the manipulation errors ($\mathbf{e}_d(t)$);
    (a): for the linear object;
    (b): for the planar object;
    (c): for the volumetric tissue.
    }
    \vspace{-0.2cm}
    \label{fig:object_kinds}
\end{figure}
To validate that our method can deal with different types of objects and the tracked image features, we conducted experimental cases using linear, planar, and volumetric objects under stereo measurements of points, curves, and contours.
According to~\cite{pentland1991recovery}~\cite{pentland1991closed}, using 30 lowest-frequency deformation modes, a wide range of nonrigid shapes can be described.
Aiming at the global 3D deformation control, we set the deformation feature dimensions to be: $m=30 \sim 50$.

The linear object case (Fig.~\ref{fig:object_kinds}(a)) used a piece of red wire.
The desired deformation was generated by pushing the wire forward while lifting it.
We obtained surface samplings from the wire's curve tracked by color segmentation.
We set the deformation feature dimension to be: $m=30$, and estimated the wire's thickness to be: ${a}_z = 0.001 (m)$.
The planar object case (Fig.~\ref{fig:object_kinds}(b)) used a piece of silicon skin.
The desired deformation was generated by pulling the silicon skin to left.
We obtained surface samplings from the silicon skin's contour tracked by color segmentation.
We selected the deformation feature dimension to be: $m=40$, and estimated the silicon skin's thickness to be: ${a}_z = 0.0005 (m)$.
The volumetric object case (Fig.~\ref{fig:object_kinds}(c)) used a pork tissue block.
The desired deformation was generated by lifting the tissue to right.
We obtained surface samplings by tracking points on the tissue surface.
We selected the deformation feature dimension to be: $m=30$, and estimated the tissue's thickness to be: ${a}_z = 0.0125 (m)$.

Result curves in Fig.~\ref{fig:object_kinds} show the minimization of both $\left \| \mathbf{e}_{s}(t) \right \|$ and $\mathbf{e}_d(t)$ for different cases.
Manipulation error curves present few steady-state errors (less than 3mm) because modal-based dimension reductions neglect some details of the object's shape deformation.
In addition, from the resulting images of the left camera views, we can see that the objects are manipulated to fit the desired shapes in a general sense.

\subsubsection{Validations with Base Mesh Size Errors}
To validate our method with base mesh size errors, we conducted experimental cases using the same object (the short silicon block in Fig.~\ref{fig:experiment_setup}(d)) with different-sized base meshes.
We set the same desired deformation from the rest configuration in Fig.~\ref{fig:base_mesh_size}(c) to the desired configuration in Fig.~\ref{fig:base_mesh_size}(d).
We obtained surface samplings (the red dots in Fig.~\ref{fig:base_mesh_size}(a,b)) from the object's contour tracked by color segmentation.
We selected four base meshes (Fig.~\ref{fig:base_mesh_size}(e)) with different size parameters (unit:m):
the red mesh with $\left \{ a_x, a_y,a_z\right \} = \left \{ 0.015, 0.05,0.0075\right \}$;
the green mesh with 
$\left \{ a_x, a_y,a_z\right \} = \left \{ 0.0175, 0.025,0.001\right \}$;
the black mesh with 
$\left \{ a_x, a_y,a_z\right \} = \left \{ 0.03, 0.045,0.0025\right \}$;
the blue mesh with
$\left \{ a_x, a_y,a_z\right \} = \left \{ 0.001, 0.055,0.001\right \}$.
We set the same base mesh pose according to equations~(\ref{eq:base_mesh_R},~\ref{eq:base_mesh_t}).
The deformation feature dimension was set to be: $m=15$.

Results in Fig.~\ref{fig:base_mesh_size}(f) show the minimization of $\left \| \mathbf{e}_{s}(t) \right \|$ for all cases.
Also, as we can see from Fig.~\ref{fig:base_mesh_size}(g), different base mesh sizes do influence the deformation processes (illustrated by the manipulation trajectories), but the target manipulation position can always be reached.
The more similar the size of the base mesh is to the object, the better the manipulation trajectory.
On the other hand, the initial directions of these manipulation trajectories are similar.
This is because: even projected to different base meshes, the primary directions of the manipulation point to fit the same desired deformation were similar.

\begin{figure}[b]
    \centering
    \vspace{-0.2cm}
    \includegraphics[width=\linewidth]{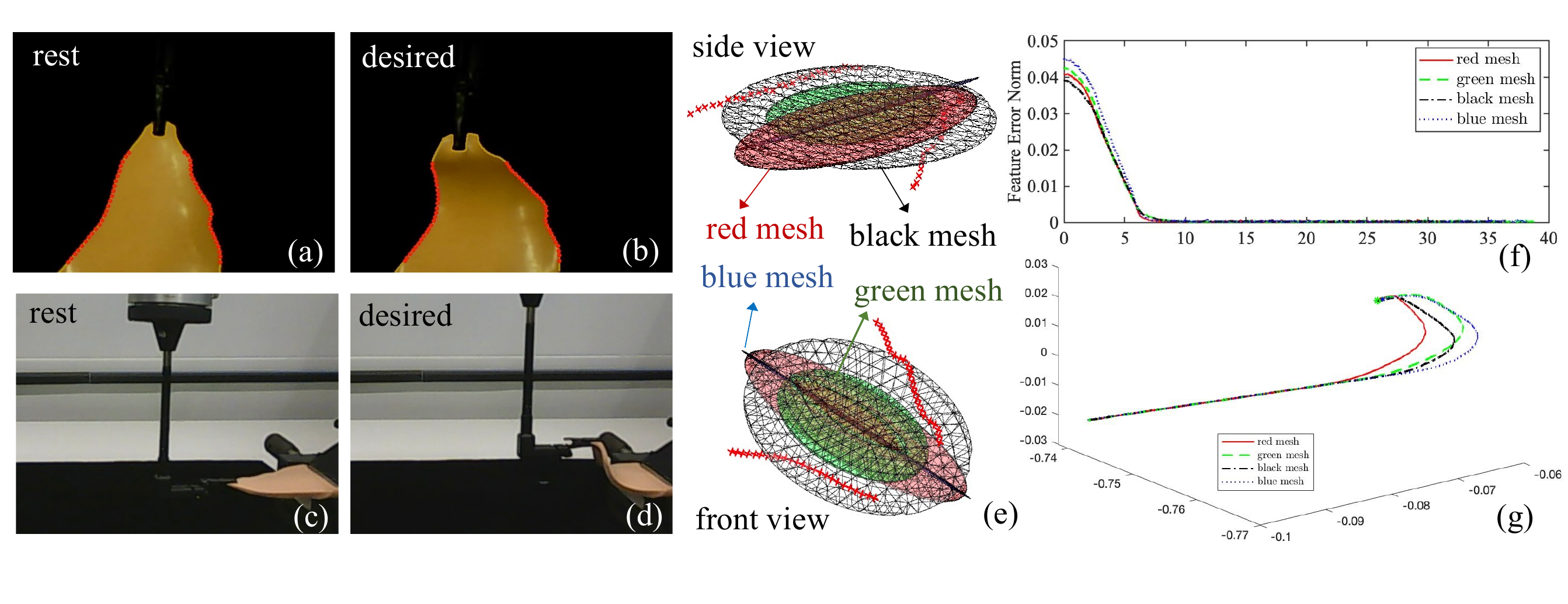}
    \vspace{-0.7cm}
    \caption{Cases with different base mesh size errors. (a)/(b): the left camera views of the rest/desired configuration, where the red dots denote the surface samplings; 
    (c)/(d): the third camera views of the rest/desired configuration;
    (e): the illustrations of the different-sized meshes with respect to the surface samplings (the red cross symbol) under the rest configuration;
    (f): the result curves of $\left \| \mathbf{e}_{s}(t) \right \|$;
    (e): the manipulation trajectories (of $\mathbf{x}(\mathbf{p}_r,t)$), where the green star symbol denotes the target manipulation position ($\mathbf{x}^*(\mathbf{p}_r)$).}
    \label{fig:base_mesh_size}
\end{figure}

\subsubsection{Validations with Different Rest Configurations}
\begin{figure}[b]
    \vspace{-0.2cm}
    \centering
    \includegraphics[width=\linewidth]{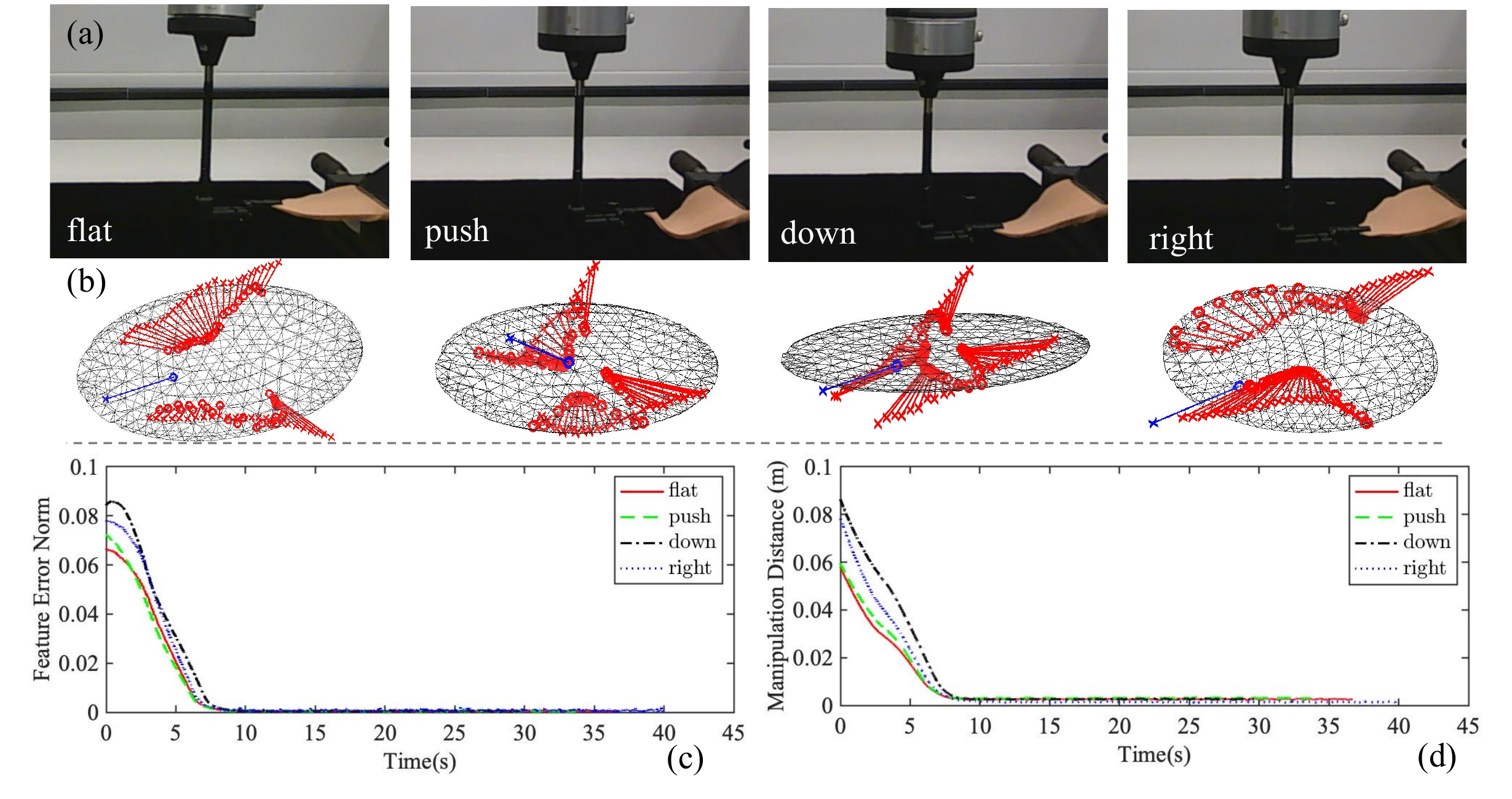}
    \vspace{-0.7cm}
    \caption{Cases with different rest configurations.
    (a): the third camera views;
    (b): surface samplings/manipulation points (the red/blue cross symbol) connecting (through red/blue lines) with their base mesh projection (the red/blue circles);
    (c): the result curves of $\left \| \mathbf{e}_{s}(t) \right \|$;
    (d): the result curves of the manipulation distance ($\left \| \mathbf{e}_{d}(t) \right \|$).}
    \label{fig:base_mesh_init}
\end{figure}

\begin{figure}[t]
    \centering
    \includegraphics[width=\linewidth]{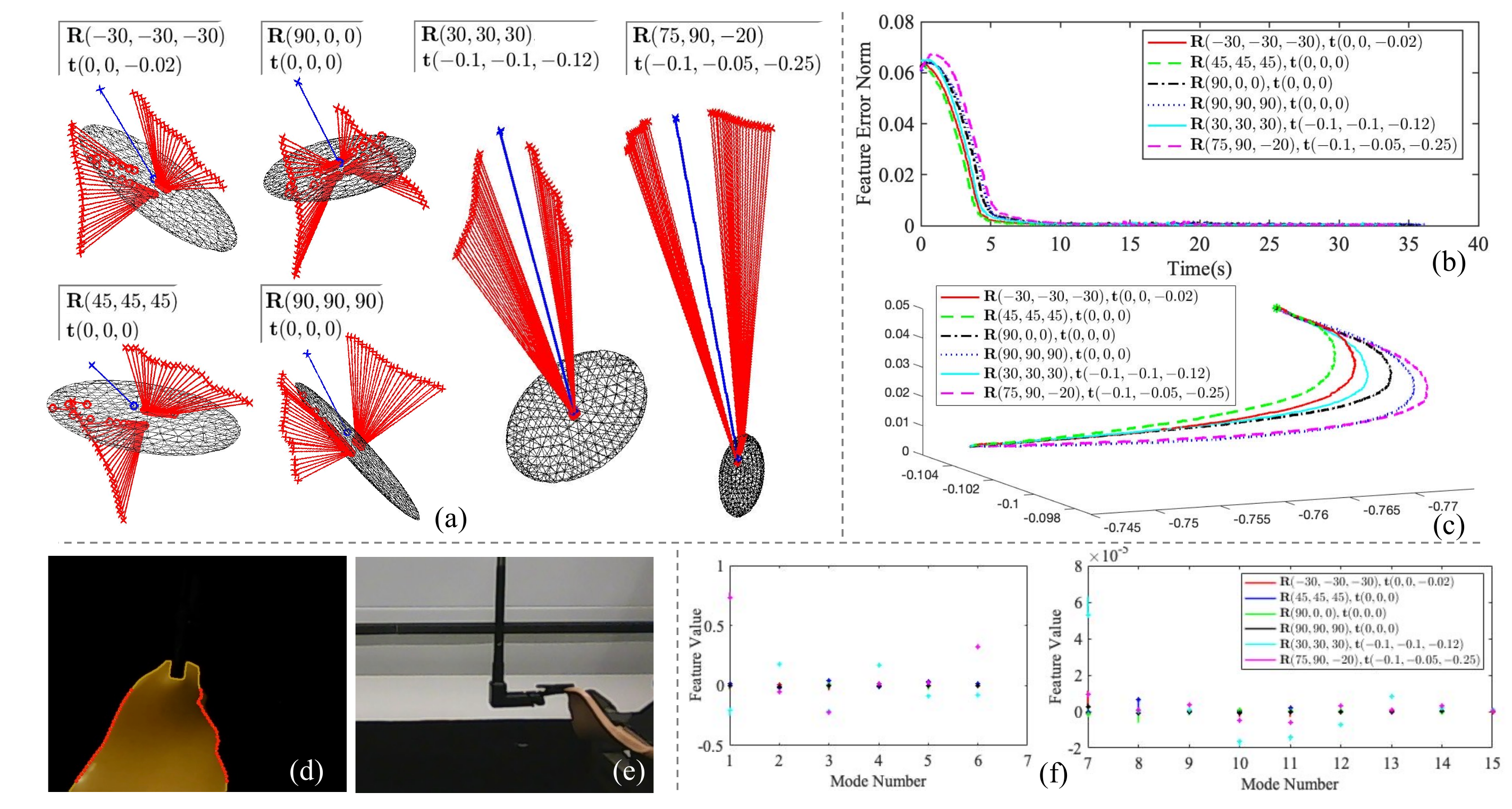}
    \vspace{-0.7cm}
    \caption{Cases with base mesh pose errors.
    (a) surface samplings and manipulation points connecting with their base mesh projections under the rest configuration;
    (b): the result curves of $\left \| \mathbf{e}_{s}(t) \right \|$;
    (c): the manipulation trajectories (of $\mathbf{x}(\mathbf{p}_r,t)$), where the green star symbol denotes the target manipulation position ($\mathbf{x}^*(\mathbf{p}_r)$);
    (d/e): the left/third camera views of the desired configuration;
    (f): deformation feature values of 1st-6th/7th-15th mode. The star symbols denote the desired feature values, which are connected by solid lines from the corresponding rest feature values.}
    \label{fig:base_mesh_large_pose}
    \vspace{-0.2cm}
\end{figure}

As we compute the base mesh pose using rest measurements of surface samplings, we further validate our method with different rest configurations. We selected the same object (the short silicon block in Fig.~\ref{fig:experiment_setup}(d)) but set four different rest configurations  (Fig.~\ref{fig:base_mesh_init}(a)):
the flat case (with the object being in a flat shape);
the push case (with the object being pushed forward);
the down case (with the object being put down);
the right case (with the object being pulled to right).
These different rest configurations led to different base mesh poses (Fig.~\ref{fig:base_mesh_init}(b)). 
Desired deformations in these cases were generated from their rest configurations to the same desired configuration (Fig.~\ref{fig:base_mesh_large_pose}(e)).
For different cases, we selected the same base mesh (the black mesh in Fig.~\ref{fig:base_mesh_size}).
The deformation feature dimension was set to be: $m=15$.
Results in Fig.~\ref{fig:base_mesh_init}(c,d) show the minimization of both $\left \| \mathbf{e}_{s}(t) \right \|$ and $\left \| \mathbf{e}_{d}(t) \right \|$.

\subsubsection{Validations with Base Mesh Pose Errors}
To validate our method with large pose errors between the base mesh and the object, we added an extra transformation to the roughly estimated base mesh pose:
\begin{equation}
    {}^c\mathbf{T}_b = 
    \begin{bmatrix}
    {}^{c}\mathbf{R}_e(t_0)  &  {}^c\mathbf{t}_b(t_0)\\
      \mathbf{0} & 1
\end{bmatrix} \cdot \mathbf{T}_{add}
\end{equation}
The extra transformation was formulated as:
\begin{equation}
    \mathbf{T}_{add} = 
    \begin{bmatrix}
    \mathbf{R}(\alpha_x,\alpha_y,\alpha_z) &  \mathbf{t}(\xi_x,\xi_y,\xi_z)  \\
  \mathbf{0} &  1
\end{bmatrix} 
\end{equation}
where $\mathbf{R}(\alpha_x,\alpha_y,\alpha_z) \in \mathbb{R}^{3 \times 3}$ is a rotation matrix with $\left \{ \alpha_i,i=x,y,z \right \}$ denoting the rotation angle (unit: degree) about the $i$-th axis; 
$\mathbf{t}(\xi_x,\xi_y,\xi_z) \in \mathbb{R}^{3}$ is a translation vector with $\left \{ \xi_i,i=x,y,z \right \}$ denoting the displacement component (unit: m) along the $i$-th axis.
We set the same object (the short silicon block in Fig.~\ref{fig:experiment_setup}(d)) and the same base mesh (the black mesh in Fig.~\ref{fig:base_mesh_size}).
As shown in Fig.~\ref{fig:base_mesh_large_pose}(a), we set six cases: 
the case p1 with 
$\mathbf{R}(-30,-30,-30)$ and $\mathbf{t}(0,0,-0.02)$;
the case p2 with $\mathbf{R}(45,45,45)$ and $\mathbf{t}(0,0,0)$; 
the case p3 with $\mathbf{R}(90,0,0)$ and $\mathbf{t}(0,0,0)$; 
the case p4 with $\mathbf{R}(90,90,90)$ and $\mathbf{t}(0,0,0)$; 
the case p5 with $\mathbf{R}(30,30,30)$ and $\mathbf{t}(-0.1,-0.1,-0.12)$;
the case p6 with $\mathbf{R}(75,90,-20)$ and $\mathbf{t}(-0.1,-0.05,-0.25)$.
For different cases, we generated the desired deformation from the flat case in Fig.~\ref{fig:base_mesh_init}(a) to the desired configuration in Fig.~\ref{fig:base_mesh_large_pose}(e).
The deformation feature dimension was set to be: $m=15$.

Results in Fig.~\ref{fig:base_mesh_large_pose}(b) show the minimization of $\left \| \mathbf{e}_{s}(t) \right \|$ for all cases.
Also, as we can see from Fig.~\ref{fig:base_mesh_large_pose}(c), different base mesh poses do influence the deformation processes (illustrated by the manipulation trajectories), but the target manipulation position can always be reached.
In Fig.~\ref{fig:base_mesh_large_pose}(b), the feature error norm curve of the case p6 presents an increasing trend at the beginning stage, which implies unsatisfactory initial manipulation commands.
The reason is: the pose errors were so large that the base mesh projection of the manipulation point was among the projections of the surface samplings.
Nevertheless, our controller still can achieve the minimization of the deformation feature errors.
It should be noted that the initial values of the deformation feature errors (the curves' start points in Fig.~\ref{fig:base_mesh_large_pose}(b)) of different cases are similar even with the large differences of their base mesh poses. This is because: the object-mesh pose errors,
which can be regarded as rigid motions absorbed in the first six modes,
mainly influence the (rest and desired) deformation feature values of the first six modes rather than the deformation feature errors.
It can be seen from Fig.~\ref{fig:base_mesh_large_pose}(f) that:
feature values of the first six modes are much larger than the other modes;
feature error values (the length of solid lines) are much smaller than the desired feature values (the star symbols' positions) in the first six modes.

\subsubsection{Validations with Complex-Shaped Objects of Similar Sizes}
\begin{figure}[t]
    \centering
    \includegraphics[width=\linewidth]{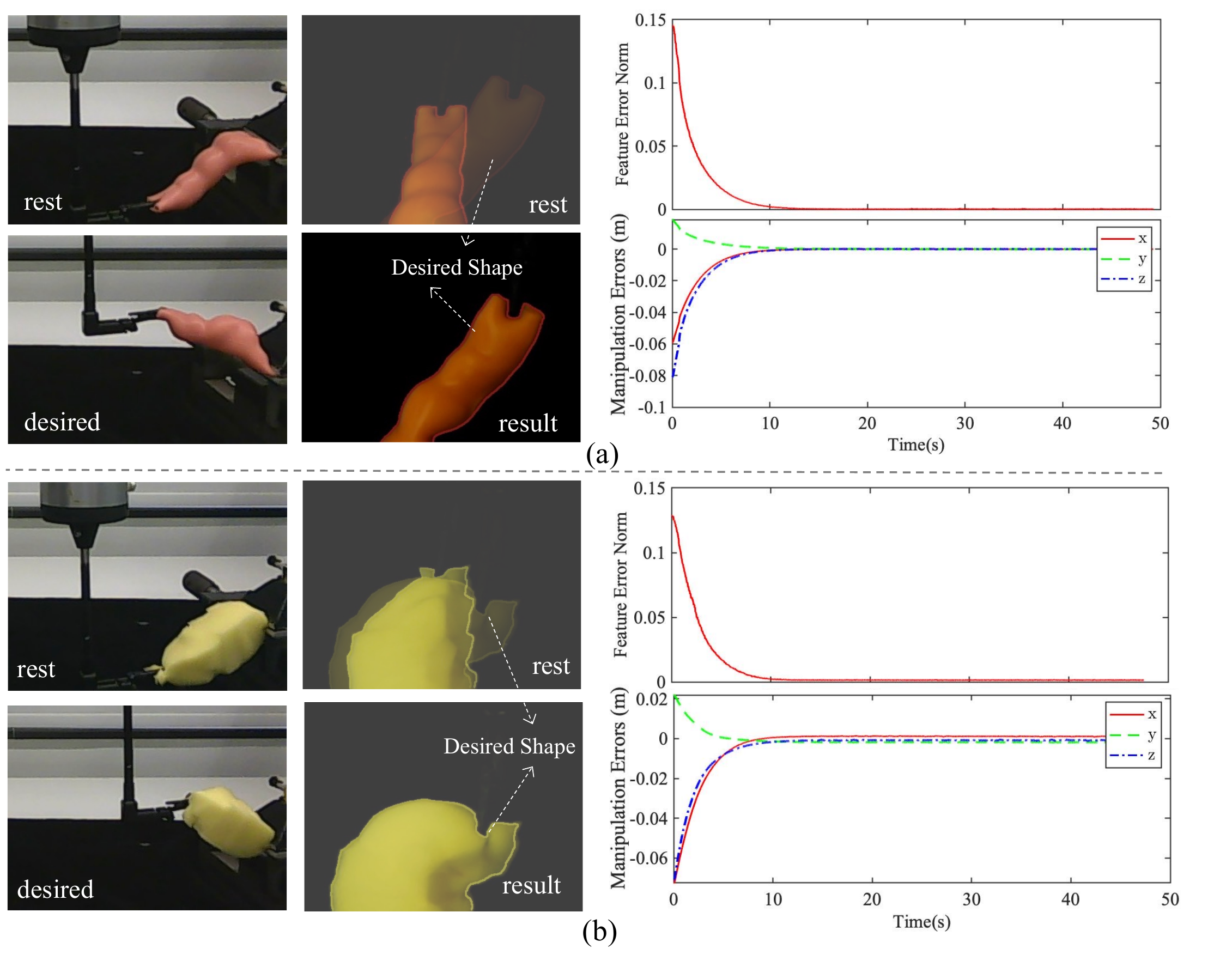}
    \vspace{-0.7cm}
    \caption{Cases with complex-shaped objects of similar sizes:
    the first/second column shows the third/left camera views;
    the third column shows the result curves of $\left \| \mathbf{e}_{s}(t) \right \|$ and $\mathbf{e}_{d}(t)$;
    (a): for the sponge block;
    (b): for the silicon colon model.
    }
    \label{fig:object_similar}
    \vspace{-0.2cm}
\end{figure}
As discussed in Section VI, different-shaped objects with similar sizes have similar low-frequency modes, which leads to different behaviors in Euclidean space but similar behaviors in the truncated modal space.
To validate our method for objects with different complex shapes but similar sizes, we conducted experiments using controllers with the same base mesh.
We selected an irregular-shaped sponge block (Fig.~\ref{fig:experiment_setup}(g)) and a silicon colon model (Fig.~\ref{fig:experiment_setup}(f)).
We set the black mesh in Fig.~\ref{fig:base_mesh_size} as the base mesh, and obtained surface samplings from the objects' contours tracked by color segmentation.
We set the same kind of desired deformation (Fig.~\ref{fig:object_similar}): lifting the object to the left while pushing it forward.
The deformation feature dimension was set to be: $m=15$.

Results in Fig.~\ref{fig:object_similar} show the minimization of both $\left \| \mathbf{e}_{s}(t) \right \|$ and $\mathbf{e}_d(t)$ for different cases.
Manipulation error curves present few steady-state errors (less than 2mm) because modal-based dimension reductions neglect some details of the object's shape deformation.
In addition, from the resulting images of the left camera views, we can see that the objects are manipulated to fit the desired shapes in a general sense.

\begin{figure}[t]
    \centering
    \includegraphics[width=\linewidth]{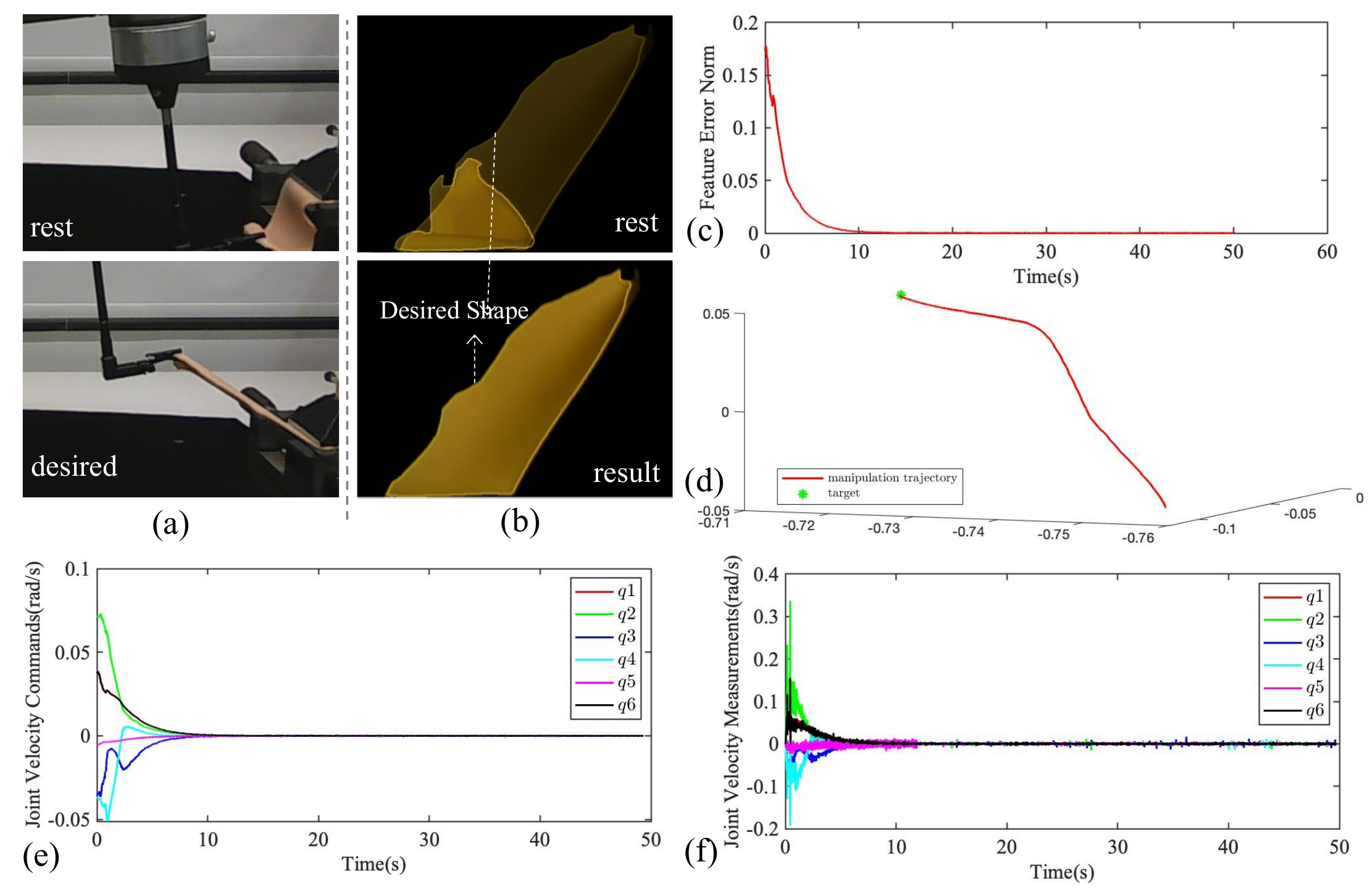}
    \vspace{-0.7cm}
    \caption{The case with large deformation.
    (a/b): the third/left camera view of the rest and desired configurations;
    (c): the result curve of $\left \| \mathbf{e}_{s}(t) \right \|$, where the small bulges are due to contour detection noises;
    (d): the manipulation trajectory (of $\mathbf{x}(\mathbf{p}_r,t)$), where the green star symbol denotes the target manipulation position ($\mathbf{x}^*(\mathbf{p}_r)$);
    (e): the curves of joint velocity commands;
    (f): the curves of computational joint velocity measurements.
    }
    \label{fig:large_deformation}
    \vspace{-0.2cm}
\end{figure}

\subsubsection{Validation with Large Deformation}
We also validated our method with large deformation.
It should be mentioned that the large desired deformation causes large initial deformation feature errors and fast initial manipulation commands, 
thus we need to slow down the parameter updating speed to avoid oscillations at the beginning of the control phase.
In the large deformation case, we adjusted the parameter updating gain to be: $\Gamma = 100$.
We selected the long silicon block (Fig.~\ref{fig:experiment_setup}(h)) as the object and the black mesh in Fig.~\ref{fig:base_mesh_size} as the base mesh.
We obtained surface samplings from the objects' contours tracked by color segmentation.
The deformation feature dimension was set to be: $m=50$.
To further analyze the control performance with large deformation, besides the deformation feature errors (Fig.~\ref{fig:large_deformation}(c)) and the manipulation trajectory (Fig.~\ref{fig:large_deformation}(d)),  we also plotted the velocity commands (Fig.~\ref{fig:large_deformation}(e)) and the computational speed measurements (Fig.~\ref{fig:large_deformation}(f)) in the joint space.
The results show that: the deformation feature errors are minimized; the target manipulation position is reached; the manipulation is smooth in the joint space.

\begin{figure}[t]
    \centering
    \vspace{0.1cm}
    \includegraphics[width=\linewidth]{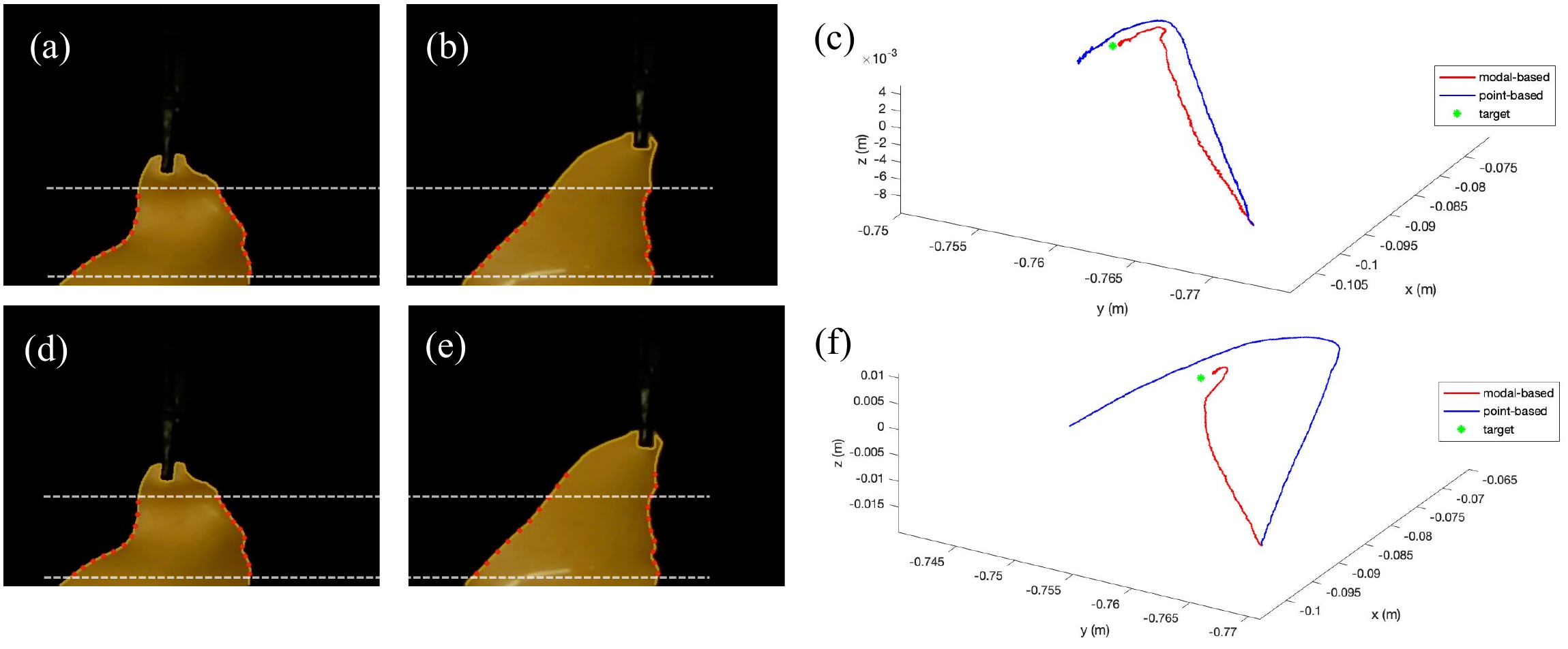}
    \vspace{-0.7cm}
    \caption{The sampling illustrations and the manipulation trajectories of the comparative study:
    the first/second column shows the left camera views of the different sampling methods at the rest/desired configurations;
    the third column shows manipulation trajectories ($\mathbf{x}(\mathbf{p}_r,t)$), where the green star symbol denotes the target manipulation position ($\mathbf{x}^*(\mathbf{p}_r)$);
    (a-c): for the "good" sampling case;
    (d-f): for the "bad" sampling case.
    }
    \label{fig:compare_tr}
    \vspace{-0.2cm}
\end{figure}

\begin{figure}[t]
    \centering
    \includegraphics[width=\linewidth]{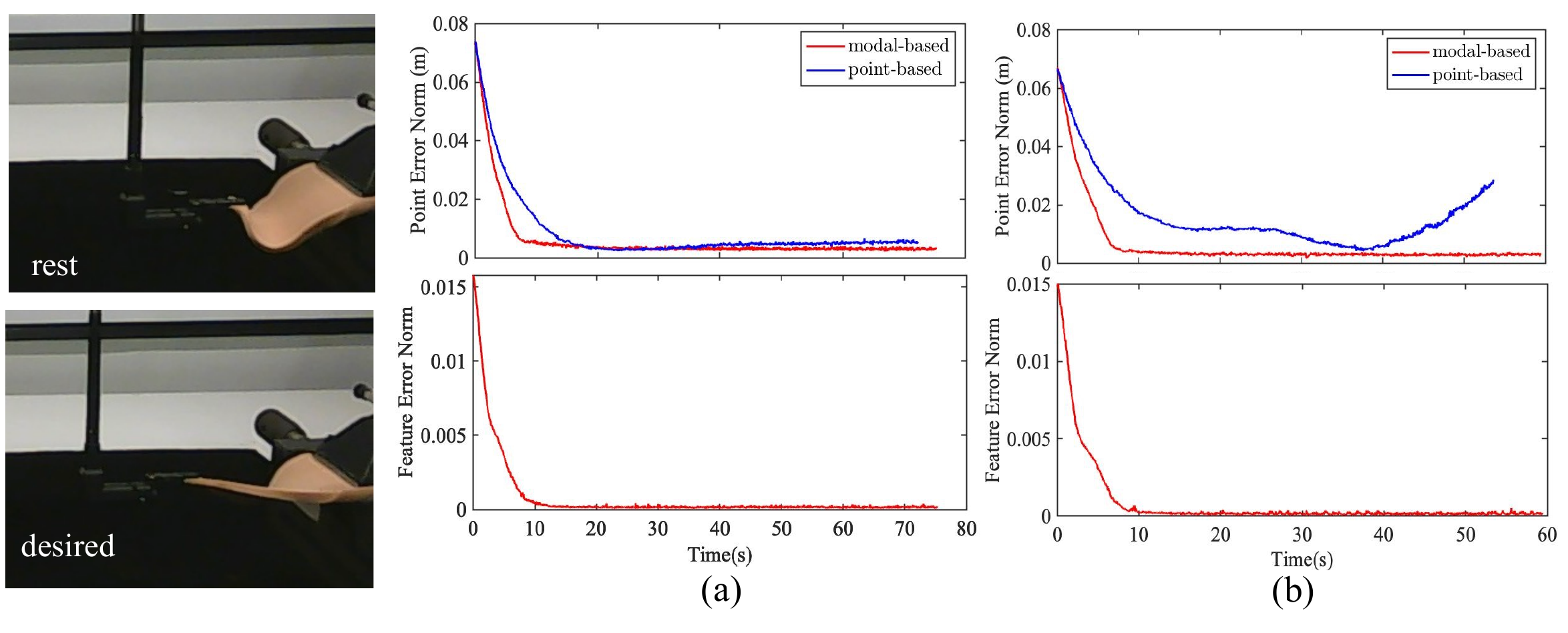}
    \vspace{-0.7cm}
    \caption{The setup and error curves of the comparative study:
    the first column shows the third camera views of the rest and desired configurations;
    the second/third column shows the curves of the point error norm ($\left \| \mathbf{x}(\mathbf{p}_s,t) - \mathbf{x}^*(\mathbf{p}_s) \right \|$) and the feature error norm ($\left \| \mathbf{e}_s(t) \right \|$) of the "good"/"bad" sampling case.
    }
    \label{fig:compare_error}
    \vspace{-0.1cm}
\end{figure}

\subsection{Comparative Study}
As our method belongs to the deformation-Jacobian-based approaches, 
we compared our method (the 3D deformation controller using modal-based global deformation features and modal-based deformation Jacobian matrix) with the baseline method (the 3D deformation controller using point-based local geometric features and point-based deformation Jacobian matrix) proposed by Navarro-Alarcon et al.~\cite{navarro2016automatic}.

We selected the same object (the short silicon block in Fig.~\ref{fig:experiment_setup}(d)) and the same desired deformation from the rest configuration to the desired configuration in Fig.~\ref{fig:compare_error}.
We sampled $20$ points from the object's contour tracked by color segmentation.
To unify the two methods to the same setup, we set the sampled points $\mathbf{p}_s$ as the control points of~\cite{navarro2016automatic} and formulated a point-based controller where the object's deformation is directly described by $\mathbf{x}(\mathbf{p}_s,t)$.
According to the baseline method, we considered the following point-based deformation Jacobian matrix between a small position change of the sampled points $\mathbf{p}_s$ and a small position change of the manipulation point $\mathbf{p}_r$:
$$
  \delta \mathbf{x}(\mathbf{p}_s) = \mathbf{J}_p \delta \mathbf{x}(\mathbf{p}_r)
$$
Based on the online estimation of the (whole) deformation Jacobian matrix $\hat{\mathbf{J}}_p(t)$, we computed the manipulation commands of the point-based controller with:
$$
\mathbf{v}(\mathbf{p}_r,t) = -\mathbf{K}_p \hat{\mathbf{J}}_p^{+}(t) (\mathbf{x}(\mathbf{p}_s,t) - \mathbf{x}^*(\mathbf{p}_s))
$$
where $\hat{\mathbf{J}}_p^{+}(t)$ is the pseudo-inverse matrix of $\hat{\mathbf{J}}_p(t)$.
We initialized $\hat{\mathbf{J}}_p$ by conducting test motions around the starting point.
As to our modal-based controller, we selected the black mesh in Fig.~\ref{fig:base_mesh_size} as the base mesh, and used $\mathbf{x}(\mathbf{p}_s,t)$ to compute the modal-based deformation features $\mathbf{s}(t)$.
The dimension of our modal-based deformation features was set to be: $m=20$.
During the robot manipulation, we recorded the manipulation trajectories of $\mathbf{x}(\mathbf{p}_r,t)$ and 
the point error norm ((i.e. $\left \| \mathbf{x}(\mathbf{p}_s,t) - \mathbf{x}^*(\mathbf{p}_s) \right \|$) for both controllers.
We also recorded $\left \| \mathbf{e}_s(t) \right \|$ for our modal-based controller.
We adjusted $\mathbf{K}_p$ and $\mathbf{K}_s$ such that the initial velocity amplitudes of the two controllers were similar (which can be seen from the similar initial slopes of the point error norm curves in Fig.~\ref{fig:compare_error}(a,b)).

We set two cases with different point sampling methods:
the "good" sampling case (Fig.~\ref{fig:compare_tr}(a,b)), where we sampled points by dividing the contour's arc length equally during the whole manipulation process;
the "bad" sampling case (Fig.~\ref{fig:compare_tr}(d,e)), where we only sampled points by dividing the arc length equally at $t_0$, and then sampled points with fixed 2D intervals (of the y-coordinate in the image plane of the left camera).
In this way, the "good" sampling case adjusted the point sampling along with the object deformation, and thus was able to sample the 3D object points from a small neighborhood of the initial sampled points.
On the other hand, the "bad" sampling case sampled the 3D object points that kept changing and getting farther away from the manipulation point.

Fig.~\ref{fig:compare_tr}(c) and Fig.~\ref{fig:compare_error}(a) show the results of the "good" sampling case.
We can conclude that compared to the point-based controller, our modal-based controller achieves:
1). smaller steady-state errors of both $\mathbf{e}_d(t)$ and $\left \| \mathbf{x}(\mathbf{p}_s,t) - \mathbf{x}^*(\mathbf{p}_s) \right \|$; 
2). faster minimization of the point errors;
3). better manipulation trajectories.
Fig.~\ref{fig:compare_tr}(f) and Fig.~\ref{fig:compare_error}(b) show the results of the "bad" sampling case.
Our modal-based controller successfully reaches the target manipulation position (with only a small steady-state error) while the point-based controller fails.
The reasons are: our modal-based features describe the global object deformation rather than the local 3D positions of the sampled points;
our modal-based features enable us to formulate the deformation Jacobian matrix with physically based reference directions.

\subsection{Cases with Unsatisfactory Situations}

\subsubsection{Cases with Unreachable Desired Deformation}
To validate our method with unreachable desired deformation,
we selected one manipulation point to generate the desired deformation (Fig.~\ref{fig:unreachable}(a)) while using a different manipulation point for control (Fig.~\ref{fig:unreachable}(b)).
We set the object to be the short silicon block (Fig.~\ref{fig:experiment_setup}(d)) and the base mesh to be the black mesh in Fig.~\ref{fig:base_mesh_size}.
The deformation feature dimension was set to be: $m=15$.
During the control phase, we recorded the point error norms ((i.e. $\left \| \mathbf{x}(\mathbf{p}_s,t) - \mathbf{x}^*(\mathbf{p}_s) \right \|$) and $\left \| \mathbf{e}_s(t) \right \|$. 
The results in Fig.~\ref{fig:unreachable}(c) show that the controller reaches a local minimum with a small steady-state error which means that the resulting deformation is close to the target deformation.

\begin{figure}[t]
    \centering
    \includegraphics[width=\linewidth]{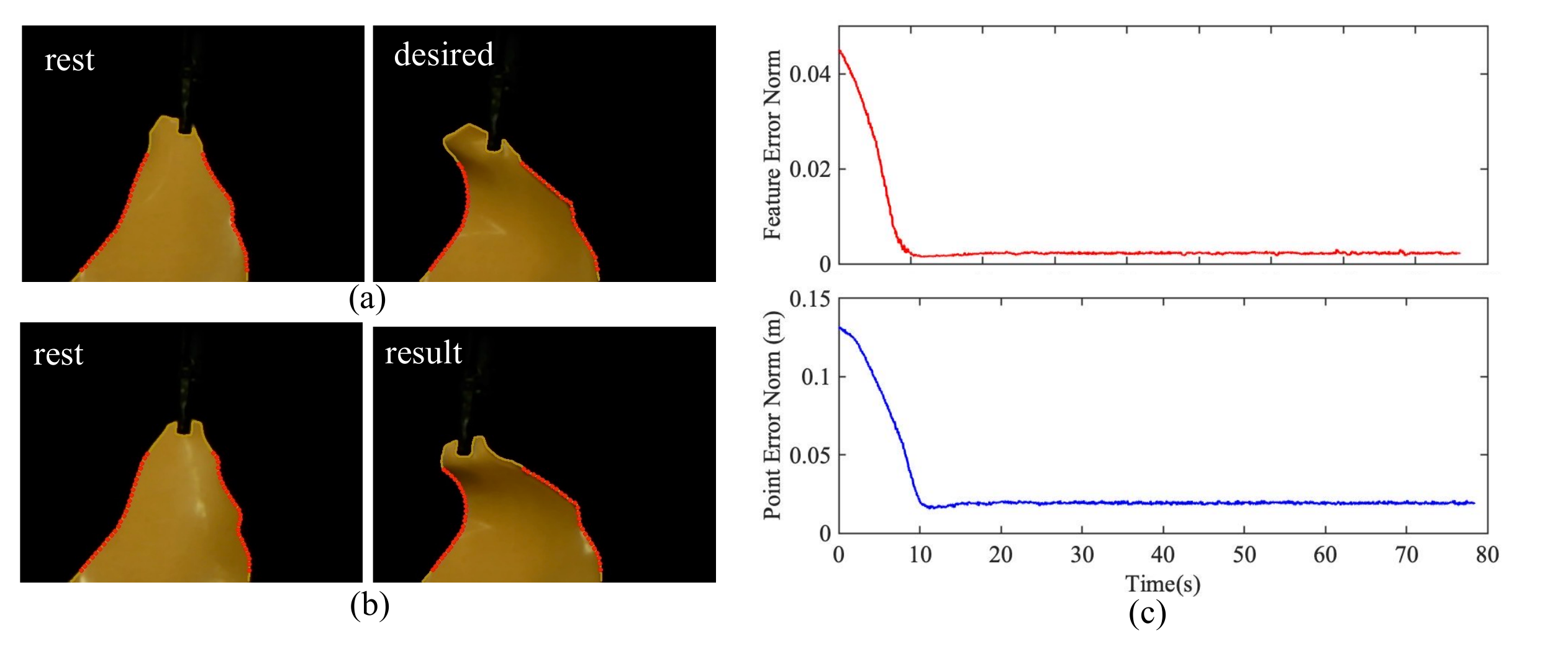}
    \vspace{-0.7cm}
    \caption{Results of the case with unreachable desired deformation:
    (a): the left camera views of the rest/desired configuration when generating the desired deformation;
    (b): the left camera views of the rest/result configuration for the control task;
    (c): the result curves of the feature error norm ($\left \| \mathbf{e}_s(t) \right \|$) and the point error norm ($\left \| \mathbf{x}(\mathbf{p}_s,t) - \mathbf{x}^*(\mathbf{p}_s) \right \|$).
    }
    \label{fig:unreachable}
    \vspace{-0.2cm}
\end{figure}
\begin{figure}[t]
    \centering
    \includegraphics[width=\linewidth]{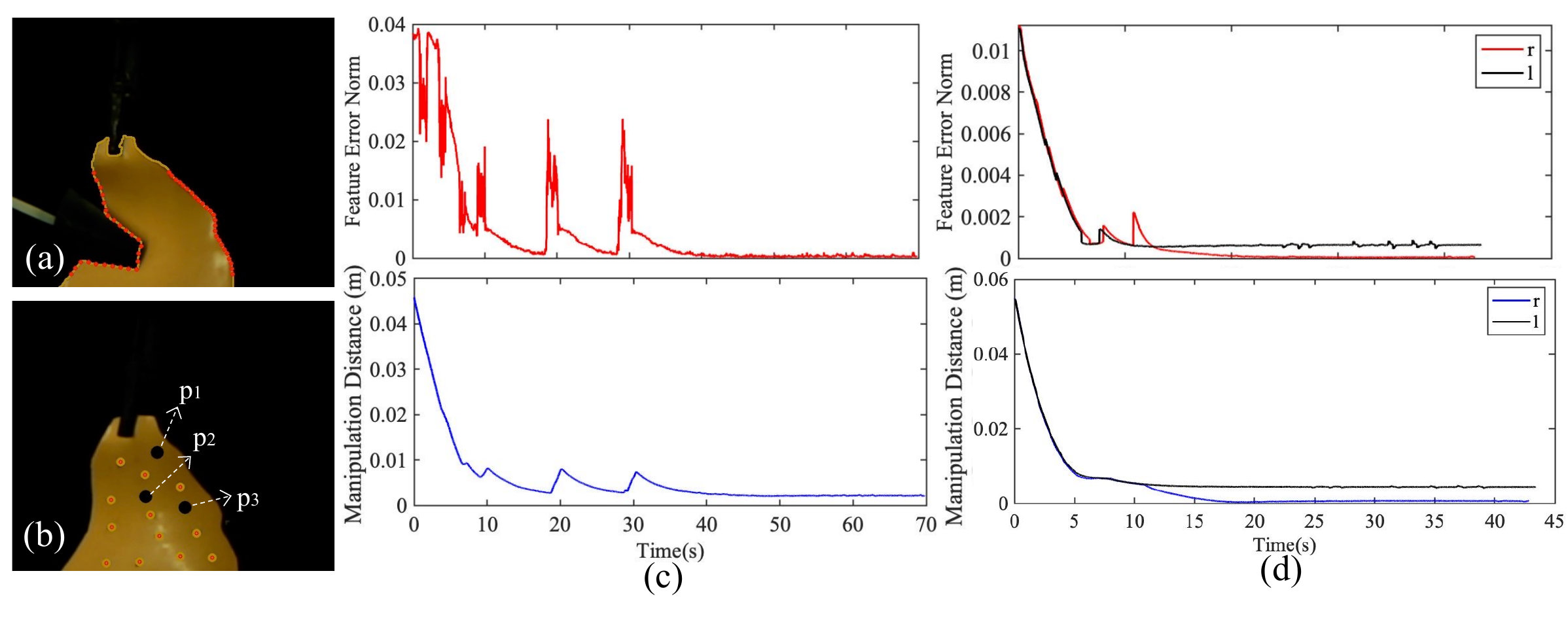}
    \vspace{-0.7cm}
    \caption{Cases with occlusions and sampling loss:
    (a/b): snapshots of the occlusion/sampling loss case;
    (c/d): the feature error norm curve and the manipulation distance (i.e. $\left \| \mathbf{e}_{d}(t) \right \|$) curve of the occlusion/sampling loss case,
    where "l" is for the permanent loss case and "r" is for the temporary loss case.
    Bumps in the feature error curves are caused by surface sampling changes.
    However, the corresponding bumps in the manipulation distance curves are much smaller since the modal truncation discards the high-frequency deformation modes.
    }
    \label{fig:occlussion}
    \vspace{-0.2cm}
\end{figure}

\subsubsection{Cases with Occlusions}
To validate our method with occlusions, we set two kinds of cases:
the occlusion case (Fig.~\ref{fig:occlussion}(a)), where we manually placed black occlusion above the object during deformation and after convergence;
the sampling loss cases (Fig.~\ref{fig:occlussion}(b)), where we simulated the situations of temporary and permanent loss of some sampled points.
We set the same desired deformation from the rest configuration in Fig.~\ref{fig:base_mesh_size}(c) to the desired configuration in Fig.~\ref{fig:base_mesh_size}(d).
We set the object to be the short silicon block (Fig.~\ref{fig:experiment_setup}(d)) and the base mesh to be the black mesh in Fig.~\ref{fig:base_mesh_size}.
The deformation feature dimension was set to be: $m=15$.

For the occlusion case, 
we obtained surface samplings from the object's contour tracked by color segmentation.
When occlusions occurred (Fig.~\ref{fig:occlussion}(a)), we sampled points on the (partially) obscured contour, which changed the values of our deformation features (causing the large bumps of the feature error curves in Fig.~\ref{fig:occlussion}(c)).
Results in Fig.~\ref{fig:occlussion}(c) show the minimization of both $\left \| \mathbf{e}_{s}(t) \right \|$ and $\left \| \mathbf{e}_{d}(t) \right \|$ even with dynamic occlusions.
For the sampling loss cases, 
we obtained surface samplings by tracking points on the object.
We set two sub-cases:
the temporary loss case, where we simulated the process of losing point $\mathbf{p}_1 \to \mathbf{p}_2 \to \mathbf{p}_3$, and then recovering point $\mathbf{p}_3 \to \mathbf{p}_2 \to \mathbf{p}_1$;
the permanent loss case, where we simulated the process of losing point $\mathbf{p}_1 \to \mathbf{p}_2 \to \mathbf{p}_3$, and then only recovering point $\mathbf{p}_3 \to \mathbf{p}_2$.
As point loss/recovery caused changes of the samplings' one-to-one correspondences, we reformulated the deformation feature computation matrices in equation~\eqref{eq:feature_computation} by re-assembling the matrices $\mathbf{N}_s(\boldsymbol{\eta}(\mathbf{p}_s),\mathbf{n}_s)$ and $[\mathbf{\Phi}_n]_s$ according to the changed configurations of $\mathbf{p}_s$ and $\mathbf{n}_s$.
Results in Fig.~\ref{fig:occlussion}(d) show the minimization of both $\left \| \mathbf{e}_{s}(t) \right \|$ and $\left \| \mathbf{e}_{d}(t) \right \|$.
Permanent loss of the sampled point only causes a small steady-state error which means that the object was manipulated to a position close to the target position.

\section{Discussion and Conclusion}
\subsection{Limitations}
There are some situations where the proposed controller cannot deform the object to (or close to) the desired shape.
To start with, our control performance is limited by the accuracy of object perception and reconstruction, which is challenging for texture-less volumetric objects with complex shapes. For example, as shown in Fig.~\ref{fig:failure_cases}(a,b), if the cross-sectional contours detected by the two cameras correspond to different parts of the object (where the inherent self-occlusion problem in stereo matching leads to inaccurate reconstruction), the object cannot be deformed to the desired shape (implied by the steady-state errors of the manipulation distance in Fig.~\ref{fig:failure_cases}(c)).
In addition, we cannot tackle the desired shape with torsional deformation.
This is because, at this stage, we design the deformation basis using linear modal analysis.
Due to its linear strain assumption, we can only control the linear velocity of manipulation points.
For example, given the desired shape (Fig.~\ref{fig:failure_cases}(d)) whose generation includes twisting the object, our controller can minimize the errors of our deformation features (Fig.~\ref{fig:failure_cases}(f)) but the resulting shape is quite different from the desired one (Fig.~\ref{fig:failure_cases}(e)).
To solve this kind of problem, further efforts should be made to investigate rotational deformation under the framework of modal analysis.

\begin{figure}[t]
    \centering
    \includegraphics[width=\linewidth]{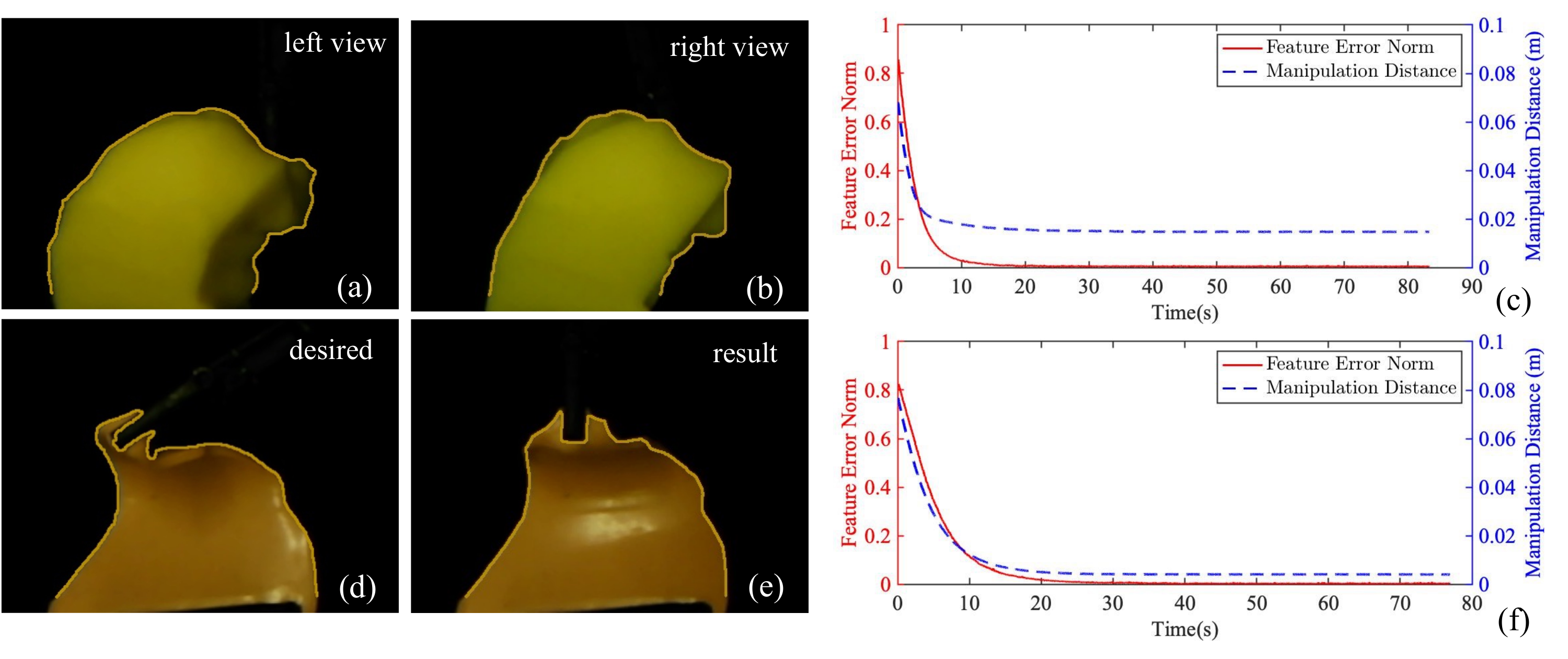}
    \vspace{-0.7cm}
    \caption{Example situations where our controller cannot deform the object to the desired shape.
    The case with inaccurate reconstruction: (a,b) are the camera views during manipulation, and (c) shows the result curves of $\left \| \mathbf{e}_{s}(t) \right \|$ and $\left \| \mathbf{e}_{d}(t) \right \|$.
    The case with torsional deformation: (d,e) are the left camera views of the desired and resulting shapes, and (f) shows the result curves of $\left \| \mathbf{e}_{s}(t) \right \|$ and $\left \| \mathbf{e}_{d}(t) \right \|$.
    }
    \label{fig:failure_cases}
    \vspace{-0.3cm}
\end{figure}

We also discuss some methodological limitations of the proposed controller.
First, 
our method is developed under the quasi-static assumption and we conducted validation cases with slow manipulation.
Object dynamics are not considered in our controller design.
To show the influence of the object dynamics on the closed-loop behavior, we simulated a case 
with accelerated manipulation and lowered Young's modulus of the object,
causing the dynamic deformation to be non-negligible.
The results in Fig.~\ref{fig:limitation_dynamics} show that the deformation feature and shape errors only converge to some bounded areas and 
oscillate with the object's vibrations.
To minimize the errors and compress the vibrations, we need to further study object dynamics in our controller design, which will be a future direction of our research.
Second,
we need to track image features on the object (such as points, curves, and contours) to obtain surface samplings.
Point detection or object segmentation is required.
We also need to find a start point of the segmented contour or curve to track the spatial order of the sampled points.
In the experiments, we selected the lowest left corner of the object's fixed end as the start point.
Another limitation is the local nature of the deformation Jacobian matrix and the linear modal analysis. 
Further efforts can be made to improve the method with global control or planning strategies.

\begin{figure}[t]
    \centering
    \includegraphics[width=1\linewidth]{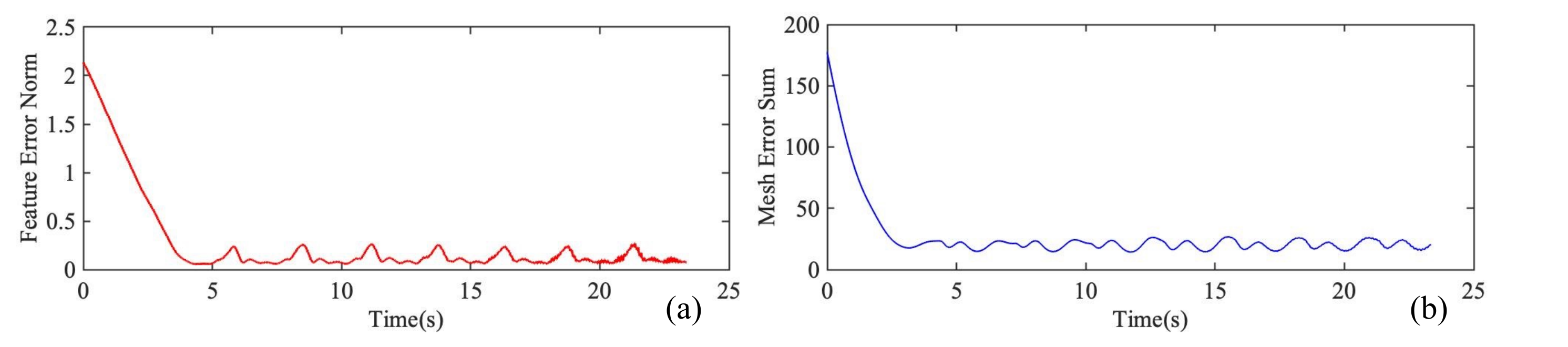}
    \vspace{-0.7cm}
    \caption{Simulation analysis of the vibrating object. (a): the result curve of $\left \| \mathbf{e}_{s}(t) \right \|$. (b) the result curve of $ \mathbf{e}_{x}(t)$.}
    \label{fig:limitation_dynamics}
\end{figure}

\subsection{Conclusion}
This paper proposed an adaptive 3D shape deformation controller using novel global deformation features based on modal analysis.
Instead of performing modal analysis on the object, we proposed a model-free framework using truncated modes of the base meth to span a low-dimensional deformation feature space with unique object representations.
All the modeling uncertainties introduced by the unknown geometry and physical properties of the object were treated as low-dimensional modal parameters in the deformation Jacobian matrix.
Based on the Jacobian matrix, we formulated an adaptive controller where the modal parameters can be linearized from the closed-loop error dynamics.
In this way, we can design the online parameter estimation laws with guaranteed control stability proved using the Lyapunov-based method.
We conducted simulations to validate our controller with different modeling, sampling, and manipulating conditions.
Extensive experiments were conducted with different linear, planar, and volumetric objects under different stereo measurements of points, curves, and contours.
The comparative study with the baseline method showed the advantages of our controller.
We also validated the effectiveness of our controller with large pose and size errors of the base mesh, with large deformation, with unreachable desired deformation, and with occlusions.

To further generalize the modal-based deformation control approach, our future works will develop methods without requiring tracking image features on the object.
Global and dynamic deformation control will also be investigated.

% references section

% can use a bibliography generated by BibTeX as a .bbl file
% BibTeX documentation can be easily obtained at:
% http://mirror.ctan.org/biblio/bibtex/contrib/doc/
% The IEEEtran BibTeX style support page is at:
% http://www.michaelshell.org/tex/ieeetran/bibtex/
\bibliographystyle{./bib/IEEEtran}
% argument is your BibTeX string definitions and bibliography database(s)
\bibliography{./bib/references}

% \vskip -2\baselineskip plus -1fil
\vskip 0pt plus -1fil

\begin{IEEEbiography}[{\includegraphics
[width=1in,height=1.25in,clip,
keepaspectratio]{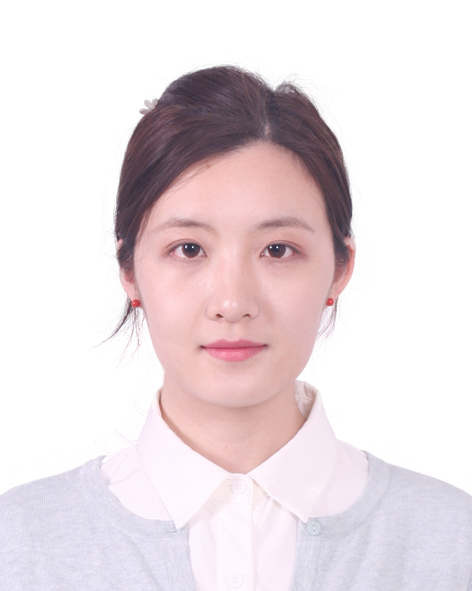}}]
{Bohan Yang}
received the B.Eng. degree in automation from Hunan University, Changsha, China, in 2014, the M.S. degree in control science and engineering from Shanghai Jiao Tong University, Shanghai, China, in 2017, and the Ph.D. degree in mechanical and automation engineering from the Chinese University of Hong Kong, HKSAR, China, in 2023. She is currently a Postdoctoral Fellow with the T Stone Robotics Institute, The Chinese University of Hong Kong. Her research interests include visual serving and medical robotics.
\end{IEEEbiography}

% \vskip -2\baselineskip plus -1fil
\vskip 0pt plus -1fil

\begin{IEEEbiography}[{\includegraphics
[width=1in,height=1.25in,clip,
keepaspectratio]{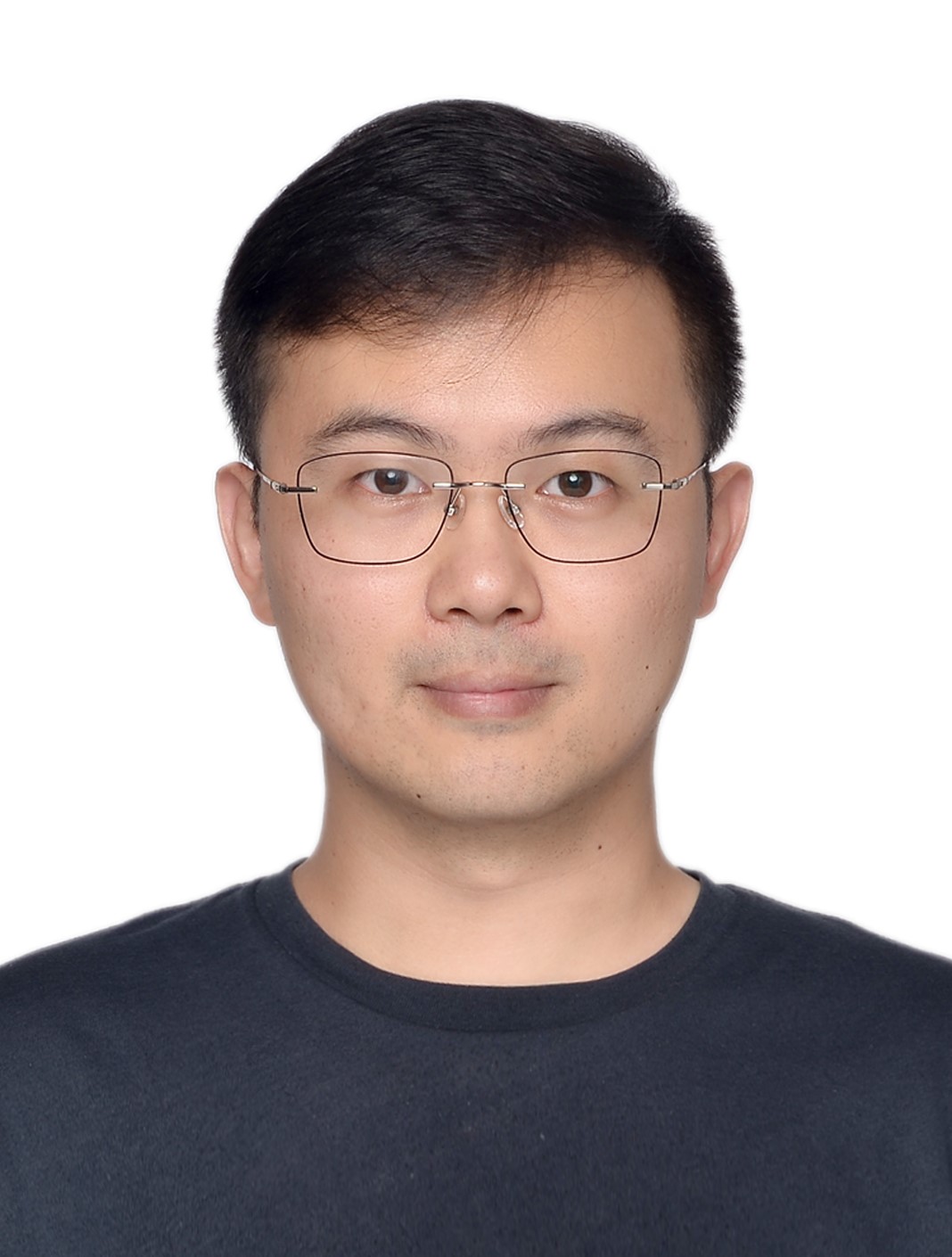}}]
{Bo Lu}
received the B.Eng. degree from the Department of Ship and Offshore Engineering, Dalian University of Technology, Liaoning Province, China, in 2013. He obtained his M.S. Degree (first honor) and Ph.D. Degree from the Department of Mechanical Engineering, The Hong Kong Polytechnic University, HKSAR, China, in 2015 and 2019, respectively. Afterward, he worked as a Postdoctoral Research Fellow at the T-stone Robotics Institute, The Chinese University of Hong Kong, HKSAR, China. He is now an Associate Professor at the Robotics and Microsystems Center, School of Mechanical and Electric Engineering, Soochow University. His current research interests include medical robotics, computer vision, and vision-based and learning-driven automation and intervention.
\end{IEEEbiography}

% \vskip -2\baselineskip plus -1fil
\vskip 0pt plus -1fil

\begin{IEEEbiography}[{\includegraphics
[width=1in,height=1.25in,clip,
keepaspectratio]{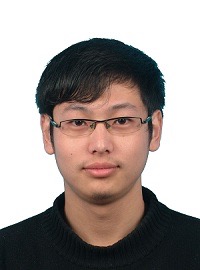}}]
{Wei Chen}
received the B.E. degree in computer science and technology from Zhengzhou University, Zhengzhou, China, in 2012, and the M.S. degree in mechanical and automation engineering from The Chinese University of Hong Kong, Hong Kong in 2021. He is currently working toward the Ph.D. degree with the Department of Mechanical and Automation Engineering, The Chinese University of Hong Kong, HKSAR, China.
\end{IEEEbiography}

% \vskip -2\baselineskip plus -1fil
\vskip 0pt plus -1fil

\begin{IEEEbiography}[{\includegraphics
[width=1in,height=1.25in,clip,
keepaspectratio]{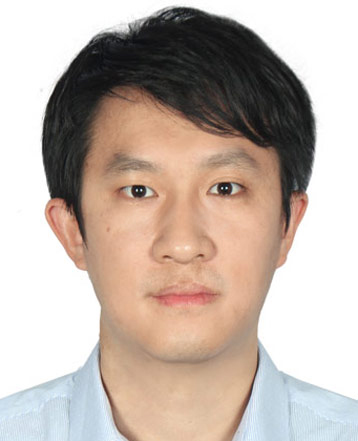}}]
{Fangxun Zhong}
received the B.Eng. degree in automation from Beijing Institute of Technology, Beijing, China, in 2014 and the Ph.D. degree in mechanical and automation engineering from The Chinese University of Hong Kong, HKSAR, China, in 2021. He is currently a postdoctoral fellow with the T Stone Robotics Institute, The Chinese University of Hong Kong. His research interests include surgery autonomy, medical robotics, dexterous robot planning and control.
\end{IEEEbiography}

\vskip 0pt plus -1fil

\begin{IEEEbiography}[{\includegraphics
[width=1in,height=1.25in,clip,
keepaspectratio]{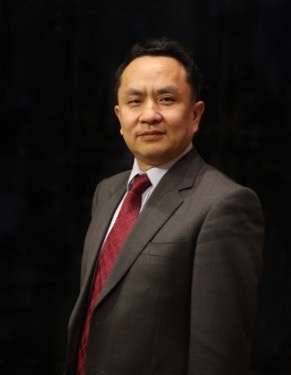}}]
{Yun-Hui Liu} received his Ph.D. degree in Applied Mathematics and Information Physics from the University of Tokyo. After working at the Electrotechnical Laboratory of Japan as a Research Scientist, he joined The Chinese University of Hong Kong (CUHK) in 1995 and is currently Choh-Ming Li Professor of Mechanical and Automation Engineering and the Director of the T Stone Robotics Institute. He also serves as the Director/CEO of Hong Kong Centre for Logistics Robotics sponsored by the InnoHK programme of the HKSAR government. He is an adjunct professor at the State Key Lab of Robotics Technology and System, Harbin Institute of Technology, China. He has published more than 500 papers in refereed journals and refereed conference proceedings and was listed in the Highly Cited Authors (Engineering) by Thomson Reuters in 2013. His research interests include visual servoing, logistics robotics, medical robotics, multi-fingered grasping, mobile robots, and machine intelligence. Dr. Liu has received numerous research awards from international journals and international conferences in robotics and automation and government agencies. He was the Editor-in-Chief of Robotics and Biomimetics and served as an Associate Editor of the IEEE TRANSACTION ON ROBOTICS AND AUTOMATION and General Chair of the 2006 IEEE/RSJ International Conference on Intelligent Robots and Systems. He is an IEEE Fellow.
\end{IEEEbiography}

\end{document}